\pdfoutput=1 
\documentclass[11pt,twoside,onecolumn]{procdoc2}

\usepackage{IEEEtrantools}

\usepackage{algorithm}
\usepackage{algpseudocode}

\usepackage{subfig}
\usepackage{tikz}
\usetikzlibrary{arrows.meta,positioning,shapes}

\usepackage{booktabs}
\usepackage{xcolor}
\definecolor{myorange}{RGB}{255,165,0}

\usepackage{fontawesome5}
\usepackage[absolute,overlay]{textpos}


\DeclareMathOperator*{\argmax}{argmax}

\DeclareMathOperator{\cov}{cov}
\DeclareMathOperator{\diag}{diag}



\newcommand{\prob}{\mathbb{P}}

\newcommand{\bigOtilde}[1]{\tilde{\mathcal{O}}\left(#1\right)}

\newcommand{\Dset}{\mathcal{D}}
\newcommand{\Ducset}{\mathcal{D}_{\text{uc}}}  
\newcommand{\Dcset}{\mathcal{D}_{\text{c}}}    

\newcommand{\Enoise}{E_{\text{noise}}}
\newcommand{\Econf}{E_{\text{conf}}}

\newcommand{\muR}{\mu^{\text{R}}}         
\newcommand{\muuc}{\mu_{\text{uc}}}    





\newcommand{\betaA}[1]{\beta_{A,#1}}

\newcommand{\MA}{M_\text{A}}     
\newcommand{\MR}{M_\text{R}}     

\newcommand{\muRA}[1]{\mu^{\text{R}}_{A,#1}}
\newcommand{\sigmaRA}[1]{\sigma^{\text{R}}_{A,#1}}
\newcommand{\muRR}[1]{\mu^{\text{R}}_{R,#1}}
\newcommand{\sigmaRR}[1]{\sigma^{\text{R}}_{R,#1}}


\newcommand{\Duct}[1]{\mathcal{D}_{\text{uc},#1}}
\newcommand{\Dct}[1]{\mathcal{D}_{\text{c},#1}}
\newcommand{\Xuct}[1]{\boldsymbol{X}_{\text{uc},#1}}
\newcommand{\yuct}[1]{\boldsymbol{y}_{\text{uc},#1}}
\newcommand{\Xct}[1]{\boldsymbol{X}_{\text{c},#1}}
\newcommand{\yct}[1]{\boldsymbol{y}_{\text{c},#1}}

\newcommand{\muuct}[1]{\mu_{\text{uc},#1}}
\newcommand{\sigmauct}[1]{\sigma_{\text{uc},#1}}
\newcommand{\muRt}[1]{\mu^{\text{R}}_{#1}}
\newcommand{\sigmaRt}[1]{\sigma^{\text{R}}_{#1}}
\newcommand{\muRuct}[1]{\mu^{\text{R}}_{\text{uc},#1}}      
\newcommand{\sigmaRuct}[1]{\sigma^{\text{R}}_{\text{uc},#1}}    

\newcommand{\devt}[1]{d_{#1}}
\newcommand{\covuct}[1]{\cov_{\Duct{#1}}}





\newcommand{\infogain}{\gamma_T}




\newcommand{\norm}[1]{\left\|#1\right\|}

\newcommand{\abs}[1]{\left|#1\right|}





\newcommand{\noiselevel}{N_T}

\newcommand{\Jw}{\boldsymbol{J}_w}

\newcommand{\Jwuc}{\boldsymbol{J}_{w,\text{uc}}}              
\newcommand{\Jwct}[1]{\boldsymbol{J}_{w,\text{c},#1}}         


\newcommand{\Conet}[1]{C_{1,#1}}
\newcommand{\Cdevt}[1]{C_{w,#1}}

\usepackage{mdframed}

\definecolor{theoremcolor}{RGB}{235,245,255}      
\definecolor{lemmacolor}{RGB}{240,255,240}        
\definecolor{remarkcolor}{RGB}{255,253,235}       
\definecolor{definitioncolor}{RGB}{245,240,255}   
\definecolor{assumptioncolor}{RGB}{255,245,235}   

\mdfdefinestyle{theoremstyle}{%
    backgroundcolor=theoremcolor,
    linecolor=theoremcolor!50!black,
    linewidth=0pt,
    roundcorner=2pt,
    innertopmargin=3pt,
    innerbottommargin=6pt,
    innerleftmargin=10pt,
    innerrightmargin=10pt,
    skipabove=8pt,
    nobreak=true,
    skipbelow=8pt
}

\mdfdefinestyle{lemmastyle}{%
    backgroundcolor=lemmacolor,
    linecolor=lemmacolor!50!black,
    linewidth=0pt,
    roundcorner=2pt,
    innertopmargin=3pt,
    innerbottommargin=6pt,
    innerleftmargin=10pt,
    innerrightmargin=10pt,
    skipabove=8pt,
    nobreak=true,
    skipbelow=8pt
}

\mdfdefinestyle{remarkstyle}{%
    backgroundcolor=remarkcolor,
    linecolor=remarkcolor!50!black,
    linewidth=0pt,
    roundcorner=2pt,
    innertopmargin=3pt,
    innerbottommargin=6pt,
    innerleftmargin=10pt,
    innerrightmargin=10pt,
    skipabove=8pt,
    nobreak=true,
    skipbelow=8pt
}

\mdfdefinestyle{definitionstyle}{%
    backgroundcolor=definitioncolor,
    linecolor=definitioncolor!50!black,
    linewidth=0pt,
    roundcorner=2pt,
    innertopmargin=3pt,
    innerbottommargin=6pt,
    innerleftmargin=10pt,
    innerrightmargin=10pt,
    skipabove=8pt,
    nobreak=true,
    skipbelow=8pt
}

\mdfdefinestyle{assumptionstyle}{%
    backgroundcolor=assumptioncolor,
    linecolor=assumptioncolor!50!black,
    linewidth=0pt,
    roundcorner=2pt,
    innertopmargin=3pt,
    innerbottommargin=6pt,
    innerleftmargin=10pt,
    innerrightmargin=10pt,
    skipabove=8pt,
    nobreak=true,
    skipbelow=8pt
}

\newtheorem{theoreminner}{Theorem}
\newenvironment{theorem}
  {\begin{mdframed}[style=theoremstyle]\begin{theoreminner}}
  {\end{theoreminner}\end{mdframed}}

\newtheorem{lemmainner}{Lemma}
\newenvironment{lemma}
  {\begin{mdframed}[style=lemmastyle]\begin{lemmainner}}
  {\end{lemmainner}\end{mdframed}}

\newtheorem{remarkinner}{Remark}
\newenvironment{remark}
  {\begin{mdframed}[style=remarkstyle]\begin{remarkinner}}
  {\end{remarkinner}\end{mdframed}}

\newtheorem{definitioninner}{Definition}
\newenvironment{definition}
  {\begin{mdframed}[style=definitionstyle]\begin{definitioninner}}
  {\end{definitioninner}\end{mdframed}}

\newtheorem{assumptioninner}{Assumption}
\newenvironment{assumption}
  {\begin{mdframed}[style=assumptionstyle]\begin{assumptioninner}}
  {\end{assumptioninner}\end{mdframed}}

\title{Robust Bayesian Optimisation with Unbounded Corruptions}
\author[1]{Abdelhamid Ezzerg}
\author[2]{Ilija Bogunovic}
\author[1]{Jeremias Knoblauch}
\affil[1]{University College London}
\affil[2]{University of Basel}

\makeatletter
\g@addto@macro\@author{\\[0.5em]\small\texttt{abdelhamid.ezzerg.24@ucl.ac.uk},  \texttt{i.bogunovic@ucl.ac.uk},  \texttt{j.knoblauch@ucl.ac.uk}}
\makeatother
\date{}

\makeatletter

\makeatother

\begin{document}
\thispagestyle{empty}

\maketitle
\begin{abstract}
    Bayesian Optimization  is critically vulnerable to extreme outliers. Existing provably robust methods typically assume a bounded cumulative corruption budget, which makes them defenseless against even a single corruption of sufficient magnitude.
    To address this, we introduce a new adversary whose budget is only bounded in the \textit{frequency} of corruptions, not in their magnitude. We then derive RCGP-UCB, an algorithm coupling the famous upper confidence bound (UCB) approach with a Robust Conjugate Gaussian Process (RCGP). 
    We present stable and adaptive versions of RCGP-UCB, and prove that they achieve sublinear regret in the presence of up to $O(T^{1/4})$ and $O(T^{1/7})$ corruptions with possibly infinite magnitude.
    This robustness comes at near zero cost: without outliers, RCGP-UCB's regret bounds match those of the standard GP-UCB algorithm. 
\end{abstract}

\section{Introduction}
\label{sec:introduction}

\begin{figure}
    \centering
    \includegraphics[width=0.48\textwidth]{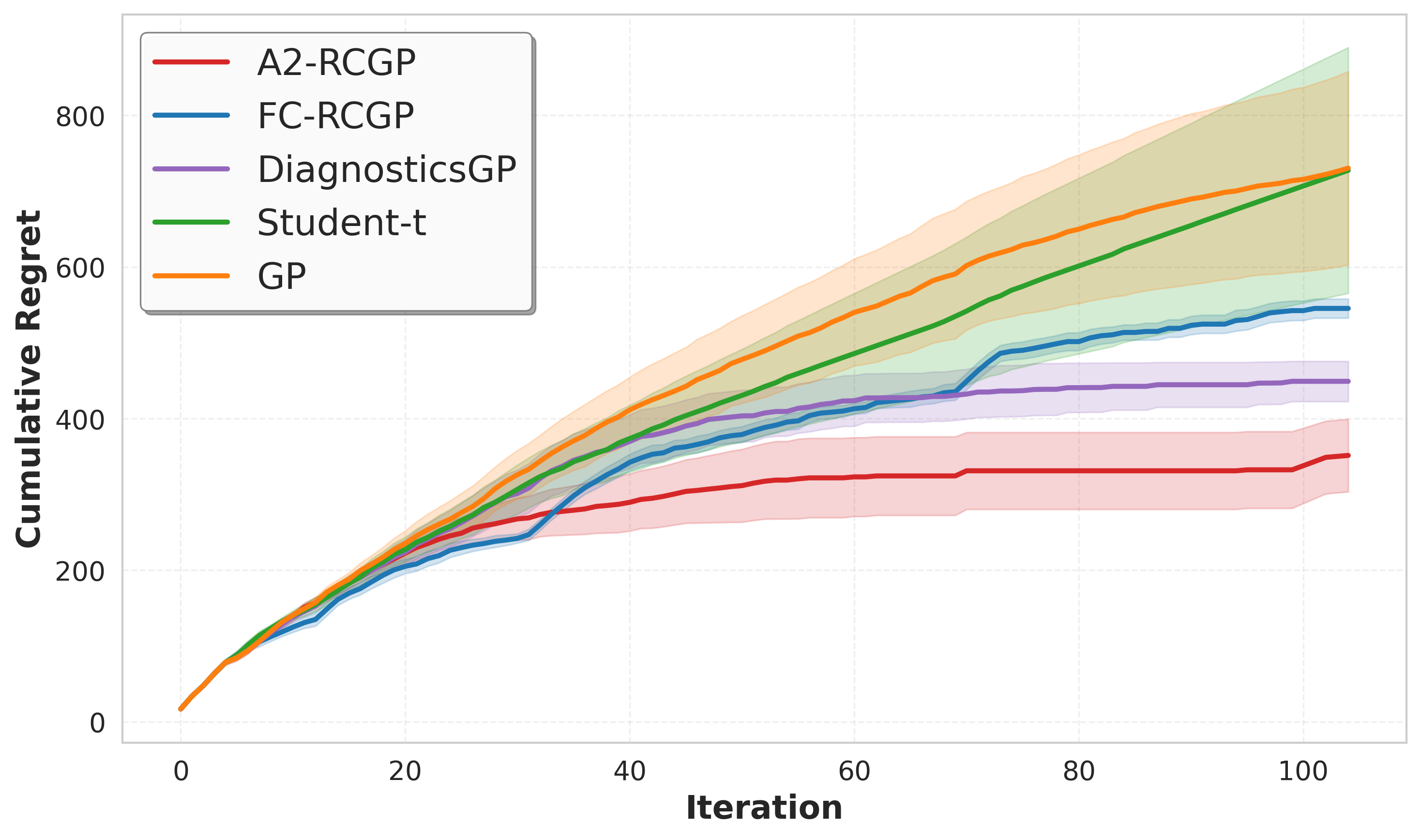}
    \caption{\small 
    Cumulative regret for different BO algorithms on a corrupted Forrester example. 
    Approaches based on standard GPs or Student-t processes perform worse than RCGP-based methods.
    DiagnosticsGP performs well on this example, but does not satisfy any  regret bounds.
    }
    \label{fig:forrester_corrupted}
\end{figure}

Bayesian Optimization (BO) is a well-established and powerful framework for the sample-efficient optimization of expensive, black-box functions~\citep{ShahriariTakingHumanOut2016, frazierTutorialBayesianOptimization2018, garnettBayesianOptimizationBook, brochuTutorialBayesianOptimization2010}. It has achieved significant success across a plethora of domains, including material science~\citep{khatamsazPhysicsInformedBayesian2023}, drug discovery~\citep{colliandreBayesianOptimizationDrug2024, gessner2024active, guanClassImbalanceLearning2022}, robotics~\citep{calandraBayesianGaitOptimization2014}, machine learning hyperparameter tuning~\citep{lindauer2022smac3versatilebayesianoptimization, nguyenBayesianOptimization2019}, and automated Machine Learning (AutoML)~\citep{shen2024automatedmachinelearningprinciples}.
Most surrogate functions in BO rely on Gaussian Process (GP) models~\citep{RasmussenGaussianProcessesMachine2005} to approximate the objective function and guide the search for the optimum, famously exemplified by the GP-UCB algorithm~\citep{srinivasGaussianProcessOptimization2012}.

The theoretical guarantees and practical success of standard BO, however, often rely on idealized assumptions about the nature of the objective function or its observations, such as assuming Gaussian noise \citep{srinivasGaussianProcessOptimization2012}. 
In real-world experimental environments, however, these assumptions are frequently violated. 
Experimental setups can fail, petri dishes may be exposed to contamination, sensors malfunction, and high-throughput processes can yield unreliable data~\citep{wakabayashiBayesianOptimizationExperimental2022, iwazakiNoRegretBayesianOptimization2025}. 
These scenarios generate extreme outliers that can skew the posteriors arbitrarily, and derail BO.

To address this, a significant body of work has focused on robust BO, and largely divides into three sub-streams of work.
The first of these addresses robustness against GP model misspecification: the true underlying function may not perfectly align with the assumptions of the GP model, and may lie outside the assumed RKHS~\citep{bogunovicMisspecifiedGaussianProcess2021a,wangRegretBoundsMeta2018a}. 
A second direction are BO algorithms whose optima remain stable even under input perturbations~\citep{christiansonRobustExpectedImprovement2024, ullahNewAcquisitionFunction2021, bogunovicAdversariallyRobustOptimization2018a}.
The last direction is geared towards dealing with observational noise, and is most closely related to what we shall do in the remainder.
Here, the most common approach involves heavy-tailed models like Student-t processes \citep{shahStudenttProcessesAlternatives2014, jylanki2011robust}. 
Other ideas rely on bias trimming techniques from robust regression \citep{CHI2024111605} or min-max type formulations guarding against the worst case \citep{tay2022efficient}. 
While often effective for moderate deviations, these methods tend to struggle with severe contamination, often sacrifice the analytical tractability of a conjugate GP update, and generally do not provide regret bounds.

Motivated by providing meaningful regret bounds for this third setting, several contributions have developed algorithms with guarantees under the additive corruption model  \citep{bogunovicAdversariallyRobustOptimization2018a, guptaBetterAlgorithmsStochastic2019}. Originating in the stochastic bandit setting \citep{lykourisStochasticBanditsRobust2018a}, this line of work assumes that while the adversary may inject a corruption of size $c_t$ at each time step, the cumulative corruption is bounded by some known constant $C$ so that $\sum_{t=1}^T |c_t| \le C$.
The resulting algorithms provide regret bounds which scale with the size of the total corruption budget $C$ \citep{kirschnerBiasRobustBayesianOptimization2021, bogunovicCorruptionTolerantGaussianProcess2020a, bogunovicRobustPhasedElimination2022d}.
Though useful if this corruption model is a good description of reality, assuming $C$ to be known and finite is  brittle when dealing with  outliers in many real-world settings.
Indeed, a \textit{single} catastrophic corruption is enough to exhaust the corruption budget the existing algorithms assume.



In this paper, we derive algorithms that can deal with outliers of arbitrary magnitude.
To do so, we build on the literature on generalised Bayesian methods \citep{bissiriGeneralFrameworkUpdating2016, knoblauchGeneralizedVariationalInference2019a}, which replace Bayes updates with suitable generalisations \citep[see also][]{dellaporta2022robust,wild2023rigorous,mclatchie_predictive_2025, shenprediction2025}.
While this framework has been recently applied to BO to handle model misspecification via tempered posteriors \citep{li2026robustbayesianoptimizationtempered},
we instead leverage its capacity to handle outliers with possibly infinite magnitude 
\citep[e.g.][]{ghosh2016robust, knoblauch2018doubly,schmongeneralized2020,duran2024outlier,frazier2025impact}.
For GPs, this has led to the development of Robust Conjugate Gaussian Processes (RCGPs)~\citep{altamiranoRobustConjugateGaussian2024, laplanteRobustConjugateSpatioTemporal2025}, which \textit{not only} retain robustness to outliers \textit{but also} the conjugacy and associated closed forms of  GPs. 
Herein, we propose using RCGP with a  UCB-based acquisition function. 
This yields two RCGP-UCB algorithms which are resistant against up to $\mathcal{O}(T^{1/4})$ corruptions of infinite magnitude while retaining sublinear regret bounds.
To the best of our knowledge, this makes them the first BO algorithms with meaningful regret bounds in the presence of possibly infinite magnitude corruptions.
%
Further, this comes at zero cost to performance in the corruption-free setting: here, RCGP-UCB achieves the same theoretical bounds and empirical performance as standard GP-UCB. 
This is in stark contrast to previous heuristic approaches that, despite often performing well empirically, lacked clear formal guarantees. One such method is the algorithm proposed by \citet{martinez-cantinPracticalBayesianOptimization2018}, which we will refer to as DiagnosticsGP.
%

\paragraph{Contributions.}
We contribute on several levels: first, we introduce a new adversary whose corruptions are limited in the number of data points, but of possibly infinite magnitude.
Second, we modify the basic RCGP implementation to ensure that the behaviour of RCGP-UCB matches that of GP-UCB in the absence of outliers. 
Third, we leverage our developments to introduce the FC-RCGP-UCB and A2-RCGP-UCB algorithms. We prove that they achieve sublinear regret in the presence of up to $O(T^{1/4})$ and $O(T^{1/7})$ unbounded corruptions, respectively. Notably, our experiments demonstrate effective robustness against a larger $O(T^{1/3})$ budget, suggesting that these theoretical rates are conservative and that the algorithms can handle higher corruption frequencies in practice.
To our knowledge, they are the first BO  algorithms with provable sublinear regret in the presence of infinite-magnitude corruptions without incurring any cost in the well-specified setting.

\section{Background}
\label{sec:preliminaries}

We operate under the standard assumptions established in \citet{srinivasGaussianProcessOptimization2012}.
We defer their full statement to Appendix~\ref{sec:assumptions}, but note that they accommodate any of the following three settings: (1) the objective function $f$ is sampled from a Gaussian Process (GP) prior over a finite domain; (2) $f$ is sampled from a GP prior over a compact and convex domain; or (3) $f$ resides in a Reproducing Kernel Hilbert Space (RKHS) with a bounded norm ($\|f\|_k \le B$).
Irrespective of the exact setting, we will require the kernel function to be bounded, such that $\sup_{\boldsymbol{x} \in \mathcal{X}}k(\boldsymbol{x},\boldsymbol{x}) \le \kappa$.

\subsection{Bayesian Optimization with GPs}

BO sequentially optimises an unknown black-box function $f: \mathcal{X} \to \mathbb{R}$, where the domain $\mathcal{X} \subset \mathbb{R}^d$ is compact. This proceeds in rounds $t=1, \dots, T$, with an agent querying a point $\boldsymbol{x}_t \in \mathcal{X}$ for each round, and then observing a corresponding value $y_t$. The objective is to minimize the cumulative regret over $T$ rounds: $R_T = \sum_{t=1}^{T} (f(\boldsymbol{x}^*) - f(\boldsymbol{x}_t))$, where $\boldsymbol{x}^* = \arg\max_{\boldsymbol{x}\in\mathcal{X}} f(\boldsymbol{x})$ is the global optimum.

\paragraph{GPs.}
We model the objective $f$ using a Gaussian Process (GP) prior, denoted $f \sim \mathcal{GP}(m(\boldsymbol{x}), k(\boldsymbol{x}, \boldsymbol{x}'))$ with zero mean, $m(\boldsymbol{x})=0$. 
Observations are assumed to be generated as $y_t = f(\boldsymbol{x}_t) + \epsilon_t$, where $\epsilon_t \sim \mathcal{N}(0, \sigma_{\text{noise}}^2)$ denotes independent Gaussian noise. 
Here, $\boldsymbol{x}_t = \{\boldsymbol{x}_1,\ldots,\boldsymbol{x}_t\}$ denotes the set of query points and $Y_t = \{y_1,\ldots,y_t\}$ the corresponding observations.
Conditional on the data set $\mathcal{D}_t = (\boldsymbol{x}_t, Y_t)$, the GP posterior remains Gaussian, and is fully characterized by the posterior mean $\mu_t(\boldsymbol{x})$ and variance $\sigma_t^2(\boldsymbol{x})$ given by
\begin{IEEEeqnarray}{rCl}
    \mu_t(\boldsymbol{x}) &=& \boldsymbol{k}_t(\boldsymbol{x})^T (\boldsymbol{K}_t + \sigma_{\text{noise}}^2 \boldsymbol{I})^{-1} \boldsymbol{y}_t, 
    \nonumber \\
    \sigma_t^2(\boldsymbol{x}) &=& k(\boldsymbol{x},\boldsymbol{x}) - \boldsymbol{k}_t(\boldsymbol{x})^T (\boldsymbol{K}_t + \sigma_{\text{noise}}^2 \boldsymbol{I})^{-1} \boldsymbol{k}_t(\boldsymbol{x}). 
    \nonumber
\end{IEEEeqnarray}
Here, $\boldsymbol{y}_t$ is the vector of observations, $\boldsymbol{K}_t$ denotes the kernel matrix evaluated on $\boldsymbol{x}_t$, and $\boldsymbol{k}_t(\boldsymbol{x})$ the vector of covariances between $x$ and $\boldsymbol{x}_t$.

\paragraph{GP-UCB.}
The popular GP-UCB algorithm~\citep{srinivasGaussianProcessOptimization2012} balances exploration and exploitation via optimism: it chooses the next $\boldsymbol{x}_t$ by maximising the Upper Confidence Bound (UCB):
\begin{IEEEeqnarray}{rCl}
    \boldsymbol{x}_t & = & \arg\max_{\boldsymbol{x}\in\mathcal{X}} \mu_{t-1}(\boldsymbol{x}) + \sqrt{\beta'_t} \sigma_{t-1}(\boldsymbol{x}).
    \nonumber
\end{IEEEeqnarray}
The parameter sequence $\beta'_t$ typically scales logarithmically with $t$, and is carefully chosen to endow the algorithm with a high-probability guarantee which ensures that the regret grows sublinearly as $\mathcal{O}(\sqrt{T \beta'_T \gamma_T})$, where $\gamma_T$ is the maximum information gain, which quantifies the maximum reduction in uncertainty about the true function $f$ after $T$ evaluations.

\subsection{Frequency-Constrained  Corruption }

Existing adversarial BO algorithms guard against additive corruption so that
\begin{equation}
    y_t = f(\boldsymbol{x}_t) + \epsilon_t + c_t
\end{equation}
for a sequence of corruptions $c_t$ that are bounded in their total budget by some $C>0$ so that $\sum_{t=1}^T |c_t| \le C$. 
This constraint renders the BO algorithm's guarantees brittle: a single outlier of sufficient  magnitude will derail it. This fragility is structural: a single corruption of significant magnitude can shift the posterior mean arbitrarily far from the true function. We provide a formal proof of this failure mode in Appendix~\ref{app:gp_brittleness}.
To address this limitation, we introduce an adversarial framework inspired by the Huber contamination model \citep[see][]{huber2011robust}.

\begin{definition}[Frequency-Constrained Corruption]
\label{def:frequency-constrained-corruption}
The adversary selects corruptions  $c_t\in\mathbb{R}_\cup\{\infty\}$ after observing  $\boldsymbol{x}_t$. The magnitudes $|c_t|$ are unbounded, but their \textit{frequency} is bounded: the adversary can at most corrupt $ T_{\text{c}} = |\{t \in \{1, \dots, T\} : c_t \neq 0\}|$ data points.
\end{definition}

By permitting $|c_t| = \infty$, the adversary can force the algorithm to observe any arbitrary value $y_t \in \mathbb{R}$. 
This strengthens traditional corruption models, and effectively captures the kind of catastrophic but infrequent events that result in extreme outliers.

\subsection{Robust Conjugate Gaussian Processes}

To design a BO algorithm capable of dealing with corruptions of possibly infinite magnitude, we build on the Robust Conjugate Gaussian Process (RCGP) \citep{altamiranoRobustConjugateGaussian2024, laplanteRobustConjugateSpatioTemporal2025}. 
RCGPs retain closed forms of their posterior updates, and are  provably robust against outliers in a supervised learning setting. 
The key to this is a weight function $w$, which dampens the influence that anomalous observations have on posterior inferences. 
The RCGP posterior mean $\mu_t^\text{R}(\boldsymbol{x})$ and variance $\sigma_t^\text{R}(\boldsymbol{x})^2$ are given by
\begin{IEEEeqnarray}{rCl}
    \mu_t^\text{R}(\boldsymbol{x}) &=& \boldsymbol{k}_t(\boldsymbol{x})^T (\boldsymbol{K}_t + \sigma_{\text{noise}}^2 \boldsymbol{J}_w)^{-1} (\boldsymbol{y}_t - \boldsymbol{m}_w), \quad \;\;
    \label{eq:rcgp_mean}
    \\
    \sigma_t^\text{R}(\boldsymbol{x})^2 &=& k(\boldsymbol{x},\boldsymbol{x}) - \boldsymbol{k}_t(\boldsymbol{x})^T (\boldsymbol{K}_t + \sigma_{\text{noise}}^2 \boldsymbol{J}_w)^{-1} \boldsymbol{k}_t(\boldsymbol{x}), \quad \;\; 
    \label{eq:rcgp_var}
\end{IEEEeqnarray}
and differ from vanilla GPs through two terms dependent on the weights $w_i=w(\boldsymbol{x}_i,y_i)$:
\begin{equation}
    \boldsymbol{J}_w = \text{diag}\left(\frac{\sigma_{\text{noise}}^2}{2 w_i^2}\right), \quad \boldsymbol{m}_w = \left[\sigma_{\text{noise}}^2 \nabla_y \log(w_i^2)\right]_{i=1}^t.
    \nonumber
\end{equation}
As shown by \citet{altamiranoRobustConjugateGaussian2024}, the RCGP provides Huber robustness   whenever the weight function decreases sufficiently fast so that $\sup_{y}|y| \cdot w(x,y)^2 < \infty$. 
This holds for the Inverse Multi-Quadric (IMQ) kernel \citep{javaran2012IMQ}, which is popular in the context of robustness \citep[e.g.][]{matsubara2022robust, matsubara2023generalisedDFD}. For a centering function $g:\mathcal{X}\to\mathcal{Y}$ and a scalar $c>0$, it is given by:
\begin{equation}
    w_{\text{IMQ}}(x,y) \propto \left(1 + \frac{(y - g(\boldsymbol{x}))^2}{c^2}\right)^{-\frac{1}{2}}. \label{eq:imq}
\end{equation}

\section{Weights and Zero-Cost Robustness}

While we wish to design a robust BO algorithm, we also want a "best-of-both-worlds" performance guarantee: we want an algorithm that is not only resilient to outliers of possibly infinite magnitude, but also retains the same level of statistical efficiency as standard GP-UCB when there are no corruptions---a property we refer to as \emph{zero-cost robustness} in the remainder.
The key to achieving this lies in carefully controlling the IMQ weighting function in Eq. \eqref{eq:imq} to enforce a form of optimism about the data: otherwise, whenever the centering function $g$ deviates substantively from the true objective $f$, the IMQ function will tend to be pessimistic about the reliability of non-outliers with large residual values $|y_t-g(\boldsymbol{x}_t)|$, down-weight them, and lose statistical efficiency relative to a GP.

To enforce optimism and achieve zero-cost robustness, we adapt the key idea of Huber losses \citep{huber1992robust}: by introducing a threshold $L$ that quantifies which residuals are deemed reasonable, we can treat any observation \textit{exactly} as a standard GP would so long as its residual stays below the threshold value.
This results in a plateau IMQ (P-IMQ) weight, see Figure \ref{fig:p-imq}.


\begin{definition}[P-IMQ Weight]
\label{def:p-imq}
For $W=\frac{\sigma_{\text{noise}}}{\sqrt{2}}$, centering function $g$, and $L>0$, the P-IMQ is
{\small
{\small
\begin{IEEEeqnarray}{rCl}
w_{L,g}(x,y) \hspace*{-0.05cm}
&=&
\hspace*{-0.1cm}
\begin{cases}
\hspace*{-0.05cm}
 W & \hspace*{-0.2cm} |y-g(\boldsymbol{x})| \le L\\
 \hspace*{-0.05cm} W \hspace*{-0.05cm}\left(1 + \frac{(|y-g(\boldsymbol{x})|-L)^2}{c^2}\right)^{-\frac{1}{2}} & \hspace*{-0.2cm} \text{otherwise.}
\end{cases}
\nonumber
\end{IEEEeqnarray}
}
}
\end{definition}

The P-IMQ weight function owes its name to its shape: when plotted, it gives rise to a symmetric plateau centered at $g(\boldsymbol{x})$ and of length $2L$. Figure~\ref{fig:p-imq} shows a comparison of the IMQ and P-IMQ weight functions.
Once residuals fall outside of the plateau, we begin to suspect them of being outliers, and let the P-IMQ function monotonically decrease outside of the plateau region.
Observations whose residuals fall within this plateau however are treated as fully trustworthy, and processed in the \textit{exact} same way as a standard GP would.
To see this, compare the GP and RCGP update equations and note that for all observations with residuals smaller than $L$, it holds that $\boldsymbol{J}_w = \text{diag}(\frac{\sigma_{\text{noise}}^2}{2 W^2}) = \text{diag}(\frac{\sigma_{\text{noise}}^2}{2 (\sigma_{\text{noise}}^2/2)}) = \boldsymbol{I}$ and $m_w = \sigma_{\text{noise}}^2 \nabla_y \log(w^2) = 0$.
In fact, the condition that all residuals corresponding to uncorrupted observations lie within the plateau region with high probability is instrumental for establishing the zero-cost probability of RCGP-UBC, and we will often refer to it as the \textbf{plateau condition} in the remainder. 

\begin{figure}[!t]
    \centering
    \includegraphics[width=0.48\textwidth]{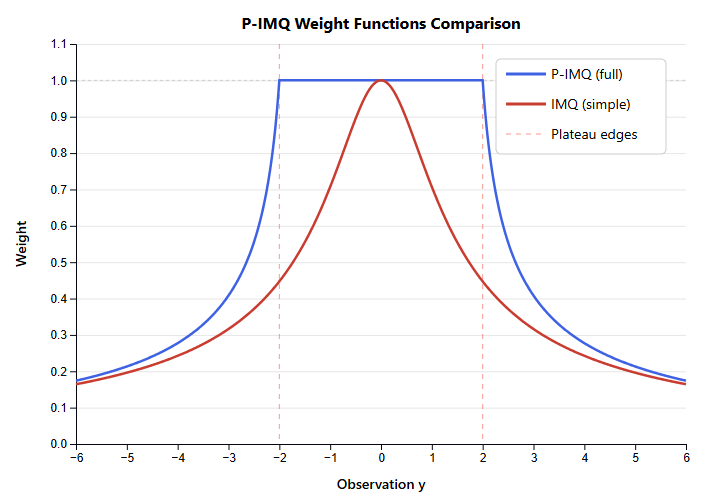}
    \caption{\small Comparison of the IMQ and P-IMQ weight functions. We use $W=1$, $c=1$, $g(\boldsymbol{x})=0$, and $L=2$.}
    \label{fig:p-imq}
\end{figure}

\section{Algorithms and Analysis}

To ensure the plateau condition holds, we have two choices: we can choose $L$ sufficiently large, or we can design a centering function $g$ that is a good approximation of $f$.
While it is generally easier to fix the centering and just choose $L$ sufficiently large based on theoretical arguments, this could make the algorithm's guarantees vacuous as our regret bounds scale linearly in $L$ (see Appendix~\ref{app:pimq_influence}). 
Alternatively, we could thus try and instead choose $g$ in an adaptive manner.
In fact, in the supervised learning setting, \citet{laplanteRobustConjugateSpatioTemporal2025} argue that it is preferable to take $g(\boldsymbol{x}) = \mu^\text{R}_{t-1}(\boldsymbol{x})$ as the posterior mean---rather than a fixed function. 
Unfortunately, adaptively setting $g$ introduces a potential vulnerability: the adversary can strategically inject corruptions to manipulate $g$, and cause a loss of efficiency by forcing genuinely uncorrupted observations to fall outside the plateau and be erroneously down-weighted. 
This would be undesirable, and could destabilise the learning process.
%

%
In the remainder of this section, we first provide the acquisition function underlying RCGP-UCB, and then propose two algorithms that take different approaches to managing the inherent tension between algorithmic stability and adaptivity: first, we introduce the Fixed-Center RCGP-UCB algorithm, which uses theoretical bounds to set $L$ sufficiently large.
This yields the usual sublinear regret guarantees so long as the number of outliers are at most $\mathcal{O}(T^{1/4})$.
Next, we derive guarantees for the Anchor-Adapt RCGP-UCB algorithm, which dynamically adjusts the centering function $g$ but remains stable as long as the number of outliers are at most $\mathcal{O}(T^{1/7})$.

\subsection{RCGP-UCB Acquisition Functions}

Our acquisition function design is grounded in an analysis of the deviation between the robust posterior mean $\mu^\text{R}_{t-1}(\boldsymbol{x})$ computed using all observations and the idealized posterior mean $\mu_{\text{uc},t-1}(\boldsymbol{x})$ obtained from fitting a standard GP to the subset of all \textit{uncorrupted} data points up to time point $t-1$.
Whenever the plateau condition holds, a key result bounds this {d}eviation (see Lemma~\ref{lem:deviation_bound} in Appendix~\ref{app:robust_analysis}), and shows that 
\begin{IEEEeqnarray}{rCl}
    |\mu^R_{t-1}(\boldsymbol{x}) - \mu_{\text{uc},t-1}(\boldsymbol{x})| & \le & C_{w} \sqrt{T_{\text{c}}} \sigma_{\text{uc},t-1}(\boldsymbol{x});
    \nonumber
\end{IEEEeqnarray}
with $C_{w}$ denoting a constant depending on the weight function $w$ (see Appendix~\ref{app:pimq_influence}), $\sigma_{\text{uc},t-1}^2(\boldsymbol{x})$  the posterior variance based on all uncontaminated data points up to time point $t-1$, and $T_c$ as in Definition \ref{def:frequency-constrained-corruption}.

This deviation bound allows the construction of an inflated Upper Confidence Bound (UCB): 
adding the standard confidence interval for the idealized posterior $\mu_{uc,t-1}(\boldsymbol{x})$---which uses the multiplier $\sqrt{\beta'_t}$---with the deviation bound presented above, we derive a robust confidence interval around the true function $f(\boldsymbol{x})$ as
\begin{equation}
    |f(\boldsymbol{x}) - \mu^\text{R}_{t-1}(\boldsymbol{x})| \le \hspace*{-0.5cm}\underbrace{(\sqrt{\beta'_t} + C_{w} \sqrt{T_{\text{c}}})}_{=:\sqrt{\beta_t}, \text{ called 'Robust Multiplier'}}
    \hspace*{-8pt}
    \sigma_{\text{uc},t-1}(\boldsymbol{x}). 
    \nonumber 
\end{equation}
A practical limitation of above confidence interval is its dependence on $\sigma_{\text{uc},t-1}(\boldsymbol{x})$ which is unobservable. To address this, we utilize the bound derived in Lemma~\ref{lem:uncorrupted_variance_bound} (Appendix~\ref{app:robust_analysis}), which relates the unobservable variance to the observable RCGP variance $\sigma_{t-1}^{\text{R}}(\boldsymbol{x})$ via a multiplicative factor $\Psi((t-1)_{\text{c}}) \triangleq \sqrt{1 + \frac{(t-1)_{\text{c}} \kappa}{\sigma_{noise}^2}(1 + \frac{(t-1)_{\text{c}} \kappa}{\sigma_{noise}^2})}$, where $(t-1)_{\text{c}}$ is the number of contaminated data points up to time $t-1$. Since $\Psi(.)$ is non-decreasing, using the total contamination budget $T_\text{c}$ yields a fully observable confidence bound:
\begin{equation*}
    |f(\boldsymbol{x})-\mu_{t-1}^{R}(\boldsymbol{x})| \le \sqrt{\beta_{t}}\Psi(T_{\text{c}})\sigma_{t-1}^{\text{R}}(\boldsymbol{x})
\end{equation*}

Consequently, our practical acquisition function maximizes

\begin{equation*}
    \mu_{t-1}^{R}(x) + \sqrt{\beta_{t}}\Psi(T_{\text{c}})\sigma_{t-1}^{\text{R}}(x)
\end{equation*}

\subsection{Algorithms}

The Fixed-Center-RCGP-UCB (FC-RCGP-UCB) fixes $g$ to be the zero prior mean. 
To ensure that the plateau condition holds, it thus has to employ a fixed but sufficiently large threshold value $L=L_T$, which is derived from known bounds on the function norm and the noise distribution as $L_T = \sqrt{B_f \kappa} + N_T(\delta)$, where $N_T(\delta)$ represents a high-probability bound on the noise magnitude, and $\sqrt{B_f \kappa}$ is a high-probability upper bound on $f$ (see Appendix \ref{sec:assumptions}). 
We illustrate the resulting procedure in Algorithm~\ref{alg:fc-rcgp-ucb}, which outputs a sequence of queries that satisfies various theoretical guarantees with probability $1-\delta$ for some user-specified confidence level $\delta>0$.
Here, the sub-routine $\operatorname{RCGP-update}$ denotes posterior mean and variance updates as in Eqs. \eqref{eq:rcgp_mean} and \eqref{eq:rcgp_var}.
Note that the algorithm depends on  $\sigma_{\text{uc}}$, which is unknown because we do not know which of the observations have been corrupted.
While this quantity has to be estimated in practice, the P-IMQ weight function offers a reliable way of doing this, and we expand on this in Appendix~\ref{app:practical_considerations}.

\begin{algorithm}
\caption{FC-RCGP-UCB}
\label{alg:fc-rcgp-ucb}
\begin{algorithmic}[1]
\State \textbf{Input:} $k$, $\kappa$, $B_f$, $T$, $T_c$, $\delta$
\State Compute $N_T(\delta/3)$
\State  $L_T \leftarrow \sqrt{B_f \kappa} + N_T(\delta/3)$.
\State Compute $C_{w,T}$ using $L_T$
\For{$t=1$ to $T$}
    \State $\beta_t \leftarrow  (\sqrt{\beta'_t(\delta/3)} + C_{w,T}\sqrt{T_{\text{c}}})^2$
    \State $\boldsymbol{x}_t \leftarrow \arg\max_{x\in\mathcal{X}} \mu_{t-1}^R(\boldsymbol{x}) + \sqrt{\beta_t} \Psi(T_{\text{c}}) \sigma_{t-1}^{\text{R}}(\boldsymbol{x})$
    \State Observe $y_t$
    \State $\operatorname{RCGP-update}(g=0,L_T, \boldsymbol{x}_t, y_t)$
\EndFor
\end{algorithmic}
\end{algorithm}

While FC-RCGP-UCB guarantees stability, stubbornly fixing $g(\boldsymbol{x})=0$ is an inefficient way of using the information we have observed. 
The Anchor-Adapt RCGP-UCB (A2-RCGP-UCB) algorithm addresses this, and chooses $g$ adaptively without losing stability.
It does this by maintaining two RCGP models for the objective function: an anchor model $M_{\text{\protect\faAnchor}}$ which fixes $g_{\text{\protect\faAnchor}}(\boldsymbol{x})=0$, and a second adaptive model $M_{\text{\protect\faWrench}}$ which uses the posterior mean of $M_{\text{\protect\faAnchor}}$ to dynamically update $g_{\text{\protect\faWrench}}(\boldsymbol{x}) = \mu_{{\text{\protect\faAnchor}}, t-1}(\boldsymbol{x})$ and guides the entire optimisation process.


\begin{algorithm}
\caption{A2-RCGP-UCB}
\label{alg:a2-rcgp-ucb}
\begin{algorithmic}[1]
\State \textbf{Input:} $k, \kappa, B_f, T, T_{\text{c}}, \delta$.
\State Compute $N_T(\delta/3)$;  $L_{{\text{\protect\faAnchor}},T} \leftarrow \sqrt{B_f \kappa} + N_T(\delta/3)$.
\State Compute $C_{w, {\text{\protect\faAnchor}},T}$ using $L_{{\text{\protect\faAnchor}},T}$
\State Compute $\beta_{{\text{\protect\faAnchor}},T}$ using $C_{w, {\text{\protect\faAnchor}},T}$

\State  $L_{\text{\protect\faWrench},T} \leftarrow \sqrt{\beta_{\text{\protect\faAnchor},T}}\sqrt{\kappa}\Psi(T_{\text{c}}) + N_T(\delta/3)$.
\State Compute $C_{w, {\text{\protect\faWrench}},T}$ using $L_{{\text{\protect\faWrench}},T}$
\For{$t=1$ to $T$}
    \State $\beta_{\text{\protect\faWrench},t} \leftarrow (\sqrt{\beta'_t(\delta/3)} + C_{w, {\text{\protect\faWrench}},T}\sqrt{T_{\text{c}}})^2$
    \State $\boldsymbol{x}_t \leftarrow \hspace*{-0.05cm}\arg\max_{x\in\mathcal{X}} \mu_{{\text{\protect\faWrench}},t-1}(\boldsymbol{x}) + \sqrt{\beta_{{\text{\protect\faWrench}},t}} \Psi(T_{\text{c}}) \sigma_{t-1}^\text{R}(\boldsymbol{x})$
    \State Observe $y_t$
    \State $\operatorname{RCGP-update}(M_{\text{\protect\faAnchor}}, g_{\text{\protect\faAnchor}}=0,L_{\text{\protect\faAnchor},T}, \boldsymbol{x}_t, y_t)$
    \State $\operatorname{RCGP-update}(M_{\text{\protect\faWrench}}, g_{\text{\protect\faWrench}}=\mu_{{\text{\protect\faAnchor}}, t-1},L_{\text{\protect\faWrench},T}, \boldsymbol{x}_t, y_t)$
\EndFor
\end{algorithmic}
\end{algorithm}

\subsection{Theoretical Guarantees}

Throughout our results, we use $R_T = \mathcal{O}(h(T))$ to mean that $R_T$ grows in $T$ as $h(T)$, and $\tilde{\mathcal{O}}(h(T))$ to ignore polylogarithmic factors of $h(T)$. For example, $\mathcal{O}(T \log(T)) =\tilde{\mathcal{O}}(T)$.
First, we prove the zero-cost robustness property for both methods: if there are no corruptions so that $T_{\text{c}}= 0$, we obtain the standard GP-UCB regret bound.

\begin{theorem}[Zero-Cost Robustness]
\label{thm:zero_cost}
If $T_{\text{c}}=0$, FC-RCGP-UCB and A2-RCGP-UCB achieve the same asymptotic regret as  GP-UCB:
with  probability at least $1-\delta$, their cumulative regret is bounded by
\begin{equation}
    R_T = \mathcal{O}(\sqrt{T \beta'_T \gamma_T}).
    \nonumber
\end{equation}
\end{theorem}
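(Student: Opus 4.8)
The plan is to show that when $T_{\text{c}}=0$, both algorithms are, with high probability, \emph{indistinguishable} from vanilla GP-UCB on every round, so that the regret analysis of \citet{srinivasGaussianProcessOptimization2012} applies verbatim. The engine of the argument is the plateau condition: whenever every residual $|y_t - g(\boldsymbol{x}_t)|$ stays inside the plateau of half-width $L$, the P-IMQ weight equals $W = \sigma_{\text{noise}}/\sqrt{2}$, which forces $\boldsymbol{J}_w = \boldsymbol{I}$ and $\boldsymbol{m}_w = \boldsymbol{0}$; substituting these into Eqs.~\eqref{eq:rcgp_mean}--\eqref{eq:rcgp_var} collapses the RCGP updates exactly onto the standard GP updates, independently of the choice of centering $g$. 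Since $T_{\text{c}}=0$ means every observation is uncorrupted, the deviation bound $|\mu^{\text{R}}_t(\boldsymbol{x}) - \mu_{\text{uc},t}(\boldsymbol{x})| \le C_{w}\sqrt{T_{\text{c}}}\,\sigma_{\text{uc},t}(\boldsymbol{x})$ then vanishes, giving $\mu^{\text{R}}_t = \mu_{\text{uc},t} = \mu_t$ and $\sigma_{\text{uc},t} = \sigma_t$ exactly.

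First I would establish the plateau condition with probability at least $1-\delta/2$. For FC-RCGP-UCB (and the anchor model $M_{\text{\faAnchor}}$ of A2-RCGP-UCB), $g \equiv 0$, so I bound $|y_t| \le |f(\boldsymbol{x}_t)| + |\epsilon_t| \le B_f\sqrt{\kappa} + N_T(\delta/2) = L_T$, using $|f(\boldsymbol{x})| \le \|f\|_k\sqrt{k(\boldsymbol{x},\boldsymbol{x})} \le B_f\sqrt{\kappa}$ together with the uniform noise bound $\sup_{t\le T}|\epsilon_t| \le N_T(\delta/2)$ that holds on the stated event. For the adaptive model $M_{\text{\faWrench}}$, whose centering is $g = \mu_{\text{\faAnchor},t-1}$, I would instead bound $|y_t - \mu_{\text{\faAnchor},t-1}(\boldsymbol{x}_t)| \le |f(\boldsymbol{x}_t) - \mu_{\text{\faAnchor},t-1}(\boldsymbol{x}_t)| + |\epsilon_t|$; on the event that the anchor's plateau condition holds, $M_{\text{\faAnchor}}$ is a genuine GP, so the GP-UCB confidence bound gives $|f(\boldsymbol{x}_t) - \mu_{\text{\faAnchor},t-1}(\boldsymbol{x}_t)| \le \sqrt{\beta_{\text{\faAnchor},T}}\,\sigma_{\text{\faAnchor},t-1}(\boldsymbol{x}_t) \le \sqrt{\beta_{\text{\faAnchor},T}}\sqrt{\kappa}$, whence the residual is at most $L_{\text{\faWrench},T}$ by construction. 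Crucially this chain is not circular: the anchor's condition depends only on $|y_t| \le L_{\text{\faAnchor},T}$, and the wrench's condition then follows as a consequence.

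Next, conditioned on the plateau event, I would show that both acquisition functions collapse to GP-UCB's. With $T_{\text{c}}=0$ the robust multiplier satisfies $\sqrt{\beta_t} = \sqrt{\beta'_t(\delta/2)} + C_{w}\sqrt{T_{\text{c}}} = \sqrt{\beta'_t(\delta/2)}$, and since $\mu^{\text{R}}_{t-1} = \mu_{t-1}$ and $\sigma_{\text{uc},t-1} = \sigma_{t-1}$, the maximiser $\boldsymbol{x}_t = \arg\max_{\boldsymbol{x}} \mu^{\text{R}}_{t-1}(\boldsymbol{x}) + \sqrt{\beta_t}\,\sigma_{\text{uc},t-1}(\boldsymbol{x})$ coincides with the GP-UCB choice. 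By induction on $t$ the entire query trajectory and posterior sequence match those of GP-UCB run with the schedule $\beta'_t(\delta/2)$. I would then invoke the GP-UCB regret bound, which holds with probability at least $1-\delta/2$ and yields $R_T = \mathcal{O}(\sqrt{T\beta'_T\gamma_T})$, and close with a union bound over the plateau event and the regret event to obtain the claim with probability at least $1-\delta$.

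The main obstacle is the adaptive model's plateau condition in A2-RCGP-UCB, precisely because $g$ is data-dependent and could in principle be dragged away from $f$. The resolution hinges on the calibration $L_{\text{\faWrench},T} = \sqrt{\beta_{\text{\faAnchor},T}}\sqrt{\kappa} + N_T(\delta/2)$, chosen exactly so that the anchor's own GP-UCB confidence interval certifies that the adaptive centering never pushes an uncorrupted residual past the plateau. Getting the conditioning right---establishing the anchor event first and only then deducing the wrench event on top of it---is the delicate bookkeeping step, and I would ensure the two noise bounds and the GP-UCB high-probability statement are all controlled under a single event of probability at least $1-\delta/2$ so that the final union bound closes cleanly.
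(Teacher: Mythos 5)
Your proposal is correct and takes essentially the same approach as the paper: the paper also establishes the plateau condition on the joint event $E_{\text{noise}}(\delta/2) \cap E_{\text{conf}}(\delta/2)$ (Lemmas~\ref{lem:fc_plateau}, \ref{lem:anchor_properties}, and \ref{lem:mr_plateau}, with the same anchor-first, adaptive-second ordering you describe for A2-RCGP-UCB), observes that the robust multiplier reduces to $\sqrt{\beta'_t(\delta/2)}$ when $T_{\text{c}}=0$, and deduces the GP-UCB bound. The only difference is the packaging of the final step: you prove exact trajectory equivalence with GP-UCB and invoke its regret theorem as a black box, whereas the paper obtains the same bound by substituting $T_{\text{c}}=0$ into its general corrupted-setting regret theorems (Theorems~\ref{thm:fc_regret} and~\ref{thm:a2_regret}).
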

The proof is simple: if $T_c=0$, the robust multipliers revert to those of GP-UCB, and the plateau condition  is satisfied (see Appendix~\ref{app:well_specified_analysis}).

Next, we study regret guarantees under frequency-constrained  corruption as in Definition \ref{def:frequency-constrained-corruption}.
The result for FC-RCGP-UCB is more direct, as the adversary cannot manipulate its weight function $w$ via $g$.
 
\begin{theorem}[Regret for FC-RCGP-UCB]
\label{thm:fc_regret_corr}
For frequency-constrained  corruption and $\gamma_T$ the maximum information gain, the cumulative regret of FC-RCGP-UCB is bounded with probability at least $1-\delta$ by

\begin{equation}
    R_T = \bigOtilde{\Psi(T_{\text{c}}) \left(1+\sqrt{T_{\text{c}}}\right) \sqrt{\beta'_T T (\infogain + T_{\text{c}})}}.
    \nonumber
\end{equation}
\end{theorem}

While FC-RCGP-UCB relies on a fixed centering function, 
A2-RCGP-UCB adapts it over time.
Though this results in a weaker regret bound than the one obtained for FC-RCGP-UCB, our numerical experiments in later sections suggest that this is likely to be a limitation of the proof technique.

\begin{theorem}[Regret for A2-RCGP-UCB]
\label{thm:a2_regret_corr}
For frequency-constrained  corruption and $\gamma_T$ the maximum information gain, the cumulative regret of A2-RCGP-UCB is bounded with probability at least $1-\delta$ by
\begin{equation}
    R_T = \tilde{\mathcal{O}}\left(\left(1 + T_{\text{c}}\Psi(T_{\text{c}})^2\right)\sqrt{ \beta'_T T(\gamma_T + T_{\text{c}})}\right).
    \nonumber
\end{equation}
\end{theorem}

The proofs of Theorems~\ref{thm:fc_regret_corr} and \ref{thm:a2_regret_corr} can be found in Appendix~\ref{app:robust_analysis}.
We note that A2-RCGP-UCB achieves both adaptive centering and robustness in the presence of up to $\mathcal{O}(T^{1/7})$ corruptions with possibly infinite magnitude.



\paragraph{Conditions for Sublinear Regret.} We analyze the corruption budgets $T_c = \mathcal{O}(T^\alpha)$ permissible for sublinear regret. Ideally, if the uncorrupted variance $\sigma_{uc,t}^2$ were directly observable (eliminating the penalty $\Psi(T_c)$), our algorithms would tolerate significantly higher corruption rates: FC-RCGP-UCB achieves sublinear regret for $\alpha < 1/2$ and A2-RCGP-UCB for $\alpha < 1/3$ using an RBF kernel. However, due to the need to upper-bound the unknown uncorrupted variance with the observable RCGP variance (Lemma~\ref{lem:uncorrupted_variance_bound}), our provable rates are more conservative: $\alpha < 1/4$ and $\alpha < 1/7$ respectively. Our experiments suggest these theoretical penalties are likely loose artifacts of the proof technique, as the algorithms empirically handle contamination budgets of $\mathcal{O}(T^{1/3})$. A full comparison is provided in Appendix~\ref{app:sublinear_regret_conditions}.

\section{Experiments}
\label{sec:experiments}

We empirically evaluate the performance of FC-RCGP-UCB and A2-RCGP-UCB on several benchmarks.
Our experiments are designed to validate the theoretical guarantees of zero-cost robustness (Theorem~\ref{thm:zero_cost}) and the algorithms' resilience against frequency-constrained, infinite-magnitude adversarial corruptions (Theorems \ref{thm:fc_regret_corr} and \ref{thm:a2_regret_corr}).

\subsection{Implementation }

Our algorithms are  implemented in Python, and use the BoTorch library \citep{balandat2020botorch} for its BO infrastructure, and GPyTorch \citep{gardner2021gpytorch} for Gaussian Process modeling. 
Our code integrates the RCGP model directly, and can be found \href{https://anonymous.4open.science/r/RCGP-46DE/README.md}{\texttt{here}}.

To optimize the kernel hyperparameters, we deviate from the standard approach of maximising marginal likelihood approach: as noted when RCGPs were first introduced  \citep{altamiranoRobustConjugateGaussian2024}, Gibbs posteriors generally do not admit valid marginal likelihoods, and maximising their generalised extensions of marginal likelihoods for hyperparameter selection is ill-posed. 
We address this by instead following the recommendations in \citet{laplanteRobustConjugateSpatioTemporal2025}, and optimise the hyperparameters---in particular the noise variance and kernel hyperparameters---by minimising a weighted Leave-One-Out Cross-Validation (LOO-CV) objective.

While our theoretical analysis assumed prior knowledge of the number of corruptions $T_{\text{c}}$ to set the sequence of robust confidence multipliers $\{\beta_t\}_{t=1}^T$, the number $T_{\text{c}}$ is often unknown in practice. 
In practice, one can solve this problem either by bounding $T_{\text{c}}$ with a conservatively large number, or by estimating $T_{\text{c}}$ adaptively.
In our experiments, we choose the latter option: both FC-RCGP-UCB and A2-RCGP-UCB estimate of $T_{\text{c}}$ as the count of observations outside the plateau corresponding to their P-IMQ weighting function. 
In all of our empirical results, this strategy resulted in good  empirical performances. 

\subsection{Baselines}

Throughout our experiments, we compare our new proposals against three established baselines.
The first of these is \textbf{GP-UCB}---one of the most important BO algorithms originally  derived in \citet{srinivasGaussianProcessOptimization2012}.
While this algorithm is brittle against outliers and corruption, it provides a useful benchmark for efficiency in the uncorrupted setting.
The second comparator will be \textbf{Student-t Process UCB:} a modification of the GP-UCB algorithm which replaces the Gaussian process prior on the objective function with a Student-t process.
\citet{shahStudenttProcessesAlternatives2014} derived closed-form formulas for the posterior mean and variance for this algorithm, and showed strong empirical using the Expected Improvement (EI) acquisition function.
Lastly, we compare against a heuristic approach (\textbf{DiagnosticsGP}) proposed by \citet{martinez-cantinPracticalBayesianOptimization2018} which does not satisfy any known regret bounds, and attempts to discard outliers before fitting a standard GP on the remaining data points.
One way of identifying outliers recommended by \citet{martinez-cantinPracticalBayesianOptimization2018} is to fit a GP with a Student-t likelihood, and to then discard observations whose likelihood values are very small.
Notably, this process can be thought of as a limiting case of the RCGP with the P-IMQ weight function for $c\to0$, and where $L$ is set dynamically---for example through a GP that is fitted with Student-t likelihoods.
While \citet{martinez-cantinPracticalBayesianOptimization2018} recommended using a Laplace approximation to fit the GP with Student-t likelihoods, we found that variational approximations significantly improved stability; and implemented the algorithm this way instead.

\subsection{Forrester Function Experiments}
\label{sec:forrester}

We begin by evaluating all algorithms on the Forrester function \citep{forrester_book}, a common benchmark in BO. 
The objective function is observed with independent Gaussian noise of variance $\sigma_{\text{noise}}^2=1$. 
We conduct 10 independent experimental runs, each initialized with a unique random seed. Each run starts with five common, uncorrupted observations from a Sobol sequence which serve as a common starting point. Each algorithm then proceeds for 30 iterations in an uncorrupted setting and 100 iterations in a corrupted setting, defined by $T_{\text{c}}=\mathcal{O}(T^{1/3})$. Figures~\ref{fig:forrester_corrupted} and \ref{fig:forrester_clean} present aggregated results for the corrupted and uncorrupted settings respectively. Solid lines represent the average cumulative regret across the different seeds while shaded areas represent one standard error away from the mean.

\textbf{Zero-Cost Robustness.} To validate Theorem~\ref{thm:zero_cost}, we first run the experiment without any corruptions ($T_{\text{c}}=0$). 
As shown in Figure~\ref{fig:forrester_clean}, both FC-RCGP-UCB and A2-RCGP-UCB achieve cumulative regret virtually identical to GP-UCB, and even seem to perform slightly better than Student-t Process UCB and the DiagnosticsGP approach. 
This is exactly in line with Theorem~\ref{thm:zero_cost}, and provides empirical illustration that the robustness mechanisms introduced by the RCGP with P-IMQ weights does not incur any cost in efficiency when corruptions are absent.

\textbf{Adversarial Corruption.} Next, we introduce a greedy, clairvoyant adversary that knows the true optimum $x^*$. The adversary's deterministic policy is based on the Euclidean distance to the optimum: if a queried point $\boldsymbol{x}$ satisfies $\|\boldsymbol{x} - \boldsymbol{x}^*\| < 0.2$, it corrupts the observation to a fixed low value, whereas if $\|\boldsymbol{x} - \boldsymbol{x}^*\| > 0.5$, it corrupts it to a fixed high value. The adversary is greedy as it applies this policy to the earliest queried points that meet either condition until its corruption budget is exhausted. This strategy is potent because early interventions have a high potential to derail the BO algorithm before a robust surrogate model is formed.
The results are presented in Figure~\ref{fig:forrester_corrupted}. As expected, GP-UCB suffers the highest cumulative regret. The adaptive RCGP-based BO algorithm A2-RCGP-UCB demonstrates superior performance: it effectively ignores extreme outliers, and maintaining low cumulative regret throughout the optimization process. 
For individual runs with extremely large corruptions, we noticed that the Diagnosis-GP baseline could sometimes outperfom A2-RCGP-UCB.
In fact, this is perhaps unsurprising: Diagnosis-GP effectively sets the weight to zero past a threshold and thus excludes extreme outliers completely.
While this has the effect of introducing {more} robustness against extreme outliers than A2-RCGP-UCB, it also leads to a more pronounced loss of efficiency when an observation is mistakenly classified as an outlier. 
In fact, this is why A2-RCGP-UCB can outperform Diagnosis-GP in the uncorrupted setting depicted in Figure~\ref{fig:forrester_clean}.

\begin{figure}[h]
     \centering
      \includegraphics[width=0.50\textwidth]{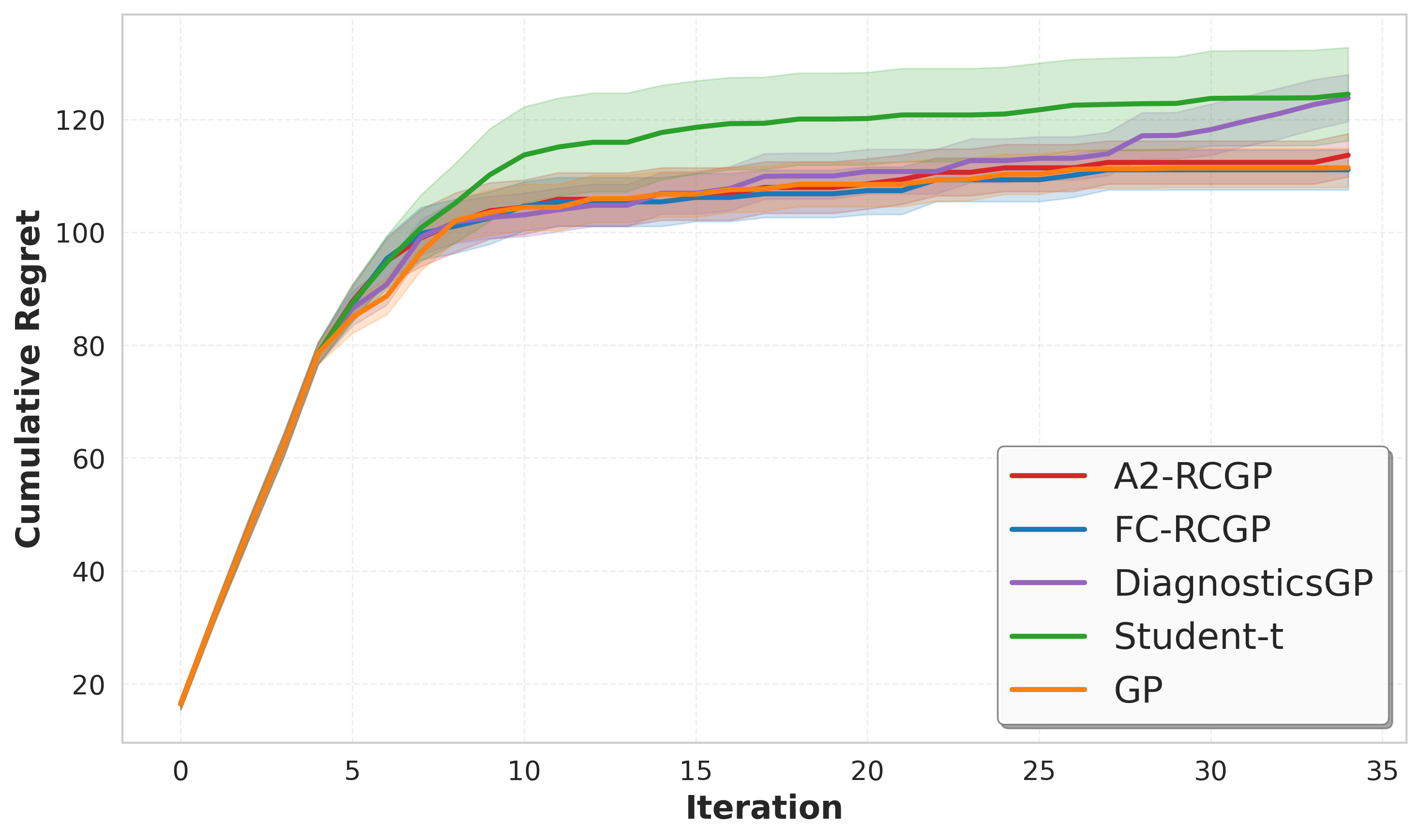}
     \caption{Zero-cost robustness validation: Cumulative regrets of the different BO algorithms in the uncorrupted Forrester experiment ($T_{\text{c}}=0$).}
\label{fig:forrester_clean}
\end{figure}

\subsection{Hyperparameter Optimisation}
\label{sec:hpt_opt}

Next, we evaluate our approach on a hyperparameter optimisation task for training a ResNet classifier \citep{he2015deepresiduallearningimage} on the CIFAR-10 dataset (50,000 training samples, 10,000 validation samples).
The objective is to maximize validation accuracy by optimizing the ResNet's learning rate and weight decay, subject to an adversary capable of crashing the machine during evaluation. 
These failures are mapped to a negative outlier value at $-2$, and induced with a frequency of $T_{\text{c}} = T^{1/3}$. Since the optimal point is unknown, we chose an adversary that greedily chooses to spend its corruption budget as swiftly as possible regardless of the queried set of hyperparameters.
We define the regret as the difference in accuracy, as a number in the range $[0,1]$, between the current ResNet configuration and the best obtained configuration across all models.
We run all BO algorithms for 140 steps using the same random seed, then repeat the experiment 10 times across different seeds. Figure~\ref{fig:cifar} shows the cumulative regrets of the different BO algorithms where solid lines represent the mean, and shaded regions represent the standard error.
As we can see, standard BO in this setting is often too conservative, and avoids optimal regions due to the risk of catastrophic failure. 
In contrast, as shown in Figure~\ref{fig:cifar}, outlier-robust methods  such as RCGP-based approaches and Diagnosis-GP successfully navigate this challenging scenario. 
By ignoring crashed evaluation runs, they ignore the adversarial impact of the failure mechanism on the underlying performance signal. 
This allows for a more effective exploration of  hyperparameter configurations, and demonstrates RCGP's capability for safe optimisation in environments susceptible to failure.
%

 \begin{figure}[h]
     \centering
      \includegraphics[width=0.48\textwidth]{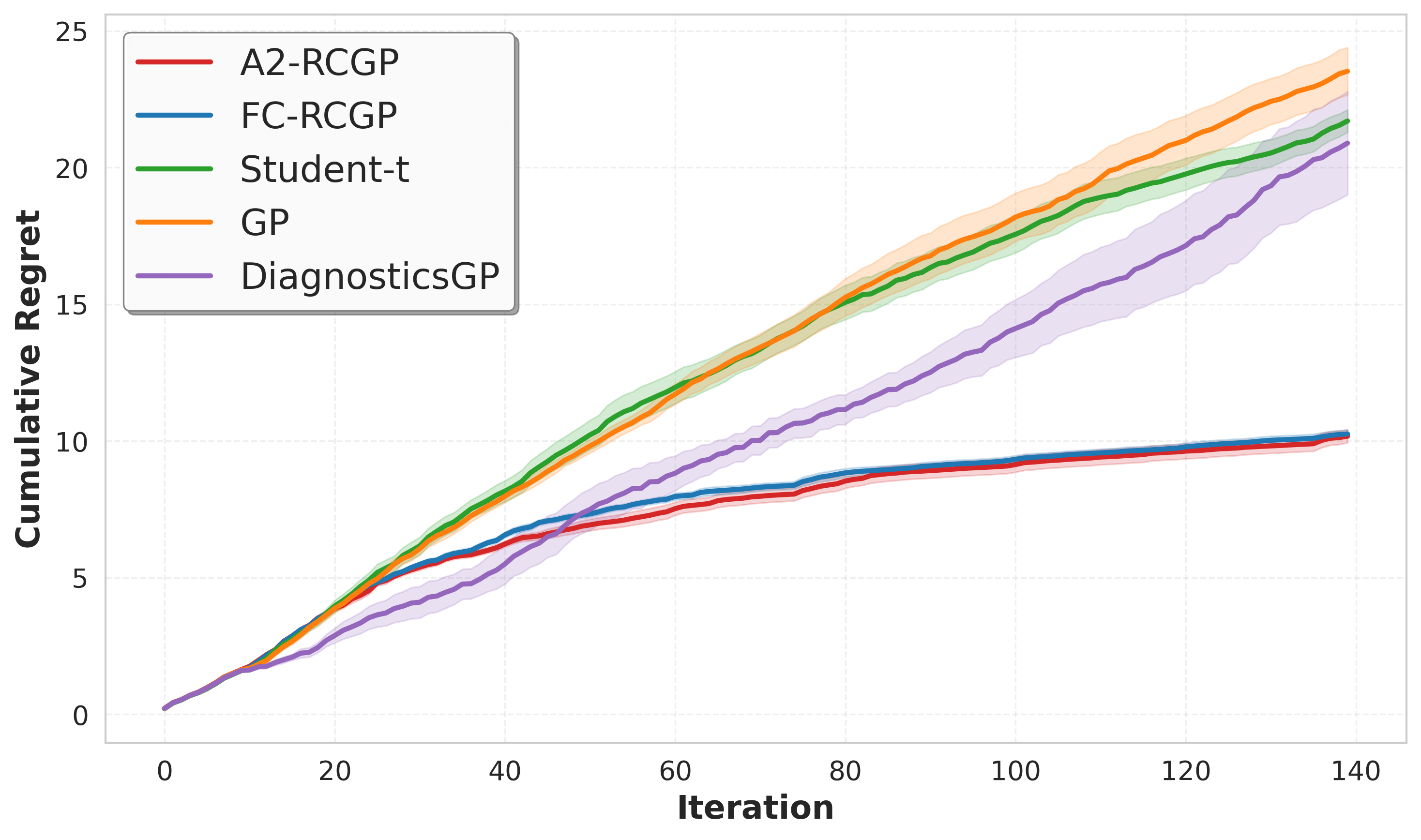}
     \caption{Cumulative regrets for different BO algorithms applied to hyperparameter optimization on CIFAR with simulated training crashes.}
\label{fig:cifar}
\end{figure}

\subsection{Lunar-Lander-3}
\label{sec:lunar_lander}

Lastly, we assess performance in a high-dimensional setting using the Lunar-Lander-3 reinforcement learning environment~\citep{towers2024gymnasium}. 
The task is to optimise a linear policy depending on 36 parameters, where the objective is given by the average cumulative reward over 4 episodes of 1000 steps each. 
This evaluation is subject to significant noise from the environment's inherent stochasticity. 
To further challenge the algorithms' performances, an adversary also introduces additional spuriously low rewards of $-1000$ with a frequency of $T^{1/3}$. Similarly to Section~\ref{sec:hpt_opt}, the optimum policy is unknown and we chose an adversary that injects corruption as fast as its budget allows it independently of the chosen policy. The experiment is then repeated 10 times with different random seeds.
We plot the results in Figure~\ref{fig:lunarlander},
where we defined regret as the difference between the best-observed reward across all BO runs and that of the current policy. Solid lines represent the mean, and shaded regions represent the standard error.

Despite having run for $300$ iterations, \textit{none} of the BO algorithms have stabilised due to the problem's high dimension.
As this behaviour is common in practice and for high-dimensional problems, it is worth understanding whether the asymptotic regret guarantees derived in our paper are practically meaningful before regret guarantees converge.
On this example, our results suggest that they are:
RCGP-based methods improve upon GP-UCB.
More interestingly, other robust baselines significantly underperform---even relative to GP-UCB---which is due to their overly conservative exploration strategies being maladapted to high-dimensional spaces.
For the Diagnostics-GP for instance, we hypothesise that many non-outlying data points will be treated as outliers and completely excluded from consideration due to the fact that $d$-dimensional Gaussians quickly concentrate onto a ring of radius $\sqrt{d}$ \citep{wainwright2019high}.
%

%

\begin{figure}[h]
    \centering
    \includegraphics[width=0.48\textwidth]{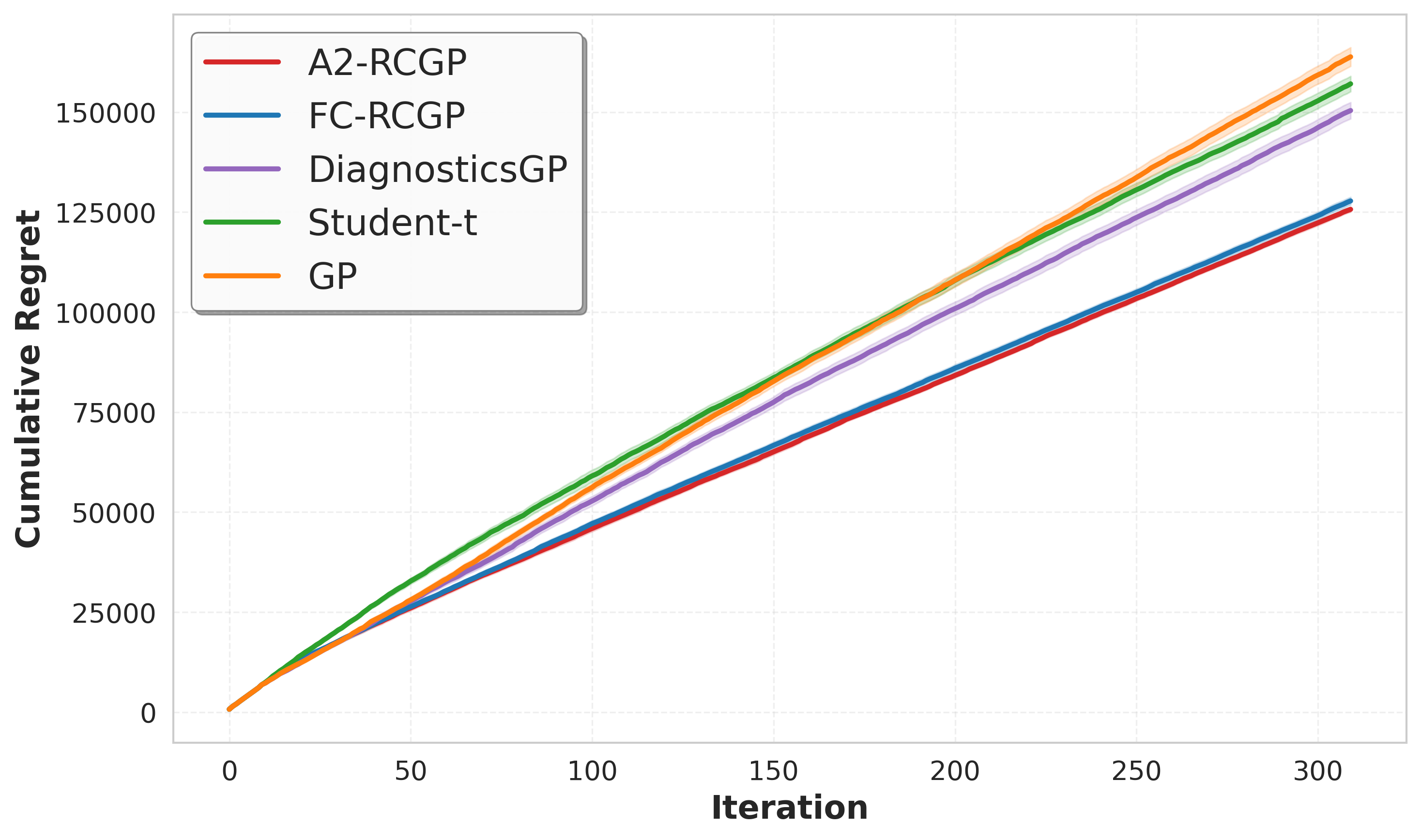}
    \caption{Cumulative regret of linear policy optimization on the 36D Lunar-Lander-3 environment.}
    \label{fig:lunarlander}
\end{figure}

\subsection{Discussion}

Across all experiments, A2-RCGP-UCB and FC-RCGP-UCB consistently demonstrated zero-cost robustness. A limitation of our theoretical framework is the reliance on knowing the corruption budget $T_\text{c}$ and a high-probability upper bound on the function norm. While assuming a known budget is the standard in the literature \citep{kirschnerBiasRobustBayesianOptimization2021, bogunovicCorruptionTolerantGaussianProcess2020a, bogunovicRobustPhasedElimination2022d}, we found that in practice, dynamically estimating $T_c$ or even setting $T_c=0$ yields strong results. Similarly, exact knowledge of the function bound proved unnecessary; our experiments used a fixed heuristic ($\sqrt{B_f \kappa}=1$) even for functions of larger magnitude without performance loss. Notably, our experiments successfully utilized corruption budgets scaling with $T^{1/3}$, aligning with the improved 'ideal' bounds discussed in Section 4.3 rather than the conservative proven rates. This suggests the algorithms are robust to parameter misspecification and that the theoretical price of observable estimator of $\sigmauct{t}$ is likely loose. Diagnostics-GP also showed strong empirical results, and future work could investigate if it can be furnished with provable regret bounds using the tools developed here.

\section{Conclusion}
\label{sec:conclusion}

We addressed a fundamental limitation in the literature on robust BO by deriving an algorithm with sublinear regret in the presence of unbounded corruptions.
Our approach is built on two pillars: on a conceptual level, we replaced a magnitude-constrained budget for the adversary with a frequency-constrained one.
On an inferential level, we handle this adversary by considering methodology that goes beyond standard Bayesian approaches \citep{altamiranoRobustConjugateGaussian2024, laplanteRobustConjugateSpatioTemporal2025}.
A key insight leveraged in our development is the inherent lack of robustness of Bayesian methods \citep[see e.g.][]{bissiriGeneralFrameworkUpdating2016, knoblauchGeneralizedVariationalInference2019a}---an issue that has recently generated a lot of attention, and has spawned post-Bayesian approaches that go beyond representing uncertainty via conditional probabilities.
The RCGP that our method revolves around is but one example of this larger body of work.
As we have shown, the intersection between these ideas and active learning is underexplored, and shows much promise: by going beyond Bayes' rule as an inferential device for updating beliefs that quantify uncertainty, we may be able to obtain guarantees that would be hard to achieve if we were using belief updates based on conditional probabilities.
The current paper is an example of this, and obtained the first algorithms that provably achieve sublinear regret in the presence of up to $\mathcal{O}(T^{1/4})$  unbounded corruptions without sacrificing efficiency in the absence of corruptions.

\bibliography{content/bibliography}

\clearpage
\appendix

\section{Notation}

This section summarizes the mathematical notation used throughout the appendices.
\subsection*{Optimization Setting and Function Properties}

\begin{itemize}
    \item $\mathcal{X}$: The decision domain, (e.g. a compact subset of $\mathbb{R}^d$). $d$ is the dimension of the domain.
    \item $f(\boldsymbol{x})$: The true underlying objective function.
    \item $\boldsymbol{x}^*$: The optimizer, $\boldsymbol{x}^* = \arg\max_{\boldsymbol{x}\in\mathcal{X}} f(\boldsymbol{x})$.
    \item $t, T$: Current time step and the total time horizon, respectively.
    \item $R_T$: Cumulative regret, $R_T = \sum_{t=1}^T (f(\boldsymbol{x}^*) - f(\boldsymbol{x}_t))$.
    \item $k(\boldsymbol{x},\boldsymbol{x}')$: The kernel (covariance) function.
    \item $\mathcal{H}_k(\mathcal{X})$: The Reproducing Kernel Hilbert Space (RKHS) associated with $k$.
    \item $\mathcal{O}(\cdot), \tilde{\mathcal{O}}(\cdot)$: Standard Big-O notation. $\tilde{\mathcal{O}}(\cdot)$ hides polylogarithmic factors in $T$. For example, $\mathcal{O}(T \log(T)) =\tilde{\mathcal{O}}(T)$.
\end{itemize}

\subsection*{Observations and Corruption Model}

\begin{itemize}
    \item $\boldsymbol{x}_t$: The point (a vector) selected by the algorithm at time $t$.
    \item $\epsilon_t$: Benign observation noise at time $t$ (assumed sub-Gaussian).
    \item $\sigma_{\text{noise}}^2$: The noise variance or sub-Gaussian parameter.
    \item $c_t$: Adversarial corruption at time $t$. The magnitude can be arbitrarily large.
    \item $y_t$: The scalar observation, $y_t = f(\boldsymbol{x}_t) + \epsilon_t + c_t$.
    \item $T_{\text{c}}$: The total budget on the number of corruptions, $T_{\text{c}} = |\{t: c_t \ne 0\}|$.
    \item $\alpha$: The scaling exponent for corruptions, assuming $T_{\text{c}} = \mathcal{O}(T^\alpha)$.
\end{itemize}

\subsection*{Data Subsets and Matrices}

\begin{itemize}
    \item $\mathcal{D}_t = \{(\boldsymbol{x}_i, y_i)\}_{i=1}^t$: The full dataset up to time $t$.
    \item $\Duct{t}, \Dct{t}$: The subsets of uncorrupted ($c_i=0$) and corrupted ($c_i\ne 0$) observations, respectively.
    \item $T_{\text{c}}$: The count of corrupted points up to time $t$, $T_{\text{c}} = |\Dct{t}|$.
    \item $\boldsymbol{X}_t, \boldsymbol{y}_t$: The matrix of inputs and vector of observations up to time $t$. Each row of $\boldsymbol{X}_t$ is a vector $\boldsymbol{x}_i$ and each element of $\boldsymbol{y}_t$ is an observation $y_i$.
    \item $\Xuct{t}, \yuct{t}$ (and $\Xct{t}, \yct{t}$): Inputs and observations restricted to the respective subsets.
    \item $\boldsymbol{K}_t$: The Gram matrix $[k(\boldsymbol{x}_i, \boldsymbol{x}_j)]_{i,j=1}^t$.
    \item $\boldsymbol{k}_t(\boldsymbol{x})$: The kernel vector $[k(\boldsymbol{x}, \boldsymbol{x}_1), \dots, k(\boldsymbol{x}, \boldsymbol{x}_t)]^T$.
    \item $\boldsymbol{K}_{uc,uc}, \boldsymbol{K}_{c,c}, \boldsymbol{K}_{uc,c}$: Partitions of the Gram matrix corresponding to the respective subsets.
\end{itemize}

\subsection*{Posterior Distributions}

\begin{itemize}
    \item $m(\boldsymbol{x})$: The GP prior mean (often assumed $m(\boldsymbol{x})=0$). $\boldsymbol{m}_t$ is the vector of prior means at $\boldsymbol{X}_t$.
    \item $\muuct{t}(\boldsymbol{x}), \sigmauct{t}(\boldsymbol{x})$: The \textbf{standard GP posterior} mean and standard deviation conditioned only on the uncorrupted data $\Duct{t}$.
    \item $\muRt{t}(\boldsymbol{x}), \sigmaRt{t}(\boldsymbol{x})$: The \textbf{Robust Conjugate Gaussian Process (RCGP) posterior} mean and standard deviation conditioned on the full dataset $\mathcal{D}_t$.
    \item $\cov_{\mathcal{D}}(\boldsymbol{X}_A, \boldsymbol{X}_B)$: The posterior covariance operator conditioned on a dataset $\mathcal{D} = \{(\boldsymbol{X}_D, \boldsymbol{y}_D)\}$. It is defined as $k(\boldsymbol{X}_A, \boldsymbol{X}_B) - \boldsymbol{k}(\boldsymbol{X}_A, \boldsymbol{X}_D)^T (\boldsymbol{K}_{D,D} + \sigma_{\text{noise}}^2 \boldsymbol{I})^{-1} \boldsymbol{k}(\boldsymbol{X}_D, \boldsymbol{X}_B)$. Depending on the arguments, it can return a scalar, vector, or matrix.
    \begin{itemize}
            \item $\cov_{\mathcal{D}}(\boldsymbol{x}, \boldsymbol{x}')$: The posterior covariance between two points $\boldsymbol{x}$ and $\boldsymbol{x}'$.
            \item $\cov_{\mathcal{D}}(\boldsymbol{X}, \boldsymbol{x})$: The posterior covariance between a set of points $\boldsymbol{X}$ and a single point $\boldsymbol{x}$, returning a vector $[\cov_{\mathcal{D}}(\boldsymbol{x}',\boldsymbol{x})]_{x'\in\boldsymbol{X}}$.
            \item $\cov_{\mathcal{D}}(\boldsymbol{X}_A, \boldsymbol{X}_B)$: The full posterior covariance matrix between two sets of points $\boldsymbol{X}_A$ and $\boldsymbol{X}_B$, returning a matrix $[\cov_{\mathcal{D}}(\boldsymbol{x}',\boldsymbol{x}'')]_{x'\in\boldsymbol{X}_A, x''\in\boldsymbol{X}_B}$.
            \item We primarily use the covariance operator conditioned on the uncorrupted set, denoted $\covuct{t}$.
    \end{itemize}
    \item Similarly we define $\cov_{\mathcal{D}}^{\text{R}}(\boldsymbol{X}_A, \boldsymbol{X}_B) = k(\boldsymbol{X}_A, \boldsymbol{X}_B) - \boldsymbol{k}(\boldsymbol{X}_A, \boldsymbol{X}_D)^T (\boldsymbol{K}_{D,D} + \sigma_{\text{noise}}^2 \boldsymbol{J}_w)^{-1} \boldsymbol{k}(\boldsymbol{X}_D, \boldsymbol{X}_B)$ as the posterior covariance operator for RCGP.
    \item $\boldsymbol{S}$: The Schur complement used in the deviation analysis (Lemma~\ref{lem:deviation_formula}). $\boldsymbol{S} = \covuct{t}(\Xct{t}, \Xct{t}) + \sigma_{\text{noise}}^2 \boldsymbol{J}_{w,c,t}$.
\end{itemize}

\subsection*{RCGP Framework and P-IMQ}

\begin{itemize}
    \item $w(\boldsymbol{x},y)$: The RCGP weighting function. We use the P-IMQ: Plateau-Inverse Multi-Quadric weighting function defined in Definition~\ref{def:p-imq}.
    \item $g(\boldsymbol{x})$: The centering function for P-IMQ.
    \item $L$ (or $L_T, L_t(\boldsymbol{x})$): The half-width of the plateau for P-IMQ.
    \item $W$: The maximum weight of the P-IMQ, set to $\sigma_{\text{noise}}/\sqrt{2}$.
    \item $\boldsymbol{J}_w$: The diagonal weighting modification matrix, $\boldsymbol{J}_w = \text{diag}(\sigma_{\text{noise}}^2/(2w_i^{2}))$.
    \item $\boldsymbol{m}_w$: The mean modification vector (gradient correction), $\boldsymbol{m}_w = \boldsymbol{m}_t + \sigma_{\text{noise}}^2 \nabla_{\boldsymbol{y}} \log(\boldsymbol{w}^2)$.
    \item $\boldsymbol{A}_t$: The regularized kernel matrix in RCGP, $\boldsymbol{A}_t = \boldsymbol{K}_t + \sigma_{\text{noise}}^2 \boldsymbol{J}_w$.
    \item The \textbf{Plateau Condition}. The event that all uncorrupted observations fall within the plateau, ensuring $\boldsymbol{J}_w$ restricted to the uncorrupted indices is the identity matrix.
    \item In the derivation of Lemma~\ref{lem:deviation_formula}, we will define the following constants:
    \begin{itemize}
        \item $C_1 \triangleq \sup_{\boldsymbol{x}\in\mathcal{X}, y\in\mathbb{R}} |w(\boldsymbol{x},y)(y-m_w(\boldsymbol{x},y)-\mu_{uc}(\boldsymbol{x}))|$. In the presented algorithms, the weighting function will change over time, as such, we will denote $C_{1,t}$ the value $C_{1,t} \triangleq \sup_{\boldsymbol{x}\in\mathcal{X}, y\in\mathbb{R}} |w_t(\boldsymbol{x},y)(y-m_{w,t}(\boldsymbol{x},y)-\mu_{\text{uc},t}(\boldsymbol{x}))|.$
        \item $C_{w} \triangleq \frac{\sqrt{2}C_1}{\sigma_{\text{noise}}^2}$. 
    \end{itemize}
\end{itemize}

\subsection*{Regret Analysis Parameters and Constants}

\begin{itemize}
    \item $\delta$: The failure probability. The bounds derived in Theorems~\ref{thm:zero_cost},~\ref{thm:fc_regret_corr}, and~\ref{thm:a2_regret_corr} hold with probability at least $1-\delta$.
    \item $\beta_t'(\delta')$: The standard UCB confidence parameter sequence.
    \item $\beta_t(\delta')$: The \textbf{robust confidence parameter}. To maintain consistency with the GP-UCB literature where the multiplier is $\sqrt{\beta'_t}$, we define our robust multiplier as $\sqrt{\beta_t}$. It is defined as:
    \begin{equation}
        \label{eq:robust_beta_t_definition}
        \sqrt{\beta_t(\delta')} \triangleq \sqrt{\beta'_{t}(\delta')} + C_{w,t-1}\sqrt{T_{\text{c},t-1}}
    \end{equation} 
    where $C_{w,t-1}$ is the maximum deviation constant and $T_{\text{c},t-1}$ is the number of corruptions up to time $t-1$. This implies that $\beta_t(\delta') = \left(\sqrt{\beta'_{t}(\delta')} + C_{w,t-1}\sqrt{T_{\text{c},t-1}}\right)^2$.
    \item $N_T(\delta'')$: The high-probability bound on the noise magnitude over $T$ rounds. $N_T(\delta'')$ is slected such that the event $E_{\text{noise}}(\delta'')=\{\forall t\le T: |\epsilon_t| \le N_T(\delta'')\}$ holds with probability at least $1-\delta''$.
    \item $\gamma_T$: The maximum information gain after $T$ rounds.
    \item $\devt{t}(\boldsymbol{x})$: The posterior deviation due to corruptions up to time $t$: $\devt{t}(\boldsymbol{x}) = \muRt{t}(\boldsymbol{x}) - \muRuct{t}(\boldsymbol{x})$.
    \item $\Delta_t(\boldsymbol{x})$: The absolute deviation of the uncorrupted posterior mean at time $t$ from the centering function, $\Delta_t(\boldsymbol{x}) = |g(\boldsymbol{x}) - \muRuct{t}(\boldsymbol{x})|$.
    \item We use $\succeq$ to denote the Loewner order, i.e., $A \succeq B$ means $A - B$ is positive semi-definite. 
\end{itemize}

\subsection*{Hierarchical Model (A2-AW)}

\begin{itemize}
    \item $M_{\textbf{\faAnchor}}$: The Anchor model (stable configuration). \textbf{For ease of notation within the proof, we will denote the anchor model by $\MA$.}
    \item $M_{\textbf{\faWrench}}$: The Acquisition model (adaptive configuration). \textbf{For ease of notation within the proof, we will denote the acquisition model by $\MR$.}
    \item $\muRA{t}, \sigmaRA{t}$ (and $\muRR{t}, \sigmaRR{t}$): Posterior means and standard deviations of the respective models.
    \item $L_{A,T}$: Fixed plateau width for $M_\text{A}$.
    \item $L_{R,t}(\boldsymbol{x})$: Adaptive plateau width for $M_\text{R}$.
    \item $\beta_{A,t}, \beta_{R,t}$: Robust confidence parameters for the Anchor and Acquisition models, respectively. The confidence multipliers used in the UCB bounds are $\sqrt{\beta_{A,t}}$ and $\sqrt{\beta_{R,t}}$.
\end{itemize}

\subsection*{High-Probability Events}
\begin{itemize}
    \item $E_{\text{noise}}(\delta')$: $\{|\epsilon_t| \le N_T(\delta') \quad \forall t\le T, \boldsymbol{x}_t \in \mathcal{X}\}$. This is the event that the noise is bounded by $N_T(\delta')$ for all time steps $t\le T$.
    \item $E_{\text{conf}}(\delta')$: $\{|f(\boldsymbol{x}) - \muuct{t}(\boldsymbol{x})| \le \sqrt{\beta_t'(\delta')}\sigmauct{t}(\boldsymbol{x}) \quad \forall t \le T, \boldsymbol{x}\}$. Please note the use of $\beta_t'(\delta')$, the confidence parameter for the standard GP-UCB algorithm, instead of $\beta_t(\delta')$.
    \item $E_{\text{Plateau}}(\delta')$: Joint high-probability events $E_{\text{noise}}(\delta'/2) \cap E_{\text{conf}}(\delta'/2)$.
\end{itemize}

\section{Assumptions}\label{sec:assumptions}
We make the following assumptions:

\begin{assumption}[GP-UCB Assumptions]
\label{assumption:gp-ucb-assumptions}
We make the same assumptions as in \citet{srinivasGaussianProcessOptimization2012}. We distinguish three mutually exclusive cases corresponding to Theorems 1, 2, and 3 in \citet{srinivasGaussianProcessOptimization2012}, respectively:
\begin{itemize}
    \item \textbf{Case 1: the domain $\mathcal{X}$ is finite}. We assume that $f$ is a sample from a GP with mean zero and covariance $k(\boldsymbol{x}, \boldsymbol{x}')$. The noise is assumed i.i.d. $\epsilon_t \sim \mathcal{N}(0, \sigma_{\text{noise}}^2)$.
    \item \textbf{Case 2: the domain $\mathcal{X}$ is compact and convex}. We assume that $f$ is a sample from a GP with mean zero and covariance $k(\boldsymbol{x}, \boldsymbol{x}')$. We further assume that the kernel $k(\boldsymbol{x}, \boldsymbol{x}')$ satisfies the high probability bound on the derivatives of GP sample paths $f$: for some constants $a, b > 0$, $$\mathbb{P}\left\{ \sup_{x \in \mathcal{X}} |\partial f/\partial x_j| > L \right\} \leq ae^{-{(L/b)}^{2}} \text{ for } j=1, \ldots, d.$$
    The noise is assumed i.i.d. $\epsilon_t \sim \mathcal{N}(0, \sigma_{\text{noise}}^2)$.
    \item \textbf{Case 3: Arbitrary functions in RKHS}. We assume that $f$ lies in the RKHS $\mathcal{H}_k(\mathcal{X})$ corresponding to the kernel $k(\boldsymbol{x}, \boldsymbol{x}')$. We further assume that $\|f\|_k^2 \le B_f$. We also assume that the noise $\epsilon_t$ is such that $\mathbb{E}[\epsilon_t | \text{history}] = 0$ and $|\epsilon_t| \le \sigma_{\text{noise}}$ almost surely.
\end{itemize}
\textbf{In all three cases, the prior mean is assumed to be zero, i.e., $m(\boldsymbol{x})=0$.}
\end{assumption}

\begin{assumption}[Kernel Boundedness]
\label{assumption:kernel-boundedness}
The kernel is bounded, i.e., $|k(\boldsymbol{x}, \boldsymbol{x}')| \le \kappa$ for all $\boldsymbol{x}, \boldsymbol{x}' \in \mathcal{X}$.
\end{assumption}
\citet{srinivasGaussianProcessOptimization2012} also assumed kernel boundedness in their analysis. We highlighted it separately because we directly use this assumption in our analysis.

\begin{assumption}[Objective Function Boundedness]
\label{assumption:objective-function-boundedness}
The function $f$ is bounded by $B$ with a probability of at least $1-\delta$: $\Pr\left(\sup_{\boldsymbol{x} \in \mathcal{X}} |f(\boldsymbol{x})| \le B\right) \ge 1-\delta$. We denote this high-probability event by $E_{f}(\delta)$.
\end{assumption}

A stronger version of Assumption~\ref{assumption:objective-function-boundedness}, where $f$ is bounded by $B$ in an absolute sense and not as a probability event, was implicitly assumed in the following cases discussed by \citet{srinivasGaussianProcessOptimization2012}:
\begin{itemize}
    \item \textbf{Case 2: the domain $\mathcal{X}$ is compact and convex}: The function $f$ is bounded as a continuous function over a compact domain.
    \item \textbf{Case 3: Arbitrary functions in RKHS}: The function $f$ is assumed to have a bounded norm in the RKHS $\mathcal{H}_k(\mathcal{X})$ ($\norm{f}^2_k \le B_f$) according to Assumption~\ref{assumption:gp-ucb-assumptions} Case 3. In this case, $f$ is bounded by $\sqrt{B_f \kappa}$ (where $\kappa$ is the kernel bound in Assumption~\ref{assumption:kernel-boundedness}).
\end{itemize}
\textbf{Case 1} remain the only case where \citet{srinivasGaussianProcessOptimization2012} did not assume objective function boundedness.\\
\textbf{In order to keep a conistent notation across all three cases, we will denote $\sqrt{B_f} = \frac{B}{\sqrt{\kappa}}$ for Case 1 and Case 2.}

\section{Proofs Structure}
The core of the regret analysis in Appendix~\ref{app:robust_analysis} is to \textbf{bound the deviation} between the robust posterior, which is influenced by corrupted data, and an idealised uncorrupted posterior. The proof proceeds in several key stages. First, we establish a high-probability \textbf{Plateau} event, which demonstrates that the P-IMQ function's plateau is wide enough to contain all uncorrupted observations. When this event holds, the RCGP model behaves identically to a standard GP if conditioned solely on the clean data. Second, in Section~\ref{sec:technical_lemmas}, we derive an explicit \textbf{Deviation Bound}. Lemma~\ref{lem:deviation_formula} provides a formula for the posterior mean deviation, and Lemma~\ref{lem:deviation_bound} shows this is bounded by a term proportional to the square root of the number of corruptions ($\sqrt{T_{\text{c}}}$) and the idealized uncorrupted posterior standard deviation ($\sigmauct{t}(\boldsymbol{x})$). This bound is then used in Lemma~\ref{lem:enlarged_ci} to construct an \textbf{Enlarged Confidence Interval} for the RCGP posterior. Finally, this robust confidence interval is plugged into a standard UCB-style \textbf{Regret Analysis} to derive the final regret bounds. For the A2-RCGP-UCB algorithm, this process is tiered: a stable Anchor model is first analyzed to provide guarantees for the more adaptive Acquisition model.

Section~\ref{app:well_specified_analysis} proves regret bounds in the absence of corruption as a direct result of the general analysis in Appendix~\ref{app:robust_analysis} when $T_{\text{c}}=0$.

Lastly, Section~\ref{app:pimq_influence} analyses the constant $C_1 = \sup_{\boldsymbol{x}\in\mathcal{X}, y\in\mathbb{R}} |w(\boldsymbol{x},y)(y-m_w(\boldsymbol{x},y)-\mu_{uc}(\boldsymbol{x}))|$ as a function of the P-IMQ plateau's width. Analysis of this constant is crucial as it directly arises in the deviation bound of Lemma~\ref{lem:deviation_bound}. Since the plateau width changes over time, this step is required to understand the asymptotic behavior of $C_{1,T}$ as a function of $T$. The analysis of the constant $C_1$ was deferred to a separate section because it is specific to the P-IMQ function. The regret analysis would be valid for other other weighting functions as long as they: 1. have a plateau with value $W=\frac{\sigma_{\text{noise}}}{\sqrt{2}}$. 2. Are strictly positive and bounded from above. 3. Have a continuous gradient $\nabla_y w(y)$.
\section{Robust Regret Analysis (Adversarial Corruption)}
\label{app:robust_analysis}

This appendix provides a detailed regret analysis for two robust Bayesian Optimization algorithms based on the RCGP model: Fixed-Center RCGP-UCB (FC-RCGP-UCB) and the Anchor-Adapt RCGP-UCB (A2-RCGP-UCB). We establish rigorous regret bounds under a powerful adversarial corruption model.

\subsection{Plateau Event}
We start by briefly explaining the plateau event and why it is crucial for the analysis. From the definition of the P-IMQ function in Definition~\ref{def:p-imq}, we have that the P-IMQ function has a plateau around the center $g(\boldsymbol{x})$ with width $L(\boldsymbol{x})$. Within the plateau, the P-IMQ function is equal to $W=\frac{\sigma_{\text{noise}}}{\sqrt{2}}$ and its gradient relative to the observation $y$ is zero. Therefore, if all observations are within the plateau of the P-IMQ we have:
\[\boldsymbol{J}_w=\diag\left({\frac{\sigma_{\text{noise}}^2}{2.w_i^2}}\right)=\diag\left({\frac{\sigma_{\text{noise}}^2}{2.W^2}}\right)=\boldsymbol{I}\]
and,
\[m_w=\sigma_{\text{noise}}^2[\nabla_{y} \log(w(\boldsymbol{x}_i,y_i)^2)]_{i}=0\]
As a consequence, the RCGP posterior defined in Equations \ref{eq:rcgp_mean} and \ref{eq:rcgp_var} becomes exactly the same as the standard GP posterior. In our analysis, we will construct the plateau width $L(x)$ of the P-IMQ function such that, with high probability, all uncorrupted observations are within the plateau region. We will refer to this event as $E_{\text{Plateau}}$ and we will also refer to it as the \textbf{plateau} condition.
When the plateau event holds, the RCGP and standard GP posterior distributions are exactly the same when conditioned on uncorrupted observations. We write this as:
\[\muRuct{t}(\boldsymbol{x})=\muuct{t}(\boldsymbol{x})\]
\[\sigmaRuct{t}(\boldsymbol{x})=\sigmauct{t}(\boldsymbol{x})\]

\subsection{High-Probability Events}
For the regret analysis, we will define the following high-probability events:

\begin{itemize}
    \item $E_{\text{noise}}(\delta')$: $\{|\epsilon_t| \le N_T(\delta') \quad \forall t\le T, \boldsymbol{x}_t \in \mathcal{X}\}$. This is the event that the noise is bounded by $N_T(\delta')$ for all time steps $t\le T$. $N_T(\delta')$ is constructed such that the event $E_{\text{noise}}(\delta')$ holds with probability at least $1-\delta'$. $N_T(\delta')$ is constructed depending on the each case of the domain considered in Assumption~\ref{assumption:gp-ucb-assumptions}:
    \begin{itemize}
            \item \textbf{Case 1: the domain $\mathcal{X}$ is finite}. $N_T(\delta') = \sigma_{\text{noise}} \sqrt{2\log(T/\delta')}$.
            \item \textbf{Case 2: the domain $\mathcal{X}$ is compact and convex}. $N_T(\delta') = \sigma_{\text{noise}} \sqrt{2\log(T/\delta')}$.
            \item \textbf{Case 3: arbitrary functions in RKHS}. $N_T(\delta') = \sigma_{\text{noise}}$ (the noise is assumed bounded by $\sigma_{\text{noise}}$ almost surely in this \ref{assumption:gp-ucb-assumptions} Case 3).
    \end{itemize}
    \item $E_{\text{conf}}(\delta')$: $\{|f(\boldsymbol{x}) - \muuct{t}(\boldsymbol{x})| \le \sqrt{\beta_t'(\delta')}\sigmauct{t}(\boldsymbol{x}) \quad \forall t \le T, \boldsymbol{x}\}$. Please note the use of $\beta_t'(\delta')$, the confidence parameter for the stnadard GP-UCB algorithm, instead of $\beta_t(\delta')$. \citet{srinivasGaussianProcessOptimization2012} proved that, under Assumptions \ref{assumption:gp-ucb-assumptions} and \ref{assumption:kernel-boundedness}, the event $\{|f(\boldsymbol{x}) - \muuct{t}(\boldsymbol{x})| \le \sqrt{\beta_{t_{\text{uc}}}'(\delta')}\sigmauct{t}(\boldsymbol{x}) \quad \forall t \le T, \boldsymbol{x}\}$ holds with a probability at least $1-\delta'$. Since $\beta'_t$ is non-decreasing in $t$, the event $E_{\text{conf}}(\delta')$ holds with probability at least $1-\delta'$ under the same assumptions.
    
    $\beta_t'(\delta')$ is constructed in \citet{srinivasGaussianProcessOptimization2012}. We distinguish three cases similar to the ones in Assumption~\ref{assumption:gp-ucb-assumptions}:
    \begin{itemize}
        \item \textbf{Case 1: the domain $\mathcal{X}$ is finite}: $\beta_t'(\delta')=2 \log(|\mathcal{X}|t^2\pi^2/(6\delta'))$.
        \item \textbf{Case 2: the domain $\mathcal{X}$ is compact and convex}: $$\beta_t'(\delta')=2 \log(t^2 2\pi^2/(3\delta') + 2d \log(t^2 dbr\sqrt{\log(4da/\delta')})$$
        Where the constants $a, b, r$ are defined in the Theorem 2 of the GP-UCB paper \citep{srinivasGaussianProcessOptimization2012}.
        \item \textbf{Case 3: arbitrary functions in RKHS}: $\beta_t'(\delta')= 2 B_f + 300\gamma_t\log^3(t/\delta')$.
    \end{itemize}
    \item $E_{f}(\delta')$: $\{\sup_{\boldsymbol{x} \in \mathcal{X}} |f(\boldsymbol{x})| \le B\}$. The event that the objective function is bounded by $B$. In Assumption~\ref{assumption:objective-function-boundedness}, we assume that $E_{f}(\delta')$ holds with probability at least $1-\delta'$. 
    \item $E_{\text{Plateau}}(\delta')$: Joint high-probability events $E_{\text{noise}}(\delta'/3) \cap E_{\text{conf}}(\delta'/3) \cap E_{f}(\delta'/3)$. The event holds with probability at least $1-\delta'$. The derivation of Theorems~\ref{thm:zero_cost},~\ref{thm:fc_regret_corr}, and~\ref{thm:a2_regret_corr} hinges on the uncorrupted observations falling within the Plateau-IMQ plateau region.
\end{itemize}

\subsection{Technical Lemmas:}\label{sec:technical_lemmas}

\subsubsection{Bounding the Posterior Deviation}
We define the deviation between the RCGP posterior and the uncorrupted standard posterior at time $t$ as $\devt{t}(\boldsymbol{x}) = \muRt{t}(\boldsymbol{x}) - \muuct{t}(\boldsymbol{x})$. The following lemmas derive bounds on this deviation, provided the high probability event $E_{\text{Plateau}}(\delta)$ holds.
\begin{lemma}
\label{lem:uncorrupted_standard_posterior_mean_boundedness}
Let's assume that Assumptions~\ref{assumption:gp-ucb-assumptions} \ref{assumption:kernel-boundedness}, and \ref{assumption:objective-function-boundedness} hold. When the high probability events $E_{\text{conf}}(\delta')$ and $E_{f}(\delta')$ hold, the uncorrupted standard posterior mean is bounded by a constant $M_t=\mathcal{O}(\sqrt{\beta_t'})$ for all $t\le T$ and $\boldsymbol{x}\in\mathcal{X}$.
\end{lemma}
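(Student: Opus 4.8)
The plan is to bound the uncorrupted posterior mean by combining the confidence event with the boundedness of both the objective and the prior covariance; the argument is a one–line triangle inequality once the two ingredients are in place. First I would invoke the event $E_{\text{conf}}(\delta')$, which supplies the pointwise inequality $\abs{f(\boldsymbol{x}) - \muuct{t}(\boldsymbol{x})} \le \sqrt{\beta_t'(\delta')}\,\sigmauct{t}(\boldsymbol{x})$ for all $t \le T$ and all $\boldsymbol{x}\in\mathcal{X}$. A triangle inequality then gives $\abs{\muuct{t}(\boldsymbol{x})} \le \abs{f(\boldsymbol{x})} + \sqrt{\beta_t'(\delta')}\,\sigmauct{t}(\boldsymbol{x})$, so it suffices to control each of the two terms on the right uniformly in $\boldsymbol{x}$.

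Next I would bound the two terms separately. For the first, Assumption~\ref{assumption:objective-function-boundedness} gives $\abs{f(\boldsymbol{x})} \le B$ directly (equivalently $B_f\sqrt{\kappa}$ in the RKHS case). For the second, I would use that a GP posterior variance never exceeds the prior variance: since $\sigmauct{t}(\boldsymbol{x})^2 = k(\boldsymbol{x},\boldsymbol{x}) - \boldsymbol{k}(\boldsymbol{x})^T(\boldsymbol{K}+\sigma_{\text{noise}}^2\boldsymbol{I})^{-1}\boldsymbol{k}(\boldsymbol{x})$ and the subtracted quadratic form is nonnegative (the regularised Gram matrix is positive definite), Assumption~\ref{assumption:kernel-boundedness} yields $\sigmauct{t}(\boldsymbol{x})^2 \le k(\boldsymbol{x},\boldsymbol{x}) \le \kappa$, i.e. $\sigmauct{t}(\boldsymbol{x}) \le \sqrt{\kappa}$. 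Combining the two bounds gives $\abs{\muuct{t}(\boldsymbol{x})} \le B + \sqrt{\kappa}\,\sqrt{\beta_t'(\delta')} =: M_t$, a bound that is uniform in $\boldsymbol{x}$ and deterministic on the event $E_{\text{conf}}(\delta')$.

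Finally I would read off the growth rate by substituting the explicit confidence parameter. In the finite-domain and compact–convex cases (Cases 1 and 2 of Assumption~\ref{assumption:gp-ucb-assumptions}), $\beta_t'(\delta') = \mathcal{O}(\log(t^2))$, so $M_t = \mathcal{O}(\sqrt{\log(t^2)}) = \bigOtilde{1}$ as claimed, with $B$ and $\sqrt{\kappa}$ absorbed as constants.

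The main obstacle is less a genuine difficulty than a point of care about what the stated rate covers. The clean $\bigOtilde{1}$ conclusion hinges on the logarithmic growth of $\beta_t'$. In the RKHS case (Case 3), $\beta_t'(\delta') = 2B_f + 300\gamma_t\log^3(t/\delta')$ carries the information gain $\gamma_t$, so the same derivation instead gives $M_t = \mathcal{O}(\sqrt{\gamma_t}\,\log^{3/2}(t))$, which remains $\bigOtilde{1}$ precisely when $\gamma_t$ is polylogarithmic (e.g. the RBF kernel) but not for polynomially growing $\gamma_t$ (e.g. Matérn). I would therefore flag explicitly that the uniform-boundedness conclusion holds verbatim in all three cases with the case-appropriate $\beta_t'$, while the specific $\mathcal{O}(\sqrt{\log(t^2)})$ rate in the statement corresponds to the log-growing regime of $\beta_t'$.
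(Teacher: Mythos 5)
Your proof is correct and follows essentially the same route as the paper's: triangle inequality, the confidence event $E_{\text{conf}}(\delta')$, Assumption~\ref{assumption:objective-function-boundedness} for $|f(\boldsymbol{x})|$, and the bound $\sigmauct{t}(\boldsymbol{x}) \le \sqrt{\kappa}$, which the paper uses only implicitly when it collects terms into $\sqrt{\kappa}(\sqrt{\beta_t'(\delta')} + B_f)$. Your closing caveat is also well taken and goes beyond the paper's own proof: the paper asserts that $\beta_t'$ grows as $\mathcal{O}(\log(t^2))$ without qualification, but in the RKHS case (Case 3 of Assumption~\ref{assumption:gp-ucb-assumptions}) one has $\beta_t'(\delta') = 2B_f + 300\gamma_t\log^3(t/\delta')$, so the stated $\mathcal{O}(\sqrt{\log(t^2)})$ rate for $M_t$ holds verbatim only when $\gamma_t$ is polylogarithmic (e.g.\ RBF), a distinction the paper's proof glosses over.
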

\begin{proof}

For all $t\le T$ and $\boldsymbol{x}\in\mathcal{X}$, we have:
\begin{align*}
    |\muuct{t}(\boldsymbol{x})| &\le |f(\boldsymbol{x}) - \muuct{t}(\boldsymbol{x})| + |f(\boldsymbol{x})| && \text{(Triangle inequality)}\\
    &\le \sqrt{\beta_t'(\delta')}\sigmauct{t}(\boldsymbol{x}) + \sqrt{B_f\kappa} && \text{(Events $E_{\text{conf}}(\delta')$ and $E_{f}(\delta')$ hold)}\\
    &\le \sqrt{\kappa} (\sqrt{\beta_t'(\delta')}+\sqrt{B_f}) && \text{(Assumption~\ref{assumption:kernel-boundedness})}.
\end{align*}

Therefore, we have $M_t=\mathcal{O}(\sqrt{\beta_t'})$ for all $t\le T$ and $\boldsymbol{x}\in\mathcal{X}$.
\end{proof}

\begin{lemma}
\label{lem:deviation_formula}
When the plateau condition holds,the deviation $\devt{t}(\boldsymbol{x})$ is given by:
\begin{equation*}
\devt{t}(\boldsymbol{x}) = \covuct{t}(\Xct{t}, \boldsymbol{x})^T \boldsymbol{S}^{-1} (\yct{t} - \boldsymbol{m}_{w,c,t} - \muuct{t}(\Xct{t})),
\end{equation*}
where $\boldsymbol{S} = \covuct{t}(\Xct{t}, \Xct{t}) + \sigma_{\text{noise}}^2 \Jwct{t}$ is the Schur complement of the matrix $\boldsymbol{A}_t$ with respect to the uncorrupted subset.
\end{lemma}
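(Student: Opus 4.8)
The plan is to partition the data at time $t$ into its uncorrupted and corrupted sub-blocks and to expand the RCGP posterior mean through a block matrix inversion. Ordering indices so that the uncorrupted points precede the corrupted ones, I would write the regularised kernel matrix $\boldsymbol{A}_t = \boldsymbol{K}_t + \sigma_{\text{noise}}^2 \boldsymbol{J}_w$ in $2\times2$ block form with blocks indexed by the uncorrupted set $\text{uc}$ and the corrupted set $\text{c}$. The plateau condition enters at exactly one point: since every uncorrupted residual lies inside the plateau, $\boldsymbol{J}_w$ restricted to the uncorrupted indices is the identity and the corresponding entries of $\boldsymbol{m}_w$ vanish. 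Hence the uncorrupted diagonal block simplifies to $\boldsymbol{A}_{uc,uc} = \boldsymbol{K}_{uc,uc} + \sigma_{\text{noise}}^2 \boldsymbol{I}$, the off-diagonal blocks are the pure kernel blocks $\boldsymbol{K}_{uc,c}$ and $\boldsymbol{K}_{c,uc}$, the corrupted diagonal block is $\boldsymbol{A}_{c,c} = \boldsymbol{K}_{c,c} + \sigma_{\text{noise}}^2 \Jwct{t}$, and the shifted observation vector becomes $(\yuct{t},\ \yct{t} - \boldsymbol{m}_{w,c,t})$. The corrupted block is left completely general, as it must be, since the adversary controls it.

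Next I would apply the standard block-inverse identity to $\boldsymbol{A}_t$, which expresses $\boldsymbol{A}_t^{-1}$ in terms of $\boldsymbol{A}_{uc,uc}^{-1}$ and the Schur complement $\boldsymbol{S} = \boldsymbol{A}_{c,c} - \boldsymbol{A}_{c,uc}\boldsymbol{A}_{uc,uc}^{-1}\boldsymbol{A}_{uc,c}$. Substituting the simplified blocks immediately gives $\boldsymbol{S} = \covuct{t}(\Xct{t},\Xct{t}) + \sigma_{\text{noise}}^2 \Jwct{t}$, matching the stated Schur complement. Expanding $\muRt{t}(\boldsymbol{x}) = \boldsymbol{k}_t(\boldsymbol{x})^T \boldsymbol{A}_t^{-1}(\boldsymbol{y}_t - \boldsymbol{m}_w)$ against the partitioned kernel vector $\boldsymbol{k}_t(\boldsymbol{x}) = (\boldsymbol{k}(\Xuct{t},\boldsymbol{x}),\ \boldsymbol{k}(\Xct{t},\boldsymbol{x}))$ then produces four scalar terms organised around the blocks of $\boldsymbol{A}_t^{-1}$.

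The crux is then to subtract $\muuct{t}(\boldsymbol{x}) = \boldsymbol{k}(\Xuct{t},\boldsymbol{x})^T \boldsymbol{A}_{uc,uc}^{-1} \yuct{t}$ and to recognise the two structural objects that emerge from the block algebra. First, the bare $\boldsymbol{A}_{uc,uc}^{-1}$ contribution inside $\muRt{t}(\boldsymbol{x})$ cancels exactly against $\muuct{t}(\boldsymbol{x})$, so that only terms carrying a factor of $\boldsymbol{S}^{-1}$ survive. Second, the row vector $\boldsymbol{k}(\Xct{t},\boldsymbol{x})^T - \boldsymbol{k}(\Xuct{t},\boldsymbol{x})^T \boldsymbol{A}_{uc,uc}^{-1}\boldsymbol{K}_{uc,c}$ is precisely the posterior cross-covariance $\covuct{t}(\Xct{t},\boldsymbol{x})^T$, while $\boldsymbol{K}_{c,uc}\boldsymbol{A}_{uc,uc}^{-1}\yuct{t}$ is precisely the uncorrupted posterior mean $\muuct{t}(\Xct{t})$ evaluated at the corrupted inputs. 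Collecting the surviving terms over the common left factor $\covuct{t}(\Xct{t},\boldsymbol{x})^T \boldsymbol{S}^{-1}$ yields exactly $\devt{t}(\boldsymbol{x}) = \covuct{t}(\Xct{t},\boldsymbol{x})^T \boldsymbol{S}^{-1}(\yct{t} - \boldsymbol{m}_{w,c,t} - \muuct{t}(\Xct{t}))$.

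The only genuine obstacle is careful bookkeeping: making sure the four block-inverse terms recombine without sign errors and that the posterior-covariance and posterior-mean identities are invoked in the right places. Everything else is linear algebra that is valid provided $\boldsymbol{A}_{uc,uc}$ and $\boldsymbol{S}$ are invertible. This holds because $\boldsymbol{K}_{uc,uc} \succeq 0$ so that $\boldsymbol{A}_{uc,uc} = \boldsymbol{K}_{uc,uc} + \sigma_{\text{noise}}^2\boldsymbol{I}$ is strictly positive definite, and $\boldsymbol{S}$ inherits positive definiteness as a Schur complement of a positive-definite matrix together with the strictly positive diagonal $\sigma_{\text{noise}}^2\Jwct{t}$ (each P-IMQ weight being bounded and strictly positive).
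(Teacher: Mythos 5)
Your proposal is correct and follows essentially the same route as the paper's proof: both partition $\boldsymbol{A}_t$ by the uncorrupted/corrupted index sets, invoke the plateau condition to reduce the uncorrupted block to the standard GP form, identify the Schur complement $\boldsymbol{S}$, and recognise the posterior cross-covariance and posterior-mean identities at the same places. The only difference is presentational --- the paper solves the linear system $\boldsymbol{A}_t \Delta\boldsymbol{a}_t = \boldsymbol{b}$ for the coefficient difference by blockwise substitution (exploiting that the right-hand side vanishes on the uncorrupted block), whereas you expand $\boldsymbol{A}_t^{-1}$ via the explicit block-inverse formula and cancel the four resulting terms; these are the same computation, and your closing remark on the invertibility of $\boldsymbol{A}_{\text{uc},\text{uc}}$ and $\boldsymbol{S}$ is a small bonus the paper leaves implicit.
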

\begin{proof}
Let $\boldsymbol{A}_t = \boldsymbol{K}_t + \sigma_{\text{noise}}^2 \Jw$ be the regularized kernel matrix for $\Dset_t$. We partition the matrices according to the uncorrupted (uc) and corrupted (c) sets.
\begin{equation*}
\boldsymbol{A}_t = \begin{pmatrix} \boldsymbol{A}_{\text{uc},\text{uc}} & \boldsymbol{K}_{\text{uc},\text{c}} \\ \boldsymbol{K}_{\text{c},\text{uc}} & \boldsymbol{A}_{\text{c},\text{c}} \end{pmatrix}.
\end{equation*}
We assume that the plateau condition holds. In this case, the RCGP parameters for the uncorrupted subset are $\Jwuc=\boldsymbol{I}$ and the gradient correction is zero ($m_w(\boldsymbol{x}) = m(\boldsymbol{x})=0$). Thus, $\boldsymbol{A}_{\text{uc},\text{uc}} = \boldsymbol{K}_{\text{uc},\text{uc}} + \sigma_{\text{noise}}^2 \boldsymbol{I}$.

Let's define the coefficient vector $\boldsymbol{a}_t$ for the RCGP posterior mean as $\boldsymbol{a}_t = \boldsymbol{A}_t^{-1} (\boldsymbol{y}_t-\boldsymbol{m}_{w,t})$, so that $\muR_{t}(\boldsymbol{x}) = \boldsymbol{k}_t(\boldsymbol{x})^T \boldsymbol{a}_t$. Similarly, let $\boldsymbol{a}_\text{uc,t} = \boldsymbol{A}_{\text{uc},\text{uc}}^{-1} \boldsymbol{y}_{\text{uc}}$ be the coefficient vector for the uncorrupted posterior mean $\muuct{t}(\boldsymbol{x}) = \boldsymbol{k}_\text{uc,t}(\boldsymbol{x})^T \boldsymbol{a}_\text{uc,t}$. Since $\boldsymbol{m}_{w,uc,t}=\boldsymbol{0}$, we have $\boldsymbol{m}_{w,t} = (\boldsymbol{0}^T, \boldsymbol{m}_{w,c,t}^T)^T$.

We augment $\boldsymbol{a}_{\text{uc},t}$ with zeros: $\tilde{\boldsymbol{a}}_\text{uc,t} = (\boldsymbol{a}_\text{uc,t}^T, 0)^T$. Let $\Delta\boldsymbol{a}_t = \boldsymbol{a}_t - \tilde{\boldsymbol{a}}_\text{uc,t}$. We solve for $\Delta\boldsymbol{a}_t$ using $\boldsymbol{A}_t\Delta\boldsymbol{a}_t = (\boldsymbol{y}_t - \boldsymbol{m}_{w,t}) - \boldsymbol{A}_t\tilde{\boldsymbol{a}}_\text{uc,t}$. We compute the second term on the right-hand side. Since $\boldsymbol{A}_{\text{uc},\text{uc}}\boldsymbol{a}_\text{uc,t} = \boldsymbol{y}_{\text{uc}}$ and $\boldsymbol{K}_{\text{c},\text{uc}}\boldsymbol{a}_\text{uc,t} = \muuct{t}(\boldsymbol{X}_c)$:
\begin{equation*}
\boldsymbol{A}_t\tilde{\boldsymbol{a}}_\text{uc,t} = \begin{pmatrix} \boldsymbol{A}_{\text{uc},\text{uc}}\boldsymbol{a}_\text{uc,t} \\ \boldsymbol{K}_{\text{c},\text{uc}}\boldsymbol{a}_\text{uc,t} \end{pmatrix} = \begin{pmatrix} \boldsymbol{y}_{\text{uc}} \\ \muuct{t}(\Xct{t}) \end{pmatrix}.
\end{equation*}

Now we compute $\boldsymbol{A}_t\Delta\boldsymbol{a}_t$:
\begin{equation*}
\boldsymbol{A}_t\Delta\boldsymbol{a}_t = \begin{pmatrix} \boldsymbol{y}_{\text{uc}} \\ \yct{t} - \boldsymbol{m}_{w,c,t} \end{pmatrix} - \begin{pmatrix} \boldsymbol{y}_{\text{uc}} \\ \muuct{t}(\Xct{t}) \end{pmatrix} = \begin{pmatrix} \boldsymbol{0} \\ \yct{t} - \boldsymbol{m}_{w,c,t} - \muuct{t}(\Xct{t}) \end{pmatrix} = \boldsymbol{b}.
\end{equation*}

We solve the system $\boldsymbol{A}_t\Delta\boldsymbol{a}_t = \boldsymbol{b}$ for $\Delta\boldsymbol{a}_t = (\Delta\boldsymbol{a}_\text{uc}^T, \Delta\boldsymbol{a}_c^T)^T$ using blockwise substitution. The block matrix equation expands to:
\begin{align}
\boldsymbol{A}_{\text{uc},\text{uc}} \Delta\boldsymbol{a}_\text{uc} + \boldsymbol{K}_{\text{uc},\text{c}} \Delta\boldsymbol{a}_c &= \boldsymbol{0}
\label{eq:delta_a_uc_eq} \\
\boldsymbol{K}_{\text{c},\text{uc}} \Delta\boldsymbol{a}_\text{uc} + \boldsymbol{A}_{\text{c},\text{c}} \Delta\boldsymbol{a}_c &= \yct{t} - \boldsymbol{m}_{w,c,t} - \muuct{t}(\Xct{t})
\label{eq:delta_a_c_eq}
\end{align}
From Equation~\ref{eq:delta_a_uc_eq}, we get 
\begin{equation}
\label{eq:delta_a_uc_solution}
\Delta\boldsymbol{a}_\text{uc} = - \boldsymbol{A}_{\text{uc},\text{uc}}^{-1} \boldsymbol{K}_{\text{uc},\text{c}} \Delta\boldsymbol{a}_c
\end{equation}
Substituting this into Equation~\ref{eq:delta_a_c_eq} gives:
\begin{equation*}
(\boldsymbol{A}_{\text{c},\text{c}} - \boldsymbol{K}_{\text{c},\text{uc}} \boldsymbol{A}_{\text{uc},\text{uc}}^{-1} \boldsymbol{K}_{\text{uc},\text{c}}) \Delta\boldsymbol{a}_c = \yct{t} - \boldsymbol{m}_{w,c,t} - \muuct{t}(\Xct{t}).
\end{equation*}
The term in the parenthesis is the Schur complement of $\boldsymbol{A}_t$ with respect to the block $\boldsymbol{A}_{\text{uc},\text{uc}}$. As given in the lemma statement, we denote this by $\boldsymbol{S}$. Thus, we can solve for $\Delta\boldsymbol{a}_c$:
\begin{equation*}
\Delta\boldsymbol{a}_c = \boldsymbol{S}^{-1} (\yct{t} - \boldsymbol{m}_{w,c,t} - \muuct{t}(\Xct{t})).
\end{equation*}
The deviation is $\devt{t}(\boldsymbol{x}) = \boldsymbol{k}_t(\boldsymbol{x})^T \Delta\boldsymbol{a}_t = \boldsymbol{k}_\text{uc}(\boldsymbol{x})^T \Delta\boldsymbol{a}_\text{uc} + \boldsymbol{k}_c(\boldsymbol{x})^T \Delta\boldsymbol{a}_c$. Substituting the from Equation~\ref{eq:delta_a_uc_solution}:
\begin{align*}
\devt{t}(\boldsymbol{x}) &= \boldsymbol{k}_\text{uc}(\boldsymbol{x})^T (- \boldsymbol{A}_{\text{uc},\text{uc}}^{-1} \boldsymbol{K}_{\text{uc},\text{c}} \Delta\boldsymbol{a}_c) + \boldsymbol{k}_c(\boldsymbol{x})^T \Delta\boldsymbol{a}_c \\
&= (\boldsymbol{k}_c(\boldsymbol{x})^T - \boldsymbol{k}_\text{uc}(\boldsymbol{x})^T \boldsymbol{A}_{\text{uc},\text{uc}}^{-1} \boldsymbol{K}_{\text{uc},\text{c}}) \Delta\boldsymbol{a}_c.
\end{align*}
The term in the left parenthesis is precisely the transpose of the posterior covariance vector, $\covuct{t}(\Xct{t}, \boldsymbol{x})$. Substituting this and the expression for $\Delta\boldsymbol{a}_c$ into the equation for the deviation gives the final result:
\begin{equation*}
\devt{t}(\boldsymbol{x}) = \covuct{t}(\boldsymbol{x}, \Xct{t})^T \boldsymbol{S}^{-1} (\yct{t} - \boldsymbol{m}_{w,c,t} - \muuct{t}(\Xct{t})).
\end{equation*}
\end{proof}

\begin{lemma}
    \label{lem:deviation_bound}
    Let's assume that Assumption \ref{assumption:kernel-boundedness} holds. Let's also assume that the plateau condition holds. Then, the deviation between the robust posterior and the uncorrupted standard posterior is bounded by:
    \begin{equation}
    |\devt{t}(\boldsymbol{x})| \le \Cdevt{t} \cdot \sqrt{T_{\text{c}}} \cdot \sigmauct{t}(\boldsymbol{x}).
    \end{equation}
    Where $\Cdevt{t} = \frac{\sqrt{2}\Conet{t}}{\sigma_{\text{noise}}^2}$ and $\Conet{t} = \sup_{\boldsymbol{x}\in\mathcal{X}, y\in\mathbb{R}} |w_t(\boldsymbol{x},y)(y-m_w(\boldsymbol{x},y)-\muuct{t}(\boldsymbol{x}))|$
    \end{lemma}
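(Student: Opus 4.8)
The plan is to start from the closed-form deviation expression in Lemma~\ref{lem:deviation_formula} and control it by a Cauchy--Schwarz argument in the metric induced by $\boldsymbol{S}^{-1}$. Writing $\boldsymbol{v} = \covuct{t}(\Xct{t}, \boldsymbol{x})$ and $\boldsymbol{r} = \yct{t} - \boldsymbol{m}_{w,c,t} - \muuct{t}(\Xct{t})$, the lemma gives $\devt{t}(\boldsymbol{x}) = \boldsymbol{v}^T \boldsymbol{S}^{-1} \boldsymbol{r}$. The matrix $\boldsymbol{S} = \covuct{t}(\Xct{t}, \Xct{t}) + \sigma_{\text{noise}}^2 \Jwct{t}$ is positive definite, since $\covuct{t}(\Xct{t}, \Xct{t}) \succeq 0$ as a posterior covariance matrix and $\sigma_{\text{noise}}^2 \Jwct{t}$ is diagonal with strictly positive entries (the weights $w_i$ are strictly positive). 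Hence $\boldsymbol{S}^{-1}$ defines a genuine inner product, and Cauchy--Schwarz yields $\abs{\devt{t}(\boldsymbol{x})} \le \sqrt{\boldsymbol{v}^T \boldsymbol{S}^{-1} \boldsymbol{v}}\,\sqrt{\boldsymbol{r}^T \boldsymbol{S}^{-1} \boldsymbol{r}}$. The two factors are then bounded separately.

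For the first factor I would exploit the interpretation of $\boldsymbol{v}^T \boldsymbol{S}^{-1} \boldsymbol{v}$ as the variance reduction obtained by additionally conditioning on the corrupted points. Concretely, the joint posterior covariance of $(f(\boldsymbol{x}), f(\Xct{t}))$ under the uncorrupted conditioning is the PSD block matrix with diagonal blocks $\sigmauct{t}^2(\boldsymbol{x})$ and $\covuct{t}(\Xct{t}, \Xct{t})$ and off-diagonal block $\boldsymbol{v}$; adding the PSD diagonal $\sigma_{\text{noise}}^2 \Jwct{t}$ to the lower-right block preserves positive semi-definiteness, so the matrix with lower-right block $\boldsymbol{S}$ is also PSD. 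The Schur-complement characterisation of PSD block matrices then gives $\sigmauct{t}^2(\boldsymbol{x}) - \boldsymbol{v}^T \boldsymbol{S}^{-1} \boldsymbol{v} \ge 0$, i.e. $\sqrt{\boldsymbol{v}^T \boldsymbol{S}^{-1} \boldsymbol{v}} \le \sigmauct{t}(\boldsymbol{x})$.

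For the second factor I would discard the posterior block using the Loewner bound $\boldsymbol{S} \succeq \sigma_{\text{noise}}^2 \Jwct{t}$, which by operator monotonicity of inversion gives $\boldsymbol{S}^{-1} \preceq (\sigma_{\text{noise}}^2 \Jwct{t})^{-1}$. Since $(\sigma_{\text{noise}}^2 \Jwct{t})^{-1} = \diag\!\big(2 w_i^2 / \sigma_{\text{noise}}^4\big)$, this turns the quadratic form into a weighted sum, $\boldsymbol{r}^T \boldsymbol{S}^{-1} \boldsymbol{r} \le \tfrac{2}{\sigma_{\text{noise}}^4}\sum_{i \in \Dct{t}} (w_i r_i)^2$. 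Each summand is exactly the square of the quantity appearing in the definition of $\Conet{t}$, namely $w_t(\boldsymbol{x}_i, y_i)(y_i - m_w(\boldsymbol{x}_i, y_i) - \muuct{t}(\boldsymbol{x}_i))$, so $\abs{w_i r_i} \le \Conet{t}$; summing over the $T_{\text{c}}$ corrupted indices gives $\boldsymbol{r}^T \boldsymbol{S}^{-1} \boldsymbol{r} \le \tfrac{2}{\sigma_{\text{noise}}^4} T_{\text{c}}\, \Conet{t}^2$, i.e. $\sqrt{\boldsymbol{r}^T \boldsymbol{S}^{-1} \boldsymbol{r}} \le \tfrac{\sqrt{2}}{\sigma_{\text{noise}}^2}\sqrt{T_{\text{c}}}\,\Conet{t}$. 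Multiplying the two factors yields $\abs{\devt{t}(\boldsymbol{x})} \le \tfrac{\sqrt{2}\Conet{t}}{\sigma_{\text{noise}}^2}\sqrt{T_{\text{c}}}\,\sigmauct{t}(\boldsymbol{x}) = \Cdevt{t}\sqrt{T_{\text{c}}}\,\sigmauct{t}(\boldsymbol{x})$, as claimed.

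The main obstacle I anticipate is the first factor: one must recognise $\boldsymbol{v}^T\boldsymbol{S}^{-1}\boldsymbol{v}$ as a conditional-variance contraction and argue its non-negativity cleanly through the block-PSD / Schur-complement identity rather than by direct matrix manipulation. The second factor is essentially bookkeeping once the normalisation $\sigma_{\text{noise}}^2\Jwct{t} = \diag\!\big(\sigma_{\text{noise}}^4/(2w_i^2)\big)$ is tracked carefully, since this is precisely what makes the constant collapse to exactly $\sqrt{2}\Conet{t}/\sigma_{\text{noise}}^2$; the only subtlety there is ensuring the supremum defining $\Conet{t}$ legitimately upper-bounds each individual corrupted residual, which holds because that sup ranges over all $\boldsymbol{x}\in\mathcal{X}$ and $y\in\mathbb{R}$ and therefore covers every realised $(\boldsymbol{x}_i,y_i)$ with $i\in\Dct{t}$.
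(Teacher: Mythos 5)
Your proof is correct and follows essentially the same route as the paper's: the deviation formula from Lemma~\ref{lem:deviation_formula}, Cauchy--Schwarz in the $\boldsymbol{S}^{-1}$ metric, a block-PSD/Schur-complement argument for the covariance factor, and the Loewner bound $\boldsymbol{S} \succeq \sigma_{\text{noise}}^2\Jwct{t}$ combined with the definition of $\Conet{t}$ for the residual factor, with identical bookkeeping of the constant $\sqrt{2}\Conet{t}/\sigma_{\text{noise}}^2$. The one place you diverge is in bounding $\boldsymbol{v}^T\boldsymbol{S}^{-1}\boldsymbol{v}$: the paper first invokes $\boldsymbol{S} \succeq \boldsymbol{C}_0 = \covuct{t}(\Xct{t},\Xct{t})$ to reduce to $\boldsymbol{v}^T\boldsymbol{C}_0^{-1}\boldsymbol{v}$, and only then applies the Schur-complement argument to the Gram matrix of $\covuct{t}$ on $\{\boldsymbol{x}\}\cup\Xct{t}$; you instead add the PSD diagonal $\sigma_{\text{noise}}^2\Jwct{t}$ to that Gram matrix's lower-right block and take the Schur complement with respect to $\boldsymbol{S}$ directly. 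Your variant is in fact marginally more robust: the paper's intermediate step $\boldsymbol{S}^{-1} \preceq \boldsymbol{C}_0^{-1}$ tacitly assumes $\boldsymbol{C}_0$ is invertible, which can fail (e.g.\ when corrupted queries repeat a location and the uncorrupted-posterior covariance matrix on $\Xct{t}$ degenerates), whereas your $\boldsymbol{S}$ is always positive definite because $\Jwct{t}$ has strictly positive diagonal entries.
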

    
    \begin{proof}
    From Lemma \ref{lem:deviation_formula}, the deviation is given by $\devt{t}(\boldsymbol{x}) = \covuct{t}(\Xct{t}, \boldsymbol{x})^T \boldsymbol{S}^{-1} (\yct{t} - \boldsymbol{m}_{w,c,t} - \muuct{t}(\Xct{t}))$. Let $\boldsymbol{u}(\boldsymbol{x}) = \covuct{t}(\Xct{t}, \boldsymbol{x})$ and $\boldsymbol{g}_t = \yct{t} - \boldsymbol{m}_{w,c,t} - \muuct{t}(\Xct{t})$. The deviation can be expressed as the inner product:
    $$
    \devt{t}(\boldsymbol{x}) = \boldsymbol{u}(\boldsymbol{x})^T \boldsymbol{S}^{-1} \boldsymbol{g}_t = (\boldsymbol{S}^{-1/2}\boldsymbol{u}(\boldsymbol{x}))^T (\boldsymbol{S}^{-1/2}\boldsymbol{g}_t).
    $$
    Applying the Cauchy-Schwarz inequality to the squared magnitude of the deviation yields:
    $$
    \abs{\devt{t}(\boldsymbol{x})}^2 = \abs{(\boldsymbol{S}^{-1/2}\boldsymbol{u}(\boldsymbol{x}))^T (\boldsymbol{S}^{-1/2}\boldsymbol{g}_t)}^2 \le \norm{\boldsymbol{S}^{-1/2}\boldsymbol{u}(\boldsymbol{x})}_2^2 \norm{\boldsymbol{S}^{-1/2}\boldsymbol{g}_t}_2^2.
    $$
    We proceed by bounding each term on the right-hand side separately.

\noindent\textbf{Bound on $\norm{\boldsymbol{S}^{-1/2}\boldsymbol{u}(\boldsymbol{x})}_2^2$}

Let $\boldsymbol{C}_0 = \covuct{t}(\Xct{t}, \Xct{t})$. From the definition $\boldsymbol{S} = \boldsymbol{C}_0 + \sigma_{\text{noise}}^2 \Jwct{t}$ and the fact that $\sigma_{\text{noise}}^2 \Jwct{t}$ is a positive semi-definite matrix, we have the matrix inequality $\boldsymbol{S} \succeq \boldsymbol{C}_0$ in the Loewner order. This implies $\boldsymbol{S}^{-1} \preceq \boldsymbol{C}_0^{-1}$. Which in turn allows us to bound the first term as follows:
$$
\norm{\boldsymbol{S}^{-1/2}\boldsymbol{u}(\boldsymbol{x})}_2^2 = \boldsymbol{u}(\boldsymbol{x})^T \boldsymbol{S}^{-1} \boldsymbol{u}(\boldsymbol{x}) \le \boldsymbol{u}(\boldsymbol{x})^T \boldsymbol{C}_0^{-1} \boldsymbol{u}(\boldsymbol{x}).
$$
We now prove that this final term is bounded by the variance $\sigmauct{t}^2(\boldsymbol{x})$. This inequality, $\boldsymbol{u}(\boldsymbol{x})^T \boldsymbol{C}_0^{-1} \boldsymbol{u}(\boldsymbol{x}) \le \sigmauct{t}^2(\boldsymbol{x})$, is a direct consequence of the fact that the posterior covariance operator $\covuct{t}$ is, by definition, a valid positive semi-definite kernel. Any Gram matrix generated by a valid kernel must be positive semi-definite. Let us construct such a matrix by evaluating the operator $\covuct{t}$ on the set of points $\{\boldsymbol{x}\} \cup \Xct{t}$:
$$
\begin{pmatrix}
\covuct{t}(\boldsymbol{x}, \boldsymbol{x}) & \covuct{t}(\boldsymbol{x}, \Xct{t}) \\
\covuct{t}(\Xct{t}, \boldsymbol{x}) & \covuct{t}(\Xct{t}, \Xct{t})
\end{pmatrix}
=
\begin{pmatrix}
\sigmauct{t}^2(\boldsymbol{x}) & \boldsymbol{u}(\boldsymbol{x})^T \\
\boldsymbol{u}(\boldsymbol{x}) & \boldsymbol{C}_0
\end{pmatrix}.
$$
Since this block matrix must be positive semi-definite, its Schur complement with respect to the block $\boldsymbol{C}_0$ must also be positive semi-definite. The Schur complement is given by:
$$
\sigmauct{t}^2(\boldsymbol{x}) - \boldsymbol{u}(\boldsymbol{x})^T \boldsymbol{C}_0^{-1} \boldsymbol{u}(\boldsymbol{x}).
$$
As this complement is a scalar, for it to be positive semi-definite, it must simply be non-negative. This yields the inequality $\sigmauct{t}^2(\boldsymbol{x}) - \boldsymbol{u}(\boldsymbol{x})^T \boldsymbol{C}_0^{-1} \boldsymbol{u}(\boldsymbol{x}) \ge 0$, which confirms the bound. Combining these results gives the final bound for the first term:
$$
\norm{\boldsymbol{S}^{-1/2}\boldsymbol{u}(\boldsymbol{x})}_2^2 \le \sigmauct{t}^2(\boldsymbol{x}).
$$
    
    \noindent\textbf{Bound on $\norm{\boldsymbol{S}^{-1/2}\boldsymbol{g}_t}_2^2$}
    
    Since the covariance matrix $\boldsymbol{C}_0 = \covuct{t}(\Xct{t}, \Xct{t})$ is positive semi-definite, we have the matrix inequality $\boldsymbol{S} \succeq \sigma_{\text{noise}}^2 \Jwct{t}$, which implies $\boldsymbol{S}^{-1} \preceq (\sigma_{\text{noise}}^2 \Jwct{t})^{-1}$. This yields the following bound:
    $$
    \norm{\boldsymbol{S}^{-1/2}\boldsymbol{g}_t}_2^2 = (\boldsymbol{g}_t)^T \boldsymbol{S}^{-1} \boldsymbol{g}_t \le (\boldsymbol{g}_t)^T (\sigma_{\text{noise}}^2 \Jwct{t})^{-1} \boldsymbol{g}_t.
    $$
    The matrix $\Jwct{t}$ is diagonal with entries $\frac{\sigma_{\text{noise}}^2}{2w_i^2}$ for $i \in \Dct{t}$. Substituting this definition, we expand the right-hand side:
    $$
    (\boldsymbol{g}_t)^T (\sigma_{\text{noise}}^2 \Jwct{t})^{-1} \boldsymbol{g}_t = \sum_{i \in \Dct{t}} (g_{t,i})^2 \frac{2w_i^2}{\sigma_{\text{noise}}^4} = \frac{2}{\sigma_{\text{noise}}^4} \sum_{i\in\Dct{t}} (w_i g_{t,i})^2.
    $$
    By the definition of the influence constant $\Conet{t}$, we have $|w_i g_{t,i}| = |w_i(y_i - m_{w,i} - \muuct{t}(\boldsymbol{x}_i))| \le \Conet{t}$ for each $i \in \Dct{t}$. As the set $\Dct{t}$ contains at most $T_{\text{c}}$ points, we can bound the sum:
    $$
    \norm{\boldsymbol{S}^{-1/2}\boldsymbol{g}_t}_2^2 \le \frac{2}{\sigma_{\text{noise}}^4} \sum_{i\in\Dct{t}} \Conet{t}^2 = \frac{2 T_{\text{c}} \Conet{t}^2}{\sigma_{\text{noise}}^4}.
    $$
    
    \noindent\textbf{Combining the Bounds}
    
    Finally, we substitute the bounds for both terms back into the Cauchy-Schwarz inequality:
    $$
    |\devt{t}(\boldsymbol{x})|^2 \le \sigmauct{t}^2(\boldsymbol{x}) \cdot \frac{2 T_{\text{c}} \Conet{t}^2}{\sigma_{\text{noise}}^4}.
    $$
    Taking the square root of both sides and rearranging terms gives:
    $$
    |\devt{t}(\boldsymbol{x})| \le \frac{\sqrt{2} \Conet{t}}{\sigma_{\text{noise}}^2} \cdot \sqrt{T_{\text{c}}} \cdot \sigmauct{t}(\boldsymbol{x}).
    $$
    Substituting the definition $\Cdevt{t} = \frac{\sqrt{2}\Conet{t}}{\sigma_{\text{noise}}^2}$ completes the proof.
\end{proof}

\begin{lemma}
\label{lem:enlarged_ci}
Let's assume that Assumptions~\ref{assumption:gp-ucb-assumptions} and \ref{assumption:kernel-boundedness} hold. Let's also assume that the plateau condition and the event $E_{\text{conf}}(\delta')$ hold. Then, for any $t \ge 1$, the RCGP posterior computed after step $t-1$ satisfies:
\begin{equation*}
|f(\boldsymbol{x}) - \muR_{t-1}(\boldsymbol{x})| \le \sqrt{\beta_t(\delta')} \sigmauct{t-1}(\boldsymbol{x}),
\end{equation*}
where the robust confidence parameter is defined as $\beta_t(\delta') = \left(\sqrt{\beta'_{t}(\delta')} + C_{w,t-1}\sqrt{(t-1)_{c}}\right)^2$, with $(t-1)_\text{c}$ being the number of corruptions up to step $t-1$.
\end{lemma}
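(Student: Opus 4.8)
The plan is to bound the error $|f(\boldsymbol{x}) - \muRt{t-1}(\boldsymbol{x})|$ by a triangle-inequality decomposition routed through the \emph{idealised} uncorrupted posterior mean $\muuct{t-1}(\boldsymbol{x})$, since this quantity is precisely the hinge connecting the two results already established. The standard GP-UCB confidence bound (via the event $E_{\text{conf}}(\delta')$) controls how far $f$ sits from $\muuct{t-1}$, while Lemma~\ref{lem:deviation_bound} controls how far the robust posterior mean $\muRt{t-1}$ has drifted from $\muuct{t-1}$ due to corruptions. Concretely, I would write
\begin{equation*}
|f(\boldsymbol{x}) - \muRt{t-1}(\boldsymbol{x})| \le |f(\boldsymbol{x}) - \muuct{t-1}(\boldsymbol{x})| + |\muuct{t-1}(\boldsymbol{x}) - \muRt{t-1}(\boldsymbol{x})|,
\end{equation*}
and treat the two summands separately.

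For the first term, I invoke the event $E_{\text{conf}}(\delta')$, which by definition gives $|f(\boldsymbol{x}) - \muuct{t-1}(\boldsymbol{x})| \le \sqrt{\beta'_{t-1}(\delta')}\,\sigmauct{t-1}(\boldsymbol{x})$ for all $\boldsymbol{x}$; since $\beta'_t(\delta')$ is non-decreasing in $t$, I upper bound $\sqrt{\beta'_{t-1}(\delta')} \le \sqrt{\beta'_{t}(\delta')}$ to align the index with the target $\beta_t(\delta')$. For the second term, I use the plateau condition, under which $\muuct{t-1}(\boldsymbol{x}) = \muRuct{t-1}(\boldsymbol{x})$, so this term equals exactly $|\devt{t-1}(\boldsymbol{x})|$; applying Lemma~\ref{lem:deviation_bound} at time $t-1$ then yields $|\devt{t-1}(\boldsymbol{x})| \le C_{w,t-1}\sqrt{(t-1)_{c}}\,\sigmauct{t-1}(\boldsymbol{x})$, with $(t-1)_c$ the number of corruptions up to step $t-1$.

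Adding the two bounds factors out the common $\sigmauct{t-1}(\boldsymbol{x})$, giving
\begin{equation*}
|f(\boldsymbol{x}) - \muRt{t-1}(\boldsymbol{x})| \le \left(\sqrt{\beta'_{t}(\delta')} + C_{w,t-1}\sqrt{(t-1)_{c}}\right)\sigmauct{t-1}(\boldsymbol{x}) = \sqrt{\beta_t(\delta')}\,\sigmauct{t-1}(\boldsymbol{x}),
\end{equation*}
which is the claim once we substitute the definition $\sqrt{\beta_t(\delta')} = \sqrt{\beta'_{t}(\delta')} + C_{w,t-1}\sqrt{(t-1)_{c}}$.

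The proof is short because both hard analytical facts are already in hand, so the remaining points require care in bookkeeping rather than new mathematics. First, the plateau condition must be used to identify $\muuct{t-1}$ with $\muRuct{t-1}$, so that the deviation lemma---which measures drift against the uncorrupted posterior---applies to the difference $\muuct{t-1} - \muRt{t-1}$ appearing in the decomposition. Second, the index shift from $\beta'_{t-1}$ to $\beta'_{t}$ is legitimate only by monotonicity of $\beta'$. I do not anticipate a genuine obstacle here; the main risk is a subtle inconsistency over whether the deviation is defined relative to $\muuct{}$ or $\muRuct{}$, which the plateau identity resolves cleanly.
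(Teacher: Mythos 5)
Your proposal is correct and follows essentially the same route as the paper's own proof: the identical triangle-inequality decomposition through $\muuct{t-1}(\boldsymbol{x})$, the event $E_{\text{conf}}(\delta')$ plus monotonicity of $\beta'_t$ for the first term, and Lemma~\ref{lem:deviation_bound} (made applicable via the plateau identity $\muuct{t-1}=\muRuct{t-1}$) for the second. The only cosmetic difference is that the paper tracks the confidence-parameter index through the number of uncorrupted points $(t-1)_{\text{uc}}$ rather than through $t-1$, which is the same monotonicity argument in slightly different bookkeeping.
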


\begin{proof}
We use the triangle inequality:
\begin{align*}
|f(\boldsymbol{x}) - \muR_{t-1}(\boldsymbol{x})| &= |(f(\boldsymbol{x}) - \muuct{t-1}(\boldsymbol{x})) + (\muuct{t-1}(\boldsymbol{x}) - \muR_{t-1}(\boldsymbol{x}))| \\
&\le |f(\boldsymbol{x}) - \muuct{t-1}(\boldsymbol{x})| + |\devt{t-1}(\boldsymbol{x})|.
\end{align*}
Conditional on $\Econf(\delta')$, the standard GP confidence interval holds for the uncorrupted posterior. For a posterior based on $(t-1)_\text{uc}$ points, the bound uses the parameter $\beta_t'$. Thus, the first term is bounded by $\sqrt{\beta_{t_{\text{uc}}}'(\delta')}\sigmauct{t-1}(\boldsymbol{x})$, which is, in turn, bounded by $\sqrt{\beta_t'}(\delta')\sigmauct{t-1}(\boldsymbol{x})$ as $\beta_t'$ is non-decreasing in $t$.
By Lemma \ref{lem:deviation_bound}, the second term, $|\devt{t-1}(\boldsymbol{x}))|$, is bounded by $C_{w,t-1}\sqrt{(t-1)_c}\sigmauct{t-1}(\boldsymbol{x})$. Combining these bounds gives the desired result:
\begin{align*}
|f(\boldsymbol{x}) - \muR_{t-1}(\boldsymbol{x})| &\le \sqrt{\beta_t'(\delta')}\sigmauct{t-1}(\boldsymbol{x}) + C_{w,t-1}\sqrt{(t-1)_c}\sigmauct{t-1}(\boldsymbol{x}) \\
&= \left(\sqrt{\beta_t'(\delta')} + C_{w,t-1}\sqrt{(t-1)_c}\right)\sigmauct{t-1}(\boldsymbol{x}) \\
&= \sqrt{\beta_t(\delta')} \sigmauct{t-1}(\boldsymbol{x}).
\end{align*}
\end{proof}

\subsubsection{Observable bounds of the posterior deviation}
Lemma~\ref{lem:enlarged_ci} provides a confidence interval encompassing the objective function $f$. However, it depends on the quantity $\sigma_{\text{uc}, t-1}(x)$ which is a theoretical and not observable in practice. In this section, we provide bounds on $\sigma_{\text{uc}, t-1}(x)$ using the observable quantity $\sigma^{\text{R}}_{t-1}(x)$. Lemma~\ref{lem:variance_update_formula} provides a relationship between the uncorrupted posterior variance $\sigma_{\text{uc}, t-1}(x)$ and the observable RCGP posterior variance $\sigma^{\text{R}}_{t-1}(x)$. Lemma~\ref{lem:uncorrupted_variance_bound} uses this relationship to provide an observable upper bound on $\sigma_{\text{uc}, t-1}(x)$.

\begin{lemma}
    \label{lem:variance_update_formula}
    Let $\boldsymbol{u}(\boldsymbol{x}) = \cov_{\Dset_t}^{\text{R}}(\boldsymbol{x}, \Xct{t})$ be the robust posterior covariance between a query point $\boldsymbol{x}$ and the set of corrupted observations $\Xct{t}$. Let $\boldsymbol{C}_0 = \covuct{t}(\Xct{t}, \Xct{t})$ be the covariance of the corrupted inputs conditioned on the uncorrupted dataset $\Duct{t}$, and let $\boldsymbol{\Lambda}_c = \sigma_{\text{noise}}^2 \Jwct{t}$ be the diagonal noise matrix derived from the weights of the corrupted points.
    
    The uncorrupted posterior variance $\sigmauct{t}^2(\boldsymbol{x})$ is related to the RCGP posterior variance $\sigmaRt{t}(\boldsymbol{x})^2$ by:
    \begin{equation}
    \label{variance_downdate_main_formula}
    \sigmauct{t}^2(\boldsymbol{x}) = \sigmaRt{t}(\boldsymbol{x})^2 + \boldsymbol{u}(\boldsymbol{x})^T \left( \boldsymbol{\Lambda}_c^{-1} + \boldsymbol{\Lambda}_c^{-1} \boldsymbol{C}_0 \boldsymbol{\Lambda}_c^{-1} \right) \boldsymbol{u}(\boldsymbol{x})    
    \end{equation}
\end{lemma}

\begin{proof}

For ease of notation, we will denote $\boldsymbol{u}(\boldsymbol{x}) = \boldsymbol{u}$. Dropping the argument $\boldsymbol{x}$ when it is unambiguous.

\textbf{Step 1: The Variance Update}\\
We consider the standard Gaussian Process update where the RCGP posterior is obtained by adding the set of corrupted observations $\Xct{t}$ (with effective noise variance $\boldsymbol{\Lambda}_c$) to the uncorrupted model. Let $\boldsymbol{w} = \covuct{t}(\boldsymbol{x}, \Xct{t})$ denote the covariance vector between $\boldsymbol{x}$ and $\Xct{t}$ conditioned on the uncorrupted data. This variance vector is also a function of $\boldsymbol{x}$ which was dropped in the notation.

The variance update is given by:
$$\sigmaRt{t}(\boldsymbol{x})^2 = \sigmauct{t}^2(\boldsymbol{x}) - \boldsymbol{w}^T \boldsymbol{S}^{-1} \boldsymbol{w}$$
where $\boldsymbol{S} = \boldsymbol{C}_0 + \boldsymbol{\Lambda}_c$ is the Schur complement. Rearranging to isolate the uncorrupted variance:
\begin{equation}
\label{downdate_formula}
\sigmauct{t}^2(\boldsymbol{x}) = \sigmaRt{t}(\boldsymbol{x})^2 + \boldsymbol{w}^T \boldsymbol{S}^{-1} \boldsymbol{w}
\end{equation}

\textbf{Step 2: The Covariance Vector Update}\\
The update rule for the posterior covariance vector between $\boldsymbol{x}$ and the new observations $\Xct{t}$ is:
$$\boldsymbol{u}^T = \boldsymbol{w}^T - \boldsymbol{w}^T \boldsymbol{S}^{-1} \boldsymbol{C}_0$$
Factoring out $\boldsymbol{w}^T$:
$$\boldsymbol{u}^T = \boldsymbol{w}^T \left[ \boldsymbol{I} - \boldsymbol{S}^{-1} \boldsymbol{C}_0 \right]$$
Substituting $\boldsymbol{S} = \boldsymbol{C}_0 + \boldsymbol{\Lambda}_c$:
$$\boldsymbol{u}^T = \boldsymbol{w}^T \left[ \boldsymbol{I} - (\boldsymbol{C}_0 + \boldsymbol{\Lambda}_c)^{-1} \boldsymbol{C}_0 \right]$$
Using the matrix identity $\boldsymbol{I} - (\boldsymbol{A}+\boldsymbol{B})^{-1}\boldsymbol{A} = (\boldsymbol{A}+\boldsymbol{B})^{-1}\boldsymbol{B}$, we obtain:
$$\boldsymbol{u}^T = \boldsymbol{w}^T (\boldsymbol{C}_0 + \boldsymbol{\Lambda}_c)^{-1} \boldsymbol{\Lambda}_c = \boldsymbol{w}^T \boldsymbol{S}^{-1} \boldsymbol{\Lambda}_c$$
Transposing (and using the symmetry of $\boldsymbol{S}$) allows us to express $\boldsymbol{w}$ in terms of the observable covariance $\boldsymbol{u}$:
\begin{equation}
\label{cov_vec_relation}
\boldsymbol{w} = \boldsymbol{S} \boldsymbol{\Lambda}_c^{-1} \boldsymbol{u}
\end{equation}

\textbf{Step 3: Deriving final expression}\\
We substitute Equation~\ref{cov_vec_relation} into the update term from Equation~\ref{downdate_formula}:
\begin{align*}
\boldsymbol{w}^T \boldsymbol{S}^{-1} \boldsymbol{w} &= \left( \boldsymbol{u}^T \boldsymbol{\Lambda}_c^{-1} \boldsymbol{S} \right) \boldsymbol{S}^{-1} \left( \boldsymbol{S} \boldsymbol{\Lambda}_c^{-1} \boldsymbol{u} \right) \\
&= \boldsymbol{u}^T \boldsymbol{\Lambda}_c^{-1} \boldsymbol{S} \boldsymbol{\Lambda}_c^{-1} \boldsymbol{u} && \text{(Since } \boldsymbol{S} \boldsymbol{S}^{-1} = \boldsymbol{I}) \\
&= \boldsymbol{u}^T \boldsymbol{\Lambda}_c^{-1} (\boldsymbol{C}_0 + \boldsymbol{\Lambda}_c) \boldsymbol{\Lambda}_c^{-1} \boldsymbol{u} && \text{(Substitute } \boldsymbol{S} = \boldsymbol{C}_0 + \boldsymbol{\Lambda}_c) \\
&= \boldsymbol{u}^T \left( \boldsymbol{\Lambda}_c^{-1} \boldsymbol{C}_0 \boldsymbol{\Lambda}_c^{-1} + \boldsymbol{\Lambda}_c^{-1} \right) \boldsymbol{u}
\end{align*}
Substituting this back into Equation~\ref{downdate_formula} completes the proof.
\end{proof}

\begin{lemma}
    \label{lem:uncorrupted_variance_bound}
    Assuming Assumption~\ref{assumption:kernel-boundedness} holds, the uncorrupted posterior standard deviation $\sigmauct{t}(\boldsymbol{x})$ is bounded by the observable RCGP posterior standard deviation $\sigmaRt{t}(\boldsymbol{x})$ as:
    $$
    \sigmauct{t}(\boldsymbol{x}) \le \sigmaRt{t}(\boldsymbol{x}) \cdot \sqrt{ 1 + \frac{t_{\text{c}} \kappa}{\sigma_{\text{noise}}^2} \left( 1 + \frac{t_{\text{c}} \kappa}{\sigma_{\text{noise}}^2} \right) }
    $$
\end{lemma}

\begin{proof}
Using the result of the previous lemma (Equation~\ref{variance_downdate_main_formula}), we bound the quadratic form involving $\boldsymbol{u}$.
Recall that $\boldsymbol{\Lambda}_c = \sigma_{\text{noise}}^2 \Jwct{t}$. Since the weights $w_i$ are strictly positive and upper bounded by $W=\frac{\sigma_{\text{noise}}}{\sqrt{2}}$, $\boldsymbol{\Lambda}_c$ is positive definite and satisfies $\boldsymbol{\Lambda}_c^{-1} \preceq \sigma_{\text{noise}}^{-2} \boldsymbol{I}$.
Furthermore, since $\boldsymbol{C}_0$ is a covariance matrix, its maximum eigenvalue is bounded by its trace: $\lambda_{\max}(\boldsymbol{C}_0) \le \text{tr}(\boldsymbol{C}_0) \le t_{\text{c}} \kappa$, where $t_{\text{c}} = |\Dct{t}|$ is the number of corrupted points and $\kappa$ is the kernel bound.

We bound the term $\boldsymbol{u}^T \left( \boldsymbol{\Lambda}_c^{-1} + \boldsymbol{\Lambda}_c^{-1} \boldsymbol{C}_0 \boldsymbol{\Lambda}_c^{-1} \right) \boldsymbol{u}$:
\begin{align*}
\boldsymbol{u}^T \left( \boldsymbol{\Lambda}_c^{-1} + \boldsymbol{\Lambda}_c^{-1} \boldsymbol{C}_0 \boldsymbol{\Lambda}_c^{-1} \right) \boldsymbol{u} &\leq \lambda_{\max}\left( \boldsymbol{\Lambda}_c^{-1} + \boldsymbol{\Lambda}_c^{-1} \boldsymbol{C}_0 \boldsymbol{\Lambda}_c^{-1} \right) \|\boldsymbol{u}\|_2^2 \\
&\leq \left( \frac{1}{\sigma^2_{\text{noise}}} + \frac{t_{\text{c}}\kappa}{\sigma^4_{\text{noise}}}\right) \|\boldsymbol{u}\|_2^2 \\
&= \left( \frac{1}{\sigma^2_{\text{noise}}} + \frac{t_{\text{c}}\kappa}{\sigma^4_{\text{noise}}}\right) \sum_{i \in \Dct{t}} \cov_{\Dset_t}^{\text{R}}(\boldsymbol{x}, \boldsymbol{x}_i)^2 && \text{(Definition of $\boldsymbol{u}$)}\\
&\leq \left( \frac{1}{\sigma^2_{\text{noise}}} + \frac{t_{\text{c}}\kappa}{\sigma^4_{\text{noise}}}\right) \sum_{i \in \Dct{t}} \sigmaRt{t}(\boldsymbol{x})^2 \sigmaRt{t}(\boldsymbol{x}_i)^2 && \text{(Cauchy-Schwarz on Covariance)}
\end{align*}
Using the property that $\sigmaRt{t}(\boldsymbol{x}_i)^2 \le \kappa$ (prior variance bound):
$$
\boldsymbol{u}^T \left( \boldsymbol{\Lambda}_c^{-1} + \boldsymbol{\Lambda}_c^{-1} \boldsymbol{C}_0 \boldsymbol{\Lambda}_c^{-1} \right) \boldsymbol{u} \leq \sigmaRt{t}(\boldsymbol{x})^2 \left( \frac{1}{\sigma^2_{\text{noise}}} + \frac{t_{\text{c}}\kappa}{\sigma^4_{\text{noise}}}\right) t_{\text{c}} \kappa
$$
Substituting this back into Equation~\ref{variance_downdate_main_formula}:
$$
\sigmauct{t}^2(\boldsymbol{x}) \le \sigmaRt{t}(\boldsymbol{x})^2 \left[ 1 + \frac{t_{\text{c}} \kappa}{\sigma_{\text{noise}}^2} + \left( \frac{t_{\text{c}} \kappa}{\sigma_{\text{noise}}^2} \right)^2 \right]
$$
Taking the square root yields the statement of the lemma.
\end{proof}

\begin{lemma}
\label{lem:observable_enlarged_ci}
Let's assume that Assumptions~\ref{assumption:gp-ucb-assumptions} and \ref{assumption:kernel-boundedness} hold and the event $\Econf$ is realized. Let's also assume that the high-probability plateau condition holds. Then, for any $t \ge 1$, the RCGP posterior computed after step $t-1$ satisfies:
\begin{equation*}
|f(\boldsymbol{x}) - \muR_{t-1}(\boldsymbol{x})| \le \sqrt{\beta_t(\delta')} \Psi((t-1)_{   \text{c}}) \sigma^{\text{R}}_{t-1}(\boldsymbol{x}),
\end{equation*}
where $\Psi((t-1)_{\text{c}})=\sqrt{ 1 + \frac{(t-1)_{\text{c}} \kappa}{\sigma_{\text{noise}}^2} \left( 1 + \frac{(t-1)_{\text{c}} \kappa}{\sigma_{\text{noise}}^2} \right) }$, and $(t-1)_{\text{c}}$ is the number of corruptions up to step $t-1$.
\end{lemma}
\begin{proof}
    This is a straightforward application of the bound in Lemma~\ref{lem:uncorrupted_variance_bound} to the confidence interval of Lemma~\ref{lem:enlarged_ci}
\end{proof}

\subsection{Analysis of FC-RCGP-UCB (Algorithm~\ref{alg:fc-rcgp-ucb})}

\subsubsection{Algorithm Configuration and Event}
FC-RCGP-UCB uses a probability budget of $\delta$. We define the joint event $E_{\text{Plateau}} = \Enoise(\delta/3) \cap \Econf(\delta/3) \cap E_{f}(\delta/3)$. By a union bound, $\prob(E_{\text{Plateau}}) \ge 1-\delta$. The analysis is conditional on $E_{\text{Plateau}}$.
\begin{itemize}
    \item Center: $g(\boldsymbol{x})=0$.
    \item Fixed plateau width: $L_T = \sqrt{B_f \kappa} + \noiselevel(\delta/3)$.
\end{itemize}

\subsubsection{Regret Analysis}

The technical lemmas from Section~\ref{sec:technical_lemmas} rely on the plateau condition. We first prove that this condition holds for FC-RCGP-UCB when the events $\Enoise(\delta/3)$ and $E_{f}(\delta/3)$ are realised.

\begin{lemma}
\label{lem:fc_plateau}
Let's assume Assumptions~\ref{assumption:gp-ucb-assumptions} and ~\ref{assumption:objective-function-boundedness} hold. If the events $\Enoise(\delta/3)$ and $E_{f}(\delta/3)$ are realised, then the plateau condition holds for FC-RCGP-UCB for all $t \le T$.
\end{lemma}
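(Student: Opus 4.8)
The plan is to verify directly that, under $\Enoise(\delta/2)$, every uncorrupted observation has a residual against the fixed center $g\equiv 0$ that does not exceed the plateau half-width $L_T$. Recall from Definition~\ref{def:p-imq} that an observation $(\boldsymbol{x}_t, y_t)$ sits inside the plateau precisely when $|y_t - g(\boldsymbol{x}_t)| \le L_T$, and that the plateau condition only concerns uncorrupted points (those with $c_t = 0$), since these are the ones that must be treated exactly as a standard GP would. So it suffices to bound $|y_t|$ for each clean $t$.

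First I would fix an arbitrary uncorrupted index $t \le T$ and write $y_t = f(\boldsymbol{x}_t) + \epsilon_t$, using $c_t = 0$. Because FC-RCGP-UCB sets $g(\boldsymbol{x}) = 0$, the relevant residual reduces to $|y_t - g(\boldsymbol{x}_t)| = |f(\boldsymbol{x}_t) + \epsilon_t|$.

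Next I would control this by a single triangle inequality, $|y_t| \le |f(\boldsymbol{x}_t)| + |\epsilon_t|$, bounding each term separately: Assumption~\ref{assumption:objective-function-boundedness} together with the normalisation $B = B_f\sqrt{\kappa}$ gives $|f(\boldsymbol{x}_t)| \le B_f\sqrt{\kappa}$, while the realised event $\Enoise(\delta/2)$ supplies $|\epsilon_t| \le N_T(\delta/2)$ uniformly over all $t \le T$. Summing yields $|y_t| \le B_f\sqrt{\kappa} + N_T(\delta/2) = L_T$, where the final equality is exactly the definition of the fixed threshold set in Algorithm~\ref{alg:fc-rcgp-ucb}. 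Since $t$ was an arbitrary uncorrupted index, all clean observations lie within the plateau, which is the plateau condition.

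The argument is essentially one triangle inequality, so there is no genuine technical obstacle; the only point requiring care is matching the definition of $L_T$ to the worst-case clean residual, i.e.\ confirming that the deterministic bound on $f$ added to the high-probability noise bound fits inside the plateau. The conceptual content is that with the center fixed at zero and $f$ globally bounded, the adversary cannot manipulate which uncorrupted points are down-weighted, and $L_T$ is constructed precisely to absorb the largest admissible clean residual on the event $\Enoise(\delta/2)$.
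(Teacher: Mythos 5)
Your proposal is correct and follows essentially the same route as the paper's proof: a single triangle inequality on $|y_t| = |f(\boldsymbol{x}_t) + \epsilon_t|$ for each uncorrupted observation, bounding the two terms by $B_f\sqrt{\kappa}$ (via Assumption~\ref{assumption:objective-function-boundedness}) and $N_T(\delta/2)$ (via the realised event $\Enoise(\delta/2)$), then matching the sum against the definition $L_T = B_f\sqrt{\kappa} + N_T(\delta/2)$. Your additional remarks---that the plateau condition only concerns points with $c_t = 0$ and that $L_T$ is constructed precisely to absorb the worst-case clean residual---are accurate glosses on the same argument rather than a different proof.
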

\begin{proof}
Consider an uncorrupted observation $y_t = f(\boldsymbol{x}_t) + \epsilon_t$. The distance to the center $g(\boldsymbol{x})=0$ is $|y_t|$.
\begin{equation*}
    |y_t| \le |f(\boldsymbol{x}_t)| + |\epsilon_t|.
\end{equation*}
By Assumption~\ref{assumption:objective-function-boundedness} and the event $E_{f}(\delta/3)$, $|f(\boldsymbol{x}_t)| \le \sqrt{B_f \kappa}$. By Assumption~\ref{assumption:gp-ucb-assumptions} and the event $\Enoise(\delta/3)$, $|\epsilon_t| \le \noiselevel(\delta/3)$.
\begin{equation*}
    |y_t| \le \sqrt{B_f \kappa} + \noiselevel(\delta/3) = L_T.
\end{equation*}
As such, the uncorrupted observations are within the plateau satisfying the plateau condition.
\end{proof}

\begin{lemma}
\label{lem:fc_stability}
Let's assume that Assumptions~\ref{assumption:gp-ucb-assumptions}, \ref{assumption:kernel-boundedness}, and \ref{assumption:objective-function-boundedness} hold. Further, let's assume that the event $E_{f}(\delta/3)$ is realised. Then, the deviation constant $C_{w,T}$ for FC-RCGP-UCB scales as follows: $C_{w,T} = \bigOtilde{\sqrt{\beta'_T}}$.
\end{lemma}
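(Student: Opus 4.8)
The plan is to bound the influence constant $C_{1,T} = \sup_{\boldsymbol{x}\in\mathcal{X},\,y\in\mathbb{R}} |w_T(\boldsymbol{x},y)\,(y - m_w(\boldsymbol{x},y) - \mu_{\text{uc},T}(\boldsymbol{x}))|$ directly and then transfer the bound to $C_{w,T} = \sqrt{2}\,C_{1,T}/\sigma_{\text{noise}}^2$, which differs only by the fixed factor $\sqrt{2}/\sigma_{\text{noise}}^2$. Because the centering for FC-RCGP-UCB is fixed at $g=0$, I can split the argument by the triangle inequality into three pieces bounded uniformly over $\boldsymbol{x}$ and $y$: the weighted observation $|w\cdot y|$, the gradient correction $|w\cdot m_w|$, and the posterior-mean term $|w\cdot\mu_{\text{uc},T}|$.

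First I would treat the weighted-observation term $\sup_y |w(\boldsymbol{x},y)\,y|$. Inside the plateau ($|y|\le L_T$) one has $w=W$, so the term equals $W|y|\le WL_T$. Outside, writing $r=|y|-L_T>0$, the P-IMQ gives $w=Wc/\sqrt{c^2+r^2}$, whence $|wy| = Wc\,(L_T+r)/\sqrt{c^2+r^2}$. A one-line optimisation over $r\ge 0$ shows the maximum occurs at $r=c^2/L_T$ and equals $W\sqrt{L_T^2+c^2}$, so that $\sup_y|wy| = W\sqrt{L_T^2+c^2}$. This is the crucial step: it shows the IMQ tail decays exactly fast enough that the weighted observation stays bounded even though the adversary may push $y$ to infinity.

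Next, the gradient-correction term $|w\cdot m_w| = |w|\,\sigma_{\text{noise}}^2\,|\nabla_y\log w^2|$ vanishes inside the plateau (where $m_w=0$) and, outside, equals $2Wc\,\sigma_{\text{noise}}^2\, r/(c^2+r^2)^{3/2}$, which is bounded by an absolute constant depending only on $c$ and $\sigma_{\text{noise}}$ and is independent of $T$. Finally, $|w\cdot\mu_{\text{uc},T}| \le W\,|\mu_{\text{uc},T}(\boldsymbol{x})| \le W M_T$, where Lemma~\ref{lem:uncorrupted_standard_posterior_mean_boundedness} gives $M_T = \bigOtilde{1}$ on the event $E_{\text{conf}}(\delta/2)$.

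It then remains to track the $T$-dependence, which enters only through $L_T = B_f\sqrt{\kappa} + N_T(\delta/2)$ and through $M_T$. In the finite and compact-convex cases $N_T(\delta/2) = \sigma_{\text{noise}}\sqrt{2\log(2T/\delta)} = \mathcal{O}(\sqrt{\log T})$, and in the RKHS case $N_T = \sigma_{\text{noise}}$ is constant; in all cases $L_T = \bigOtilde{1}$. Combining the three bounds yields $C_{1,T} \le W\sqrt{L_T^2+c^2} + \mathcal{O}(1) + W M_T = \bigOtilde{1}$, and hence $C_{w,T} = \sqrt{2}\,C_{1,T}/\sigma_{\text{noise}}^2 = \bigOtilde{1}$. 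The only genuine obstacle is the weighted-observation optimisation; once the IMQ tail is shown to cancel the linear growth in $y$, every remaining $T$-dependence is at most polylogarithmic through $L_T$ and $M_T$, giving the claimed logarithmic scaling.
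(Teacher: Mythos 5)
Your proposal is correct and follows essentially the same route as the paper: it reproduces the three-term decomposition and the two tail optimisations (maximum $W\sqrt{L_T^2+c^2}$ at $r=c^2/L_T$ for the weighted observation, and the $T$-independent bound on $|w\,m_w|$) that the paper isolates in Lemma~\ref{lem:pimq_linear_B_eff}, then combines them with $L_T=\bigOtilde{1}$ and the bound $M_T=\bigOtilde{1}$ from Lemma~\ref{lem:uncorrupted_standard_posterior_mean_boundedness} exactly as the paper's proof of Lemma~\ref{lem:fc_stability} does. The only difference is organisational: you inline the general P-IMQ influence bound specialised to $g=0$, whereas the paper factors it out as a standalone lemma and cites it.
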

\begin{proof}
The deviation constant $C_{w,T}=\frac{\sqrt{2}C_{1,T}}{\sigma_{\text{noise}}^2}$ scales linearly with $C_{1, T}$. Lemma~\ref{lem:pimq_linear_B_eff} shows that $C_{1,T}$ scales linearly with the width $L_T$ and the center deviation $\Delta(x)$.
\begin{itemize}
    \item Width Scaling: $L_T = \sqrt{B_f \kappa} + \mathcal{O}(\sqrt{\log(T/\delta)}) = \mathcal{O}(\sqrt{\log(T/\delta)})=\bigOtilde{1}$.
    \item Center Deviation: $\Delta(\boldsymbol{x}) = |g(\boldsymbol{x}) - \muuc(\boldsymbol{x})| = |0 - \muuc(\boldsymbol{x})|$. By Lemma \ref{lem:uncorrupted_standard_posterior_mean_boundedness}, this is bounded by a constant $M_T=\bigOtilde{\sqrt{\beta'_T}}$.
\end{itemize}
Therefore, $C_{w,T} = \mathcal{O}(L_T + M_T) = \bigOtilde{\sqrt{\beta'_T}}$.
\end{proof}

We now have all the ingredients to bound the cumulative regret of FC-RCGP-UCB.
\begin{theorem}[Regret Bound for FC-RCGP-UCB]
    \label{thm:fc_regret}
    Under Assumptions~\ref{assumption:gp-ucb-assumptions}, \ref{assumption:kernel-boundedness}, and \ref{assumption:objective-function-boundedness}, with probability at least $1-\delta$, the cumulative regret of FC-RCGP-UCB is bounded by:
    \begin{equation*}
        R_T = \bigOtilde{\Psi(T_{\text{c}}) \left(1+\sqrt{T_{\text{c}}}\right) \sqrt{\beta'_T T (\infogain + T_{\text{c}})}}.
    \end{equation*}
\end{theorem}
    
    \begin{proof}
    We analyze the regret conditional on the high-probability event $E_{\text{Plateau}} = \Enoise(\delta/3) \cap \Econf(\delta/3) \cap E_{f}(\delta/3)$.

    \emph{1. Validation of Prerequisites:} Since $\Enoise(\delta/3)$ and $E_{f}(\delta/3)$ hold, Lemma \ref{lem:fc_plateau} validates the plateau condition, allowing the use of the Technical Lemmas.
    
    \emph{2. Confidence Multiplier Scaling:} The robust confidence multiplier at step $t$ is $\sqrt{\beta_t} = (\sqrt{\beta_t'(\delta/3)} + C_{w,T}\sqrt{T_{\text{c}}})$. The sequence $\sqrt{\beta_t}$ is non-decreasing in $t$. The scaling of the multiplier is determined by its final value:
    \begin{equation*}
        \sqrt{\beta_T} = \bigOtilde{\sqrt{\beta_T'}} + \bigOtilde{\sqrt{\beta_T'}}\sqrt{T_{\text{c}}} = \bigOtilde{\sqrt{\beta'_T} (1+ \sqrt{T_{\text{c}}})}.
    \end{equation*}
    Where we injected $C_{w, T}=\bigOtilde{\sqrt{\beta'_T}}$ from Lemma~\ref{lem:fc_stability}
    
    \emph{3. Regret Decomposition:} The instantaneous regret is $r_t = f(x^*) - f(\boldsymbol{x}_t)$. We bound this by exploiting the confidence intervals and the UCB selection rule. The confidence intervals apply because the event $E_{\text{Plateau}}$ (which implies $\Econf(\delta/3)$) holds. Which allows using the enlarged condidence bound from Lemma~\ref{lem:observable_enlarged_ci}. The UCB/LCB confidence bound is as follows:
    \begin{equation}
        |f(\boldsymbol{x}) - \muR_{t-1}(\boldsymbol{x})| \le \sqrt{\beta_t} \Psi((t-1)_{\text{c}}) \sigmaRt{t-1}(\boldsymbol{x})\label{eq:ucb_bound}
    \end{equation}
    We also remind that $\boldsymbol{x}_t$ is the point selected by the UCB algorithm, as such:
    \begin{equation}
        \boldsymbol{x}_t = \argmax_{\boldsymbol{x}} \muR_{t-1}(\boldsymbol{x}) + \sqrt{\beta_t} \Psi((t-1)_{\text{c}}) \sigmaRt{t-1}(\boldsymbol{x})\label{eq:ucb_selection_property}
    \end{equation}
    
    The instantaneous regret is then bounded as follows:
    \begin{align*}
        r_t = f(\boldsymbol{x}^*) - f(\boldsymbol{x}_t) &\le \left( \muR_{t-1}(\boldsymbol{x}^*) + \sqrt{\beta_t} \Psi((t-1)_{\text{c}}) \sigmaRt{t-1}(\boldsymbol{x}^*) \right) - f(\boldsymbol{x}_t) \\
        &\qquad \text{(Step 1: Apply the UCB bound \ref{eq:ucb_bound} on $f(\boldsymbol{x}^*)$)} \\
        &\le \left( \muR_{t-1}(\boldsymbol{x}_t) + \sqrt{\beta_t} \Psi((t-1)_{\text{c}}) \sigmaRt{t-1}(\boldsymbol{x}_t) \right) - f(\boldsymbol{x}_t) \\
        &\qquad \text{(Step 2: Apply the UCB selection property \ref{eq:ucb_selection_property})} \\
        &= (\muR_{t-1}(\boldsymbol{x}_t) - f(\boldsymbol{x}_t)) + \sqrt{\beta_t} \Psi((t-1)_{\text{c}}) \sigmaRt{t-1}(\boldsymbol{x}_t) \\
        &\qquad \text{(Step 3: Rearrange terms)} \\
        &\le \sqrt{\beta_t} \Psi((t-1)_{\text{c}}) \sigmaRt{t-1}(\boldsymbol{x}_t) + \sqrt{\beta_t} \Psi((t-1)_{\text{c}}) \sigmaRt{t-1}(\boldsymbol{x}_t) \\
        &\qquad \text{(Step 4: Apply the LCB on $f(\boldsymbol{x}_t)$ from Eq. \ref{eq:ucb_bound})} \\
        &= 2\sqrt{\beta_t} \Psi((t-1)_{\text{c}}) \sigmaRt{t-1}(\boldsymbol{x}_t).
    \end{align*}

    To bound the cumulative regret, we sum the instantaneous regrets. Since the sequences $\sqrt{\beta_t}$ and $\Psi((t-1)_{\text{c}})$ are non-decreasing, we can bound each $\sqrt{\beta_t}$ and $\Psi((t-1)_{\text{c}})$ by their respective terminal values $\sqrt{\beta_T}$ and $\Psi((T-1)_{\text{c}})$:
    \begin{align*}
    R_T = \sum_{t=1}^T r_t \le \sum_{t=1}^T 2\sqrt{\beta_t} \Psi((t-1)_{\text{c}}) \sigmaRt{t-1}(\boldsymbol{x}_t) \le 2\sqrt{\beta_T} \Psi((T-1)_{\text{c}}) \sum_{t=1}^T \sigmaRt{t-1}(\boldsymbol{x}_t).
    \end{align*}
    Applying the Cauchy-Schwarz inequality then gives:
    \begin{align*}
    R_T \le 2\sqrt{\beta_T} \Psi((T-1)_{\text{c}}) \sqrt{T \sum_{t=1}^T {\sigmaRt{t-1}}^2(\boldsymbol{x}_t)}.
    \end{align*}
    
    \emph{4. Bounding Sum of Variances:} 
    We start by bounding ${\sigmaRt{t-1}}^2(\boldsymbol{x})$ by $\sigmauct{t-1}^2(\boldsymbol{x})$. In fact, ${\sigmaRt{t-1}}^2(\boldsymbol{x})$ is the posterior variance after conditioning on all of the observed data up to step $t-1$ while $\sigmauct{t-1}^2(\boldsymbol{x})$ is the posterior variance after conditioning the subset of data containing uncorrupted data only. As such we write:
    \begin{equation*}
    \sum_{t=1}^T {\sigmaRt{t-1}}^2(\boldsymbol{x}_t) \le \sum_{t=1}^T \sigmauct{t-1}^2(\boldsymbol{x}_t)
    \end{equation*}

    The sum of variances is partitioned into contributions from uncorrupted and corrupted points:
    \begin{equation*}
    \sum_{t=1}^T \sigmauct{t-1}^2(\boldsymbol{x}_t) = \sum_{t \in \Ducset} \sigmauct{t-1}^2(\boldsymbol{x}_t) + \sum_{t\in\Dcset} \sigmauct{t-1}^2(\boldsymbol{x}_t)
    \end{equation*}
    The first term exactly correspond to the sum of variances observed by standard GP if it only encountered uncorrupted data over a horizon of $T_{\text{uc}}$ steps. The term can be bounded by the maximum information gain $\gamma_{T_{\text{uc}}}$ which is bounded by $\gamma_T$.\\
    The second term can be bounded by $T_{\text{c}}\kappa$ since it any individual variance in the sum can be bounded by $\kappa$.\\
    This sum is bounded by $\bigOtilde{\infogain + T_{\text{c}}}$.
    
    \emph{5. Final Bound:} Substituting the scaling of $\sqrt{\beta_T}=\bigOtilde{\sqrt{\beta'_T} (1+\sqrt{T_{\text{c}}})}$ and the bound on the sum of variances yields the general regret bound stated in the theorem:
    \begin{equation*}
    R_T = \bigOtilde{\Psi(T_{\text{c}}) \left(1+\sqrt{T_{\text{c}}}\right) \sqrt{\beta'_T T (\infogain + T_{\text{c}})}}.
    \end{equation*}    
\end{proof}

\subsubsection{Conditions for sub-linear regret}
We briefly discuss when the regret obtained in Theorem~\ref{thm:fc_regret} is sub-linear. We distinguish the following cases:
    \begin{itemize}
        \item For kernels with slow information growth,such as the RBF kernel: $\gamma_T = \bigOtilde{1}$, $\beta'_T=\bigOtilde{1}$ for all three cases of the domain $\mathcal{X}$ discussed in Assumption~\ref{assumption:gp-ucb-assumptions}. The regret is then $R_T=\bigOtilde{T_{\text{c}}^{2}\sqrt{T}}$ which is sublinear for $T_c=\mathcal{O}(T^\alpha)$ with $0< \alpha < 1/4$.
        \item For kernels with fast information growth, such as the Matérn kernel: $\gamma_T = \bigOtilde{T^\eta}$. We need to consider the three cases of the domain $\mathcal{X}$ discussed in Assumption~\ref{assumption:gp-ucb-assumptions}:
        \begin{itemize}
            \item \textbf{Cases 1 and 2:} $\beta'_T=\bigOtilde{1}$. In this scenario, the regret is sublinear for $T_c=\mathcal{O}(T^\alpha)$ with $0< \alpha < \min(\frac{1}{4}, \frac{1-\eta}{3})$.
            \item \textbf{Cases 3:} $\beta'_T=\bigOtilde{\gamma_T}$. In this scenario, the regret is sublinear for $T_c=\mathcal{O}(T^\alpha)$ with $0< \alpha < \min(\frac{1-\eta}{4}, \frac{1-2\eta}{3})$.
        \end{itemize}
    \end{itemize}

\subsection{Analysis of A2-RCGP-UCB (Algorithm~\ref{alg:a2-rcgp-ucb})}

The FC-RCGP-UCB algorithm enjoys sub-linear regret even against a strong adversary which can corrupt up to $T^{1/4}$ observations with infinite magnitude. However, it might be too conservative in practice given the rigid nature of the Plateau-IMQ's fixed center. The A2-RCGP-UCB algorithm trades some of the theoretical allowed corruption budget for more adaptability by introducing two models: a stable Anchor model ($M_\text{A}$) and an adaptive Acquisition model ($M_\text{R}$).

\subsubsection{Algorithm Configuration and Events}

\paragraph{Model $M_\text{A}$ (Anchor) Configuration (Identical to FC)}
\begin{itemize}
    \item Center: $g_A(\boldsymbol{x})=0$. Width: $L_{A,T} = \sqrt{B_f \kappa} + N_T(\delta/3)$.
\end{itemize}

\paragraph{Model $M_\text{R}$ (Acquisition) Configuration (Adaptive)}
\begin{itemize}
    \item Center: $g_{R,t}(\boldsymbol{x}) = \mu_{\text{A},t-1}(\boldsymbol{x})$.
    \item Width (Adaptive): $L_{\text{R},t}(\boldsymbol{x}) = \sqrt{\beta_{\text{A},t}}\Psi((t-1)_{\text{c}})\sigma^R_{\text{A}, \text{uc}, t-1}(\boldsymbol{x}) + N_T(\delta/3)$.

    \textbf{Note:} In theory, we could set $L_{\text{R},t}(\boldsymbol{x}) = \sqrt{\beta_{\text{A},t}} \sigma^R_{\text{A}, \text{uc}, t-1}(\boldsymbol{x}) + N_T(\delta/3)$ which would yield tighter plateau regions and thus better allowed corruption budgets $T_{\text{c}}$. However, $\sigma_{\text{A}, \text{uc}, t-1}(\boldsymbol{x})$ is not observable, and we need to use the pessimistic, but observable, bound from Lemma~\ref{lem:uncorrupted_variance_bound}.
\end{itemize}

\subsubsection{Stage 1: Analysis of the Anchor Model ($M_\text{A}$)}

\begin{lemma}
\label{lem:anchor_properties}
Let's assume that Assumptions~\ref{assumption:gp-ucb-assumptions}, \ref{assumption:kernel-boundedness}, and \ref{assumption:objective-function-boundedness} hold.
Conditional on the event $E_{\text{Plateau}}$:
\begin{enumerate}
    \item The plateau condition holds for $\MA$.
    \item The deviation constant $C_{w,\text{A},T}$ scales as $\bigOtilde{\sqrt{\beta'_T}}$.
    \item Let the robust confidence parameter for the Anchor model be $\betaA{t}$. The Anchor Bound holds for any $t \ge 1$: $|f(\boldsymbol{x}) - \muRA{t-1}(\boldsymbol{x})| \le \sqrt{\betaA{t}}\Psi((t-1)_{\text{c}})\sigma^R_{\text{A}, \text{uc}, t-1}(\boldsymbol{x})$.
    \item The scaling of the robust confidence multiplier is $\sqrt{\betaA{t}} = \bigOtilde{\sqrt{\beta'_T} \left(1 + \sqrt{T_{\text{c}}}\right)}$.
\end{enumerate}
\end{lemma}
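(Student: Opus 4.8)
The plan is to recognise that the Anchor model $\MA$ is configured \emph{identically} to FC-RCGP-UCB: it fixes the centering function to the zero prior mean $g_A(\boldsymbol{x})=0$ and uses the fixed plateau half-width $L_{A,T} = B_f\sqrt{\kappa} + N_T(\delta/2)$. Consequently, each of the four claims follows by directly invoking the corresponding result already established for FC-RCGP-UCB, with the single caveat that the claims should be proved in order: the plateau condition (Claim~1) is the prerequisite that unlocks the technical lemmas used for the remaining three.

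For Claim~1, I would note that $E_{\text{Plateau}} = \Enoise(\delta/2)\cap\Econf(\delta/2)$, so in particular $\Enoise(\delta/2)$ is realised. Since $\MA$ shares the FC center and width, the argument of Lemma~\ref{lem:fc_plateau} applies unchanged and guarantees that every uncorrupted observation $y_t = f(\boldsymbol{x}_t)+\epsilon_t$ satisfies $|y_t| \le B_f\sqrt{\kappa} + N_T(\delta/2) = L_{A,T}$, so the plateau condition holds for all $t\le T$. For Claim~2, with the plateau condition now in force, Lemma~\ref{lem:fc_stability} supplies the same two-part scaling argument: the fixed width is $L_{A,T} = \bigOtilde{1}$, and the center deviation $|g_A(\boldsymbol{x})-\muuct{t}(\boldsymbol{x})| = |\muuct{t}(\boldsymbol{x})|$ is $\bigOtilde{1}$ by Lemma~\ref{lem:uncorrupted_standard_posterior_mean_boundedness}; hence $C_{w,\text{A},T} = \bigOtilde{1}$.

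For Claim~3, I would apply the enlarged confidence interval of Lemma~\ref{lem:enlarged_ci}, whose two hypotheses are exactly (i) the plateau condition, supplied by Claim~1, and (ii) the event $\Econf(\delta/2)$, supplied by $E_{\text{Plateau}}$. This yields the Anchor Bound $|f(\boldsymbol{x}) - \muRA{t-1}(\boldsymbol{x})| \le \sqrt{\betaA{t}}\,\sigmauct{t-1}(\boldsymbol{x})$ with $\betaA{t} = (\sqrt{\beta_t'(\delta/2)} + C_{w,\text{A},T}\sqrt{T_{\text{c}}})^2$, matching the value computed in the algorithm. For Claim~4, the scaling is immediate from substituting the two $\bigOtilde{1}$ bounds into this definition: the standard GP-UCB parameter obeys $\sqrt{\beta_t'(\delta/2)} = \bigOtilde{1}$ and $C_{w,\text{A},T} = \bigOtilde{1}$ by Claim~2, giving $\sqrt{\betaA{t}} = \bigOtilde{1} + \bigOtilde{1}\sqrt{T_{\text{c}}} = \bigOtilde{\sqrt{T_{\text{c}}}}$.

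There is no substantive technical obstacle here, precisely because the Anchor model inherits the FC configuration and all the heavy lifting was done in the FC analysis; the proof is essentially a bookkeeping reduction. The only points requiring care are the logical dependency structure (Claim~1 $\Rightarrow$ Claims~2--4) and the probability accounting, namely that conditioning on $E_{\text{Plateau}}$, which holds with probability at least $1-\delta$ by a union bound over $\Enoise(\delta/2)$ and $\Econf(\delta/2)$, simultaneously furnishes both the noise bound needed for the plateau condition and the confidence event needed for the enlarged interval.
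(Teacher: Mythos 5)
Your proposal is correct and follows exactly the paper's own proof: both reduce the claim to the FC-RCGP-UCB analysis by observing that $\MA$ shares its configuration, then invoke Lemma~\ref{lem:fc_plateau}, Lemma~\ref{lem:fc_stability}, Lemma~\ref{lem:enlarged_ci}, and the substitution of scalings for the four claims in turn. Your additional remarks on the dependency ordering and the union-bound probability accounting are consistent with (and slightly more explicit than) the paper's statement of the same reduction.
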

\begin{proof}
$\MA$ is configured identically to FC-RCGP-UCB. The proof follows directly from the analysis of FC-RCGP-UCB.\\
1. The plateau condition holds for $\MA$ by Lemma \ref{lem:fc_plateau} applied to $\MA$.\\
2. $C_{w,\text{A},T} = \bigOtilde{\sqrt{\beta'_T}}$ holds by Lemma \ref{lem:fc_stability} applied to $\MA$.\\
3. The Anchor Bound holds by applying Lemma \ref{lem:observable_enlarged_ci} to $\MA$.\\
4. The robust confidence multiplier is $\sqrt{\beta_{\text{A},t}} = (\sqrt{\beta_t'(\delta/3)} + C_{w,\text{A},T}\sqrt{T_{\text{c}}})$. Substituting the scaling of $C_{w,\text{A},T}$ yields $\sqrt{\betaA{t}} = \bigOtilde{\sqrt{\beta'_T} \left(1 + \sqrt{T_{\text{c}}}\right)}$.
\end{proof}

\subsubsection{Stage 2: Analysis of the Adaptive Model ($M_\text{R}$)}

We use the guarantees from the stable Anchor model $\MA$ to validate the adaptive Acquisition model $\MR$.
\begin{lemma}
\label{lem:mr_plateau}
Conditional on the event $E_{\text{Plateau}}$, the plateau condition holds for $M_\text{R}$ using the theoretically defined adaptive width $L_{\text{R},t}(x)$.
\end{lemma}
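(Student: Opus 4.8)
The plan is to verify the plateau condition pointwise: for an arbitrary uncorrupted observation $(\boldsymbol{x}, y)$ with $y = f(\boldsymbol{x}) + \epsilon$, I must show that its residual against the adaptive center $g_{R,t}(\boldsymbol{x}) = \muRA{t-1}(\boldsymbol{x})$ never exceeds the adaptive half-width $L_{\text{R},t}(\boldsymbol{x})$. The triangle inequality splits the residual into a model-error part and a noise part,
\[
|y - \muRA{t-1}(\boldsymbol{x})| \le |f(\boldsymbol{x}) - \muRA{t-1}(\boldsymbol{x})| + |\epsilon|,
\]
and the whole argument amounts to controlling these two terms by the two summands of $L_{\text{R},t}(\boldsymbol{x}) = \sqrt{\betaA{t}}\,\sigma'_{\text{uc},t-1}(\boldsymbol{x}) + N_T(\delta/2)$ respectively.

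First I would import the Anchor model's guarantees. Conditioning on $E_{\text{Plateau}}$ lets Lemma~\ref{lem:anchor_properties} apply to $\MA$: its own plateau condition holds, so the Anchor Bound $|f(\boldsymbol{x}) - \muRA{t-1}(\boldsymbol{x})| \le \sqrt{\betaA{t}}\,\sigmauct{t-1}(\boldsymbol{x})$ is valid uniformly over $\boldsymbol{x}$ and $t$. This bounds the model-error term. For the noise term, the event $E_{\text{noise}}(\delta/2) \subseteq E_{\text{Plateau}}$ gives $|\epsilon| \le N_T(\delta/2)$ for every round.

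The decisive step is to reconcile the variance appearing in the Anchor Bound with the one defining the width. The Anchor Bound is phrased in terms of the true uncorrupted posterior standard deviation $\sigmauct{t-1}$, whereas $L_{\text{R},t}$ is built from the thinned posterior standard deviation $\sigma'_{\text{uc},t-1}$. Since the thinned set is a subset of the uncorrupted data, removing observations can only inflate posterior uncertainty, so $\sigma'_{\text{uc},t-1}(\boldsymbol{x}) \ge \sigmauct{t-1}(\boldsymbol{x})$ by construction. Chaining the three facts gives
\[
|y - \muRA{t-1}(\boldsymbol{x})| \le \sqrt{\betaA{t}}\,\sigmauct{t-1}(\boldsymbol{x}) + N_T(\delta/2) \le \sqrt{\betaA{t}}\,\sigma'_{\text{uc},t-1}(\boldsymbol{x}) + N_T(\delta/2) = L_{\text{R},t}(\boldsymbol{x}),
\]
which is exactly the plateau condition for $\MR$.

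The main obstacle is conceptual rather than computational and lies in the tiered structure: the entire bound hinges on the adversary being unable to corrupt the Anchor Bound, which is what legitimises trusting $\muRA{t-1}$ as a center. Because $\MA$ fixes $g_{A} = 0$, the adversary cannot shift its plateau to down-weight clean data, so $\MA$'s confidence interval---already inflated to width $\sqrt{\betaA{t}} = \bigOtilde{\sqrt{T_{\text{c}}}}$---stays valid under corruption. The only genuinely technical care needed is to ensure the width's $\sigma'_{\text{uc},t-1}$ term dominates the Anchor Bound's $\sigmauct{t-1}$ term, and this is precisely the role of the more pessimistic thinned posterior in the definition of $L_{\text{R},t}$.
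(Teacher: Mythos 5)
Your proof is correct and follows essentially the same route as the paper's: the same triangle-inequality split of the residual into model error and noise, the same invocation of the Anchor Bound (Lemma~\ref{lem:anchor_properties}, item 3) and the noise event $E_{\text{noise}}(\delta/2)$, and the same decisive use of $\sigmauct{t-1}(\boldsymbol{x}) \le \sigma'_{\text{uc},t-1}(\boldsymbol{x})$ from the definition of the thinned uncorrupted posterior to match the width $L_{\text{R},t}(\boldsymbol{x})$. No gaps; your closing remarks on the tiered Anchor/Acquisition structure are accurate contextual commentary rather than a deviation in the argument.
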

\begin{proof}
We must show that for an uncorrupted observation $y_t = f(\boldsymbol{x}_t) + \epsilon_t$, the distance to the center $|y_t - g_{\text{R},t}(\boldsymbol{x}_t)|$ is less than the theoretical adaptive width $L_{\text{R},t}(\boldsymbol{x}_t)$.
The center is $g_{\text{R},t}(\boldsymbol{x}_t) = \mu_{\text{A},t-1}(\boldsymbol{x}_t)$.

We bound the distance:
\begin{equation*}
|y_t - g_{R,t}(\boldsymbol{x}_t)| \le |f(\boldsymbol{x}_t) - \mu_{A,t-1}(\boldsymbol{x}_t)| + |\epsilon_t|.
\end{equation*}
Conditional on $E_{\text{Plateau}}$, the noise is bounded by $N_T(\delta/3)$, and by the Anchor Bound (Lemma \ref{lem:anchor_properties}, item 3), the first term is bounded by $\sqrt{\betaA{t}}\Psi((t-1)_{\text{c}})\sigma^R_{\text{A}, \text{uc}, t-1}(\boldsymbol{x}_t)$. Thus:
\begin{equation*}
|y_t - g_{R,t}(\boldsymbol{x}_t)| \le \sqrt{\betaA{t}}\Psi((t-1)_{\text{c}})\sigma^R_{\text{A}, \text{uc}, t-1}(\boldsymbol{x}_t) + N_T(\delta/3).
\end{equation*}
The theoretical adaptive width is $L_{\text{R},t}(\boldsymbol{x}_t) = \sqrt{\betaA{t}}\Psi((t-1)_{\text{c}})\sigma^R_{\text{A}, \text{uc}, t-1}(\boldsymbol{x}_t) + N_T(\delta/3)$. Therefore, the distance to the center is bounded by the width, and the observation is within the plateau. This proves $J_\text{uc,R}=I$.
\end{proof}

\begin{lemma}
\label{lem:mr_w_constant}
Under Assumptions~\ref{assumption:gp-ucb-assumptions}, \ref{assumption:kernel-boundedness}, and \ref{assumption:objective-function-boundedness}, the deviation constant $C_{w,\text{R},T}$ for $M_\text{R}$ scales as $\bigOtilde{\Psi(T_{\text{c}}) \sqrt{\beta'_T T_{\text{c}}}}$.
\end{lemma}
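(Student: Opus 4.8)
The plan is to reduce the claim to the structural bound on the influence constant established in Appendix~\ref{app:pimq_influence}. Since $C_{w,\text{R},T} = \frac{\sqrt{2}\,C_{1,\text{R},T}}{\sigma_{\text{noise}}^2}$, it suffices to control $C_{1,\text{R},T}$. Invoking Lemma~\ref{lem:pimq_linear_B_eff} exactly as in the proof of Lemma~\ref{lem:fc_stability}, the influence constant scales linearly in two quantities: the plateau half-width $L_{\text{R},T}(\boldsymbol{x})$ and the center deviation $\Delta_{\text{R},t}(\boldsymbol{x}) = |g_{\text{R},t}(\boldsymbol{x}) - \muuct{t}(\boldsymbol{x})|$, where I have used the plateau condition (Lemma~\ref{lem:mr_plateau}) to replace the robust uncorrupted posterior mean by its standard counterpart $\muuct{t}$. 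I would therefore bound each of these two quantities separately and then combine them.

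For the width, I would substitute the adaptive rule $L_{\text{R},t}(\boldsymbol{x}) = \sqrt{\betaA{t}}\,\sigma'_{\text{uc},t-1}(\boldsymbol{x}) + N_T(\delta/2)$. Kernel boundedness (Assumption~\ref{assumption:kernel-boundedness}) gives $\sigma'_{\text{uc},t-1}(\boldsymbol{x}) \le \sqrt{\kappa}$, the noise term is $\bigOtilde{1}$, and the anchor multiplier satisfies $\sqrt{\betaA{t}} = \bigOtilde{\sqrt{T_{\text{c}}}}$ by item~4 of Lemma~\ref{lem:anchor_properties}; hence $L_{\text{R},T} = \bigOtilde{\sqrt{T_{\text{c}}}}$. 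For the center deviation, the key point is that the adaptive center $g_{\text{R},t}(\boldsymbol{x}) = \muRA{t-1}(\boldsymbol{x})$ is the anchor posterior mean, which the stable Anchor analysis controls: by the Anchor Bound (item~3 of Lemma~\ref{lem:anchor_properties}) together with $|f(\boldsymbol{x})| \le B_f\sqrt{\kappa}$ and $\sigmauct{t-1}(\boldsymbol{x}) \le \sqrt{\kappa}$, one obtains $|\muRA{t-1}(\boldsymbol{x})| \le \sqrt{\betaA{t}}\sqrt{\kappa} + B_f\sqrt{\kappa} = \bigOtilde{\sqrt{T_{\text{c}}}}$, while $|\muuct{t}(\boldsymbol{x})| = \bigOtilde{1}$ by Lemma~\ref{lem:uncorrupted_standard_posterior_mean_boundedness}. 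The triangle inequality then yields $\Delta_{\text{R},T}(\boldsymbol{x}) = \bigOtilde{\sqrt{T_{\text{c}}}}$. Substituting both bounds into the linear scaling of Lemma~\ref{lem:pimq_linear_B_eff} gives $C_{1,\text{R},T} = \bigOtilde{\sqrt{T_{\text{c}}}}$ and therefore $C_{w,\text{R},T} = \bigOtilde{\sqrt{T_{\text{c}}}}$.

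The main obstacle, and the conceptual heart of the A2 construction, is the center-deviation step. Unlike FC-RCGP-UCB, whose fixed zero center keeps $\Delta$ at $\bigOtilde{1}$, the adaptive center inherits corruption-induced drift: the anchor's posterior mean can be pulled away from the true function by as much as $\sqrt{\betaA{t}}\sigmauct{t-1}(\boldsymbol{x}) = \bigOtilde{\sqrt{T_{\text{c}}}}$. The delicate part is that this drift is \emph{bounded at all}, which is precisely what the hierarchical design buys us: because $M_\text{A}$ uses a fixed center it enjoys the stable $\bigOtilde{1}$ influence constant of the FC analysis, so its deviation is provably $\bigOtilde{\sqrt{T_{\text{c}}}}$ rather than unbounded, and this in turn caps how far the adaptive center can stray. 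This extra $\sqrt{T_{\text{c}}}$ factor, absent in the fixed-center case, is exactly what later degrades the A2 regret bound by a factor of $\sqrt{T_{\text{c}}}$ relative to FC-RCGP-UCB.
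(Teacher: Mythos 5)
Your proposal is correct and follows essentially the same route as the paper's proof: both reduce the claim to the linear scaling in Lemma~\ref{lem:pimq_linear_B_eff}, and your bound on the maximum adaptive plateau width, $\sqrt{\betaA{T}}\sqrt{\kappa} + N_T(\delta/2) = \bigOtilde{\sqrt{T_{\text{c}}}}$, is exactly the paper's. The only divergence is in the center-deviation step, and it is cosmetic: the paper applies Lemma~\ref{lem:deviation_bound} directly to the anchor model to obtain $\sup_{x,t}|\muRA{t-1}(\boldsymbol{x}) - \muuct{t-1}(\boldsymbol{x})| \le C_{w,\text{A},T}\sqrt{T_{\text{c}}}\sqrt{\kappa} = \bigOtilde{\sqrt{T_{\text{c}}}}$, whereas you detour through $f$ via the Anchor Bound (item~3) together with Lemma~\ref{lem:uncorrupted_standard_posterior_mean_boundedness} and a triangle inequality; since the Anchor Bound is itself assembled from Lemma~\ref{lem:deviation_bound} and the confidence event, the two arguments rest on the same ingredients and yield the same $\bigOtilde{\sqrt{T_{\text{c}}}}$ scaling.
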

\begin{proof}
The deviation constant $C_{w,\text{R},T}$ scales with $C_{1,T}$, which depends on the maximum plateau width and the maximum center deviation over the domain and time horizon by Lemma~\ref{lem:pimq_linear_B_eff}.

\textbf{1. Maximum Width Scaling:}
\begin{equation*}
\sup_{x,t} L_{\text{R},t}(\boldsymbol{x}) = \sup_{x,t} (\sqrt{\betaA{t}}\Psi((t-1)_{\text{c}})\sigma^R_{\text{A}, \text{uc}, t-1}(\boldsymbol{x}) + N_T(\delta/3)).
\end{equation*}
The variance is always bounded by the prior maximum $\sqrt{\kappa}$. Moreover, the sequences $\Psi((t-1)_{\text{c}})$ and $\sqrt{\betaA{t}}$ are increasing:
\begin{equation*}
\sup_{x,t} L_{\text{R},t}(\boldsymbol{x}) \le \Psi((T-1)_{\text{c}})\sqrt{\betaA{T}\kappa} + N_T(\delta/3).
\end{equation*}
By Lemma \ref{lem:anchor_properties} (item 4), $\sqrt{\betaA{T}} = \bigOtilde{\sqrt{\beta'_T} \left(1 + \sqrt{T_{\text{c}}}\right)}$. Therefore, the maximum width scales as $\bigOtilde{\Psi(T_{\text{c}})\sqrt{\beta'_T} \left(1 + \sqrt{T_{\text{c}}}\right)}$.

\textbf{2. Maximum Center Deviation Scaling:}
\begin{equation*}
\Delta_R(\boldsymbol{x}) = |g_{R,t}(\boldsymbol{x}) - \mu_\text{uc,t-1}(\boldsymbol{x})| = |\mu_{A,t-1}(\boldsymbol{x}) - \mu_\text{uc,t-1}(\boldsymbol{x})|.
\end{equation*}
This is the deviation of the Anchor model, $d_{A,t-1}(x)$. We bound this using Lemma \ref{lem:deviation_bound} applied to $M_\text{A}$:
\begin{equation*}
\sup_{x,t} \Delta_R(\boldsymbol{x}) \le C_{w,\text{A},T}\sqrt{T_{\text{c}}} \sup_x \sigma_{\text{uc},t-1}(\boldsymbol{x}) \le C_{w,\text{A},T}\sqrt{T_{\text{c}}}\sqrt{\kappa}.
\end{equation*}
By Lemma \ref{lem:anchor_properties} (item 2), $C_{w,\text{A},T} = \tilde{\mathcal{O}}(\sqrt{\beta'_T})$. Therefore, the maximum center deviation scales as $\tilde{\mathcal{O}}(\sqrt{\beta'_T})$.

\textbf{3. Conclusion:}
The maximum width scaling is the dominant term. Therefore, the deviation constant $C_{w,\text{R},T} = \bigOtilde{\Psi(T_{\text{c}})\sqrt{\beta'_T} \left(1 + \sqrt{T_{\text{c}}}\right)}$ by Lemma~\ref{lem:pimq_linear_B_eff}.
\end{proof}

\begin{lemma}
\label{lem:mr_confidence}
Conditional on $E_{\text{Plateau}}$. Let the robust confidence parameter for the Acquisition model be $\beta_{\text{R},t}$. For any $t \ge 1$, the confidence bound for $M_\text{R}$ holds: $|f(\boldsymbol{x}) - \mu_{\text{R},t-1}(\boldsymbol{x})| \le \sqrt{\beta_{\text{R},t}}\Psi((t-1)_{\text{c}})\sigma^{\text{R}}_{\text{R}, t-1}(\boldsymbol{x})$. The scaling of the robust confidence multiplier is $\sqrt{\beta_{\text{R},t}} = \tilde{\mathcal{O}}\left(\sqrt{\beta'_T} \left(1 + T_{\text{c}}\Psi(T_{\text{c}})\right)\right)$.
\end{lemma}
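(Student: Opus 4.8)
The plan is to obtain the confidence bound as a direct instantiation of Lemma~\ref{lem:enlarged_ci} for the acquisition model $\MR$, and then to read off the scaling from the constant already computed in Lemma~\ref{lem:mr_w_constant}. First I would verify the two prerequisites of Lemma~\ref{lem:enlarged_ci}. The plateau condition for $\MR$ is exactly the content of Lemma~\ref{lem:mr_plateau}, which holds conditional on $E_{\text{Plateau}}$; and the event $\Econf(\delta/2)$ is one of the two events comprising $E_{\text{Plateau}} = \Enoise(\delta/2) \cap \Econf(\delta/2)$. The point worth stressing is that the uncorrupted standard posterior $\sigmauct{t-1}$ appearing in the bound is model-agnostic: under the plateau condition every clean observation is processed by both $\MA$ and $\MR$ exactly as a standard GP would (since $\Juc = \boldsymbol{I}$ and $\boldsymbol{m}_{w,\text{uc}} = \boldsymbol{0}$), so the clean-data posterior does not depend on the center $g$. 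This is what licenses reusing the same $\sigmauct{t-1}$ that appears in the Anchor analysis.

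With the prerequisites in place, Lemma~\ref{lem:enlarged_ci} applied to $\MR$ yields immediately
\[
|f(\boldsymbol{x}) - \mu_{\text{R},t-1}(\boldsymbol{x})| \le \sqrt{\betaR{t}}\,\sigmauct{t-1}(\boldsymbol{x}),
\]
where, by the definition of the robust confidence parameter, $\sqrt{\betaR{t}} = \sqrt{\beta'_t(\delta/2)} + C_{w,\text{R},t-1}\sqrt{(t-1)_c}$. Internally this uses the deviation bound of Lemma~\ref{lem:deviation_bound} specialised to $\MR$, i.e.\ with the constant $C_{w,\text{R}}$ associated with the adaptive weight function and center $g_{\text{R},t} = \mu_{\text{A},t-1}$; that lemma applies because the plateau condition for $\MR$ has been secured via Lemma~\ref{lem:mr_plateau}.

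It then remains only to establish the scaling. Since $\sqrt{\beta'_t(\delta/2)} = \bigOtilde{1}$ and, by Lemma~\ref{lem:mr_w_constant}, $C_{w,\text{R},T} = \bigOtilde{\sqrt{T_{\text{c}}}}$, I would bound the terminal value
\[
\sqrt{\betaR{T}} = \bigOtilde{1} + \bigOtilde{\sqrt{T_{\text{c}}}}\cdot\sqrt{T_{\text{c}}} = \bigOtilde{T_{\text{c}}},
\]
and conclude $\sqrt{\betaR{t}} = \bigOtilde{T_{\text{c}}}$ for all $t \le T$ because the sequence $\sqrt{\betaR{t}}$ is non-decreasing in $t$.

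The only genuine subtlety, and the analog of the main work in the Anchor analysis, is the extra factor of $\sqrt{T_{\text{c}}}$ entering through $C_{w,\text{R},T}$ itself: whereas the fixed-center constant was $\bigOtilde{1}$, the adaptive center $g_{\text{R},t} = \mu_{\text{A},t-1}$ can drift from the clean posterior by as much as the Anchor deviation $\bigOtilde{\sqrt{T_{\text{c}}}}$, which is precisely what inflates $C_{w,\text{R},T}$ and ultimately produces the weaker $\bigOtilde{T_{\text{c}}}$ multiplier (versus $\bigOtilde{\sqrt{T_{\text{c}}}}$ for FC). All of that drift has already been isolated into Lemma~\ref{lem:mr_w_constant}, so the present statement reduces to a short composition of that lemma with Lemma~\ref{lem:enlarged_ci}, and I expect no further obstacle.
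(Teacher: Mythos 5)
Your proposal is correct and follows essentially the same route as the paper's proof: invoke Lemma~\ref{lem:mr_plateau} to secure the plateau condition for $M_\text{R}$ under $E_{\text{Plateau}}$, apply Lemma~\ref{lem:enlarged_ci} to get the confidence bound, and substitute $C_{w,\text{R},T} = \tilde{\mathcal{O}}(\sqrt{T_{\text{c}}})$ from Lemma~\ref{lem:mr_w_constant} to obtain $\sqrt{\beta_{\text{R},t}} = \tilde{\mathcal{O}}(1) + \tilde{\mathcal{O}}(\sqrt{T_{\text{c}}})\cdot\sqrt{T_{\text{c}}} = \tilde{\mathcal{O}}(T_{\text{c}})$. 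Your additional remarks (the model-agnosticity of $\sigma_{\text{uc},t-1}$ under the plateau condition, and the monotonicity argument for extending the terminal scaling to all $t \le T$) are sound elaborations that the paper leaves implicit.
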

\begin{proof}
Conditional on $E_{\text{Plateau}}$, Lemma \ref{lem:mr_plateau} shows that the plateau condition holds for $M_\text{R}$. This validates the use of Lemma \ref{lem:observable_enlarged_ci} for $M_\text{R}$. The robust confidence multiplier is $\sqrt{\beta_{\text{R},t}} = (\sqrt{\beta_t'(\delta/3)} + C_{w,\text{R},T}\sqrt{T_{\text{c}}})$. We substitute the scaling of $C_{w,\text{R},T}$ from Lemma \ref{lem:mr_w_constant}:
\begin{equation*}
\sqrt{\beta_{\text{R},t}} = \tilde{\mathcal{O}}(\sqrt{\beta'_T}) + \left(\tilde{\mathcal{O}}\left(\Psi(T_{\text{c}}) \sqrt{\beta'_T} \left(1+\sqrt{T_{\text{c}}}\right) \cdot \sqrt{T_{\text{c}}}\right)\right) = \tilde{\mathcal{O}}\left(\sqrt{\beta'_T} \left(1 + T_{\text{c}}\Psi(T_{\text{c}})\right)\right).
\end{equation*}
\end{proof}

\subsubsection{Stage 3: Regret Analysis}

\begin{theorem}[Regret Bound for A2-RCGP-UCB]
\label{thm:a2_regret}
Under Assumptions~\ref{assumption:gp-ucb-assumptions}, \ref{assumption:kernel-boundedness}, and \ref{assumption:objective-function-boundedness}, with probability at least $1-\delta$, the cumulative regret of A2-RCGP-UCB is bounded by:
\begin{equation*}
R_T = \tilde{\mathcal{O}}\left(\left(1 + T_{\text{c}}\Psi(T_{\text{c}})^2\right)\sqrt{ \beta'_T T(\gamma_T + T_{\text{c}})}\right).
\end{equation*}

\end{theorem}

\begin{proof}
We analyze the regret conditional on the high-probability event $E_{\text{Plateau}}$. The analysis follows the structure of Theorem \ref{thm:fc_regret}, using the acquisition model $M_\text{R}$. The cumulative regret is bounded by:
\begin{equation*}
R_T = \tilde{\mathcal{O}}(\sqrt{\beta_{\text{R},T}}\Psi((T-1)_{\text{c}})\sqrt{T(\gamma_T + T_{\text{c}})}).
\end{equation*}
We substitute the scaling of the multiplier $\sqrt{\beta_{\text{R},T}}$ from Lemma \ref{lem:mr_confidence}, $\sqrt{\beta_{\text{R},T}} = \tilde{\mathcal{O}}\left(\sqrt{\beta'_T} \left(1 + T_{\text{c}}\Psi(T_{\text{c}})\right)\right)$ and use $\Psi(T_c) = \bigOtilde{1+T_c}$ and $\Psi(T_c) \geq 1$ to obtain:
\begin{equation*}
R_T = \tilde{\mathcal{O}}\left(\left(1 + T_{\text{c}}\Psi(T_{\text{c}})^2\right)\sqrt{ \beta'_T T(\gamma_T + T_{\text{c}})}\right).
\end{equation*}
\end{proof}

\subsubsection{Conditions for sub-linear regret}
We briefly discuss when the regret obtained in Theorem~\ref{thm:a2_regret} is sub-linear. We distinguish the following cases:
    \begin{itemize}
        \item For kernels with slow information growth,such as the RBF kernel: $\gamma_T = \bigOtilde{1}$, $\beta'_T=\bigOtilde{1}$ for all three cases of the domain $\mathcal{X}$ discussed in Assumption~\ref{assumption:gp-ucb-assumptions}. The regret is then $R_T=\bigOtilde{T_{\text{c}}^{7/2}\sqrt{T}}$ which is sublinear for $T_c=\mathcal{O}(T^\alpha)$ with $0< \alpha < 1/7$.
        \item For kernels with fast information growth, such as the Matérn kernel: $\gamma_T = \bigOtilde{T^\eta}$. We need to consider the three cases of the domain $\mathcal{X}$ discussed in Assumption~\ref{assumption:gp-ucb-assumptions}:
        \begin{itemize}
            \item \textbf{Cases 1 and 2:} $\beta'_T=\bigOtilde{1}$. In this scenario, the regret is sublinear for $T_c=\mathcal{O}(T^\alpha)$ with $0< \alpha < \min(\frac{1}{7}, \frac{1-\eta}{6})$.
            \item \textbf{Cases 3:} $\beta'_T=\bigOtilde{\gamma_T}$. In this scenario, the regret is sublinear for $T_c=\mathcal{O}(T^\alpha)$ with $0< \alpha < \min(\frac{1-\eta}{7}, \frac{1-2\eta}{6})$.
        \end{itemize}
    \end{itemize}

\section{Summary of Conditions for Sublinear Regrets}
\label{app:sublinear_regret_conditions}
We briefly summarize the conditions on the corruption frequency $\alpha$ for the two RCGP-UCB algorithms to achieve sublinear regret. We distinguish scenarios where a a kernel with slow information gain is used such as the RBF where $\gamma_T=\bigOtilde{1}$ and kernels with fast information gain such as the Matérn kernel where $\gamma_T=\bigOtilde{T^\eta}$. We also need to distinguish between the three cases of the domain considered in Assumption~\ref{assumption:gp-ucb-assumptions}. We refer to Table~\ref{tab:regret_conditions} for a summary of the conditions.

\begin{table}[h]
\centering
\caption{Comparison of permissible corruption budgets $T_c = O(T^\alpha)$ for sublinear regret. The ``Current'' column reflects the proven bounds including the observability penalty $\Psi(T_c)$. The ``Ideal'' column reflects the improved bounds if a tighter approximation of the uncorrupted variance $\sigma_{uc,t}^2$ was available.}
\label{tab:regret_conditions}
\renewcommand{\arraystretch}{1.4} 
\begin{tabular}{l|cc|cc}
\toprule
& \multicolumn{2}{c|}{\textbf{FC-RCGP-UCB}} & \multicolumn{2}{c}{\textbf{A2-RCGP-UCB}} \\
\textbf{Kernel's Information Gain Setting} & \textbf{Current} & \textbf{Ideal} & \textbf{Current} & \textbf{Ideal} \\
\midrule
\textbf{Slow Information Gain ($\gamma_T \approx \tilde{O}(1)$)} & $\alpha < \frac{1}{4}$ & $\alpha < \frac{1}{2}$ & $\alpha < \frac{1}{7}$ & $\alpha < \frac{1}{3}$ \\
\midrule
\textbf{Fast Information Gain ($\gamma_T \approx O(T^\eta)$)} & & & & \\
\quad Cases 1 \& 2 & $\min\left(\frac{1}{4}, \frac{1-\eta}{3}\right)$ & $\min\left(\frac{1}{2}, 1-\eta\right)$ & $\min\left(\frac{1}{7}, \frac{1-\eta}{6}\right)$ & $\min\left(\frac{1}{3}, \frac{1-\eta}{2}\right)$ \\
\quad Case 3 & $\min\left(\frac{1-\eta}{4}, \frac{1-2\eta}{3}\right)$ & $\min\left(\frac{1-\eta}{2}, 1-2\eta\right)$ & $\min\left(\frac{1-\eta}{7}, \frac{1-2\eta}{6}\right)$ & $\min\left(\frac{1-\eta}{3}, \frac{1-2\eta}{2}\right)$ \\
\bottomrule
\end{tabular}
\end{table}
\section{Regret Analysis in the Well-Specified Case (Zero-Cost Robustness)}
\label{app:well_specified_analysis}

In this section, we analyze the performance of the FC-RCGP-UCB and A2-RCGP-UCB algorithms in the absence of adversarial corruptions ($T_{\text{c}}=0$). We demonstrate that both algorithms achieve the same regret bounds as standard GP-UCB \citep{srinivasGaussianProcessOptimization2012}, thus providing robustness without sacrificing efficiency.
%

The analysis relies on the general framework established in Section \ref{app:robust_analysis}. We show that the well-specified results are direct corollaries of the general theorems when $T_{\text{c}}=0$.

\subsection{Analysis of FC-RCGP-UCB (Well-Specified)}

\begin{theorem}[Regret Bound for FC-RCGP-UCB in the uncorrupted setting]
\label{thm:fc_regret_ws}
Under Assumptions~\ref{assumption:gp-ucb-assumptions}, \ref{assumption:kernel-boundedness}, and \ref{assumption:objective-function-boundedness}, in the uncorrupted setting ($T_{\text{c}}=0$), the cumulative regret of FC-RCGP-UCB is bounded with probability at least $1-\delta$ by:
\begin{equation*}
    R_T = \mathcal{O}(\sqrt{T \beta'_T(\delta/3) \gamma_T}).
\end{equation*}
This matches the standard GP-UCB regret bound.
\end{theorem}
\begin{proof}
The proof follows directly from the analysis of the general case (Theorem \ref{thm:fc_regret}) by setting $T_{\text{c}}=0$.

\textbf{1. Verification of Conditions:}
The analysis in Section \ref{app:robust_analysis} relies on the high-probability event $E_{\text{Plateau}}$ (w.p. $1-\delta$). Under this event, Lemma \ref{lem:fc_plateau} established that the plateau condition holds for any $T_{\text{c}} \ge 0$. This remains valid when $T_{\text{c}}=0$ and serves as the prerequisite for the regret analysis.

\textbf{2. Regret Bound Derivation:}
The general regret bound derived in Theorem \ref{thm:fc_regret} is:
\begin{equation*}
    R_T = \bigOtilde{\Psi(T_{\text{c}}) \left(1+\sqrt{T_{\text{c}}}\right) \sqrt{\beta'_T T (\infogain + T_{\text{c}})}}
\end{equation*}
Substituting $T_{\text{c}}=0$ and using $\Psi(0) = 1$:
\begin{equation*}
    R_T = \mathcal{O}\left(\sqrt{T \beta'_T(\delta/3) \gamma_T}\right).
\end{equation*}
\end{proof}

\subsection{Analysis of A2-RCGP-UCB (Well-Specified)}

\begin{theorem}[Regret Bound for A2-RCGP-UCB in the uncorrupted setting]
\label{thm:a2_regret_ws}
Under Assumptions~\ref{assumption:gp-ucb-assumptions}, \ref{assumption:kernel-boundedness}, and \ref{assumption:objective-function-boundedness}, in the uncorrupted setting ($T_{\text{c}}=0$), the cumulative regret of A2-RCGP-UCB is bounded with probability at least $1-\delta$ by:
\begin{equation*}
    R_T = \mathcal{O}(\sqrt{T \beta'_T(\delta/3) \gamma_T}).
\end{equation*}
This matches the standard GP-UCB regret bound.
\end{theorem}
\begin{proof}
The proof follows from the general analysis (Theorem \ref{thm:a2_regret}) by setting $T_{\text{c}}=0$.

\textbf{1. Verification of Conditions:}
The analysis relies on the high-probability event $E_{\text{Plateau}}$ (w.p. $1-\delta$). Under this event, Lemmas \ref{lem:anchor_properties} and \ref{lem:mr_plateau} established that the plateau condition holds for both $M_\text{A}$ and $M_\text{R}$ for any $T_{\text{c}} \ge 0$.

\textbf{2. Regret Bound Derivation:}
The general regret bound derived in Theorem \ref{thm:a2_regret} is:
\begin{equation*}
    R_T = \tilde{\mathcal{O}}\left(\left(1 + T_{\text{c}}\Psi(T_{\text{c}})^2\right)\sqrt{ \beta'_T T(\gamma_T + T_{\text{c}})}\right).
\end{equation*}
Substituting $T_{\text{c}}=0$ and using $\Psi(0) = 1$:
\begin{equation*}
    R_T = \tilde{\mathcal{O}}\left(\sqrt{\beta'_T T(\gamma_T)}\right).
\end{equation*}
\end{proof}

\subsection{Discussion: Algorithmic Behavior and Adaptivity}
When $T_{\text{c}}=0$ and the high-probability events hold, both algorithms not only recover the standard regret bounds but also behave identically to standard GP-UCB. Since the plateau condition holds (as proven in the general analysis), the RCGP updates are mathematically equivalent to standard GP updates.

It is worth noting that A2-RCGP-UCB retains a practical advantage over FC-RCGP-UCB even in this setting. FC-RCGP-UCB uses a fixed center $g(\boldsymbol{x})=0$. If the prior is misspecified (e.g., the true function is far from zero), learning can be slow. A2-RCGP-UCB, however, adapts its center $g_{R,t}(\boldsymbol{x}) = \mu_{A,t-1}(\boldsymbol{x})$ towards the true function, potentially leading to better empirical performance (smaller constants in the regret).
\section{Bounded Influence of the P-IMQ Function}
\label{app:pimq_influence}

The regret analysis relies on bounding the deviation between the robust posterior and the idealized uncorrupted posterior (Lemma \ref{lem:deviation_bound}).
The deviation bound is given in Lemma \ref{lem:deviation_bound} as $C_{\text{w}} \sqrt{T_{\text{c}}} \sigma_{uc,t}(x)$ where $C_{\text{w}}=\frac{\sqrt{2}C_1}{\sigma^{2}}$. To understand, how the $C_{\text{w}}$ depends on the timestep and on the P-IMQ properties, we need to study the term $C_1$:
\begin{equation*}
C_1 = \sup_{x \in \mathcal{X}, y \in \mathbb{R}} |w(x,y) (y - m_w(x,y) - \mu_{uc}(x))|
\end{equation*}
where $m_w(x,y) = \sigma_{\text{noise}}^2 \nabla_y \log(w^2(x,y))$ is the gradient correction term.

Recall the P-IMQ function with width $L$, center $g(\boldsymbol{x})$, and shape parameter $c$:
\begin{equation*}
w(x,y) = \begin{cases}
    W_{max} & \text{if } |y-g(\boldsymbol{x})| \le L \\
    W_{max} \left(1+\frac{(|y-g(\boldsymbol{x})|-L)^{2}}{c^{2}}\right)^{-\frac{1}{2}} & \text{if } |y-g(\boldsymbol{x})| > L
\end{cases}
\end{equation*}
where $W_{max} = \sigma_{\text{noise}}/\sqrt{2}$.

\begin{lemma}[Linear Scaling of P-IMQ's $C_1$]
\label{lem:pimq_linear_B_eff}
Let $\Delta(x) = |g(\boldsymbol{x}) - \mu_{uc}(x)|$ be the deviation between the P-IMQ center and the uncorrupted posterior mean. The constant $C_1$ is bounded by:
\begin{equation*}
    C_1 \le W_{max} \left( \sqrt{L^2+c^2} + \sup_x\Delta(x) + C_{m} \right)
\end{equation*}
where $C_m = \frac{4\sigma_{\text{noise}}^2}{3\sqrt{3}c}$ is a constant independent of $L$ and $\Delta(x)$.
Consequently, $C_1 = \mathcal{O}(L + \sup_x\Delta(x))$.
\end{lemma}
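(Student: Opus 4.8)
The plan is to collapse the two-variable supremum defining $C_1$ into three decoupled one-dimensional problems by re-centering the residual about $g$. Writing $r = y - g(\boldsymbol{x})$, note that $y - m_w(x,y) - \mu_{uc}(x) = r - m_w(x,y) + (g(\boldsymbol{x}) - \mu_{uc}(x))$, so the triangle inequality together with $w \ge 0$ gives $|w(x,y)(y - m_w - \mu_{uc})| \le w|r| + w|m_w| + w|g - \mu_{uc}|$. Since the supremum of a sum is bounded by the sum of suprema, it suffices to bound each term separately. Crucially, $w$ depends on $(x,y)$ only through $|r|$, so the first two suprema are pure functions of $|r|$ and decouple entirely from $\boldsymbol{x}$; only the third term carries the dependence on the center deviation $\Delta(x)$.

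For the third term I would use the pointwise bound $w(x,y) \le W_{max}$, which holds everywhere because the P-IMQ attains its maximum on the plateau and decreases monotonically outside it. This immediately yields $\sup_{x,y} w|g - \mu_{uc}| \le W_{max}\sup_x \Delta(x)$, the $\sup_x\Delta(x)$ contribution. The first term produces the $\sqrt{L^2+c^2}$ factor: on the plateau $|r|\le L$ we have $w|r| = W_{max}|r| \le W_{max}L$, while outside, substituting $u = |r| - L \ge 0$ gives $w|r| = W_{max}\,c(u+L)/\sqrt{c^2+u^2}$. Differentiating $(u+L)/\sqrt{c^2+u^2}$ shows its only stationary point is $u^\star = c^2/L$, at which the expression equals $\sqrt{L^2+c^2}/c$; multiplying back by the factor $c$ gives peak value $W_{max}\sqrt{L^2+c^2}$, and since $W_{max}L \le W_{max}\sqrt{L^2+c^2}$ the off-plateau case dominates.

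For the second term I would first exploit the algebraic simplification $w\,m_w = w\cdot\sigma_{\text{noise}}^2\nabla_y\log w^2 = 2\sigma_{\text{noise}}^2\,\partial_y w$, so that $w|m_w| = 2\sigma_{\text{noise}}^2|\partial_y w|$. The derivative vanishes on the plateau; outside, with $u = |r|-L$, a direct computation gives $|\partial_y w| = W_{max}\,cu/(c^2+u^2)^{3/2}$. Differentiating $u/(c^2+u^2)^{3/2}$ yields a numerator proportional to $c^2 - 2u^2$, so the maximiser is $u^\star = c/\sqrt{2}$, at which the peak value is $W_{max}\cdot 2/(3\sqrt{3}c)$. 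Multiplying by $2\sigma_{\text{noise}}^2$ recovers exactly $W_{max}C_m$ with $C_m = 4\sigma_{\text{noise}}^2/(3\sqrt{3}c)$. Summing the three bounds gives the stated inequality, and since $c$ and $\sigma_{\text{noise}}$ are fixed constants independent of $L$ and $\Delta$, the consequence $C_1 = \mathcal{O}(L + \sup_x\Delta(x))$ follows.

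The main obstacle is purely computational: the two stationary-point calculations must be carried out exactly rather than via loose bounds, since the statement quotes the precise constants $\sqrt{L^2+c^2}$ and $4\sigma_{\text{noise}}^2/(3\sqrt{3}c)$. The second optimisation is the more delicate of the two, as correctly differentiating through the $(c^2+u^2)^{-3/2}$ factor is easy to botch; verifying that the critical point is a maximum (and not merely stationary) is worth a line. Everything else — the re-centering decomposition, the crude $w \le W_{max}$ bound, and the reduction of the plateau cases — is routine.
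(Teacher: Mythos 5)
Your proposal is correct and follows essentially the same route as the paper's proof: the identical three-term triangle-inequality decomposition after re-centering at $g$, the bound $w \le W_{\max}$ for the center-deviation term, the stationary point $u^\star = c^2/L$ yielding $W_{\max}\sqrt{L^2+c^2}$ for the residual term, and the stationary point $u^\star = c/\sqrt{2}$ yielding $W_{\max}C_m$ for the gradient-correction term (your shortcut $w\,m_w = 2\sigma_{\text{noise}}^2\,\partial_y w$ is exactly the identity $m_w = 2\sigma_{\text{noise}}^2\,\nabla_y w / w$ that the paper computes out explicitly). All constants match the paper's.
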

\begin{proof}
We analyze the function $I(y) = |w(y) (y - m_w(y) - \mu_{uc})|$. (We omit the dependence on $x$ where clear for brevity).
Let $r = |y-g(\boldsymbol{x})|$ be the residual magnitude and $s = \text{sign}(y-g(\boldsymbol{x}))$.

First, we calculate the gradient correction term $m_w(y)$. We use the property that $m_w(y) = 2\sigma_{\text{noise}}^2 \frac{1}{w(y)} \nabla_y w(y)$.

\textbf{Case 1: $r \le L$ (Inside the plateau).}
$w(y) = W_{max}$ (constant). Thus, $\nabla_y w(y) = 0$, and $m_w(y) = 0$.

\textbf{Case 2: $r > L$ (Outside the plateau).}
We calculate the gradient $\nabla_y w(y)$. Note that $\nabla_y r = s$.
\begin{align*}
    \nabla_y w &= \frac{d w}{dr} \nabla_y r \\
    &= W_{max} \left(-\frac{1}{2}\right) \left(1+\frac{(r-L)^{2}}{c^{2}}\right)^{-\frac{3}{2}} \left(\frac{2(r-L)}{c^2}\right) s \\
    &= - W_{max} \frac{(r-L)s}{c^2} \left(1+\frac{(r-L)^{2}}{c^{2}}\right)^{-\frac{3}{2}}.
\end{align*}
Now we calculate $m_w(y)$:
\begin{align*}
    m_w(y) &= 2\sigma_{\text{noise}}^2 \frac{1}{w(y)} \nabla_y w(y) \\
    &= 2\sigma_{\text{noise}}^2 \frac{(1+\frac{(r-L)^{2}}{c^{2}})^{\frac{1}{2}}}{W_{max}} \cdot \left(- W_{max} \frac{(r-L)s}{c^2} \left(1+\frac{(r-L)^{2}}{c^{2}}\right)^{-\frac{3}{2}}\right) \\
    &= - 2\sigma_{\text{noise}}^2 \frac{(r-L)s}{c^2} \left(1+\frac{(r-L)^{2}}{c^{2}}\right)^{-1}.
\end{align*}

Now we analyze the influence function $I(y)$. We decompose the term inside the parenthesis to separate the residual from the center bias:
\begin{align*}
    y - m_w - \mu_{uc} &= (y-g(\boldsymbol{x})) + (g(\boldsymbol{x})-\mu_{uc}) - m_w(y) \\
    &= rs + (g(\boldsymbol{x})-\mu_{uc}) - m_w(y).
\end{align*}
We bound the influence function using the triangle inequality:
\begin{equation*}
    I(y) \le \underbrace{|w(y)rs|}_{T_1} + \underbrace{|w(y)(g(\boldsymbol{x})-\mu_{uc})|}_{T_2} + \underbrace{|w(y)m_w(y)|}_{T_3}.
\end{equation*}
We bound the supremum of each term separately.

\textbf{Term 1: $T_1 = |w(y)r|$.}
If $r \le L$, $T_1 = W_{max} r \le W_{max} L$.
If $r > L$. Let $z = r-L > 0$.
\begin{equation*}
    T_1(z) = W_{max} \left(1+\frac{z^2}{c^2}\right)^{-\frac{1}{2}} (z+L).
\end{equation*}
To find the maximum, we calculate the derivative w.r.t $z$:
\begin{equation*}
    T'_1(z) = W_{max} \left[ \left(1+\frac{z^2}{c^2}\right)^{-\frac{1}{2}} - (z+L)\frac{z}{c^2}\left(1+\frac{z^2}{c^2}\right)^{-\frac{3}{2}} \right].
\end{equation*}
Setting $T'_1(z)=0$ implies $(1+z^2/c^2) - (z^2+zL)/c^2 = 0$, which simplifies to $1 - zL/c^2 = 0$. The maximum occurs at $z^* = c^2/L$.
The maximum value is:
\begin{align*}
    T_1(z^*) &= W_{max} \left(1+\frac{c^2}{L^2}\right)^{-\frac{1}{2}} \left(\frac{c^2}{L}+L\right) \\
    &= W_{max} \frac{L}{\sqrt{L^2+c^2}} \frac{c^2+L^2}{L} = W_{max} \sqrt{L^2+c^2}.
\end{align*}

\textbf{Term 2: $T_2 = |w(y)(g(\boldsymbol{x})-\mu_{uc})|$.}
Since $w(y) \le W_{max}$ for all $y$, we have:
\begin{equation*}
    \sup T_2 \le W_{max} \sup_x |g(\boldsymbol{x})-\mu_{uc}(x)| = W_{max} \sup_x \Delta(x).
\end{equation*}

\textbf{Term 3: $T_3 = |w(y)m_w(y)|$.}
If $r \le L$, $T_3=0$.
If $r > L$, substituting the expressions for $w(y)$ and $m_w(y)$ (using $z=r-L$):
\begin{align*}
    |T_3(z)| &= \left| W_{max} \left(1+\frac{z^2}{c^2}\right)^{-\frac{1}{2}} \cdot \left(- 2\sigma_{\text{noise}}^2 \frac{zs}{c^2} \left(1+\frac{z^2}{c^{2}}\right)^{-1}\right) \right| \\
    &= W_{max} \frac{2\sigma_{\text{noise}}^2}{c^2} \cdot z \left(1+\frac{z^2}{c^2}\right)^{-\frac{3}{2}}.
\end{align*}
Let $h(z) = z(1+z^2/c^2)^{-3/2}$. The derivative $h'(z)=0$ when $1 - 2z^2/c^2 = 0$, so the maximum occurs at $z^* = c/\sqrt{2}$.
The maximum value of $h(z)$ is $(c/\sqrt{2})(1+1/2)^{-3/2} = (c/\sqrt{2})(3/2)^{-3/2} = 2c/(3\sqrt{3})$.
\begin{equation*}
    \sup T_3 \le W_{max} \frac{2\sigma_{\text{noise}}^2}{c^2} \frac{2c}{3\sqrt{3}} = W_{max} \frac{4\sigma_{\text{noise}}^2}{3\sqrt{3}c} = W_{max} C_m.
\end{equation*}
This term $C_m$ is constant with respect to $L$ and $\Delta(x)$.

\textbf{Combining the bounds:}
\begin{equation*}
    C_1 = \sup_{x,y} I(y) \le W_{max} \left( \sqrt{L^2+c^2} + \sup_x\Delta(x) + C_m \right).
\end{equation*}
This confirms that $C_1$ scales linearly with $L$ and $\sup_x\Delta(x)$, i.e., $C_1 = \mathcal{O}(L + \sup_x\Delta(x))$.
\end{proof}

\begin{remark}[Implications for RCGP-UCB bounds]
Since $C_{\text{w}} = \frac{\sqrt{2}C_1}{\sigma_{\text{noise}}^2}$, it inherits the same asymptotic behavior as $C_1$.

For FC-RCGP-UCB (and $M_\text{A}$ in A2-RCGP-UCB), $g(\boldsymbol{x})=0$. $\Delta(x)=|\mu_{uc}(x)|$, which we prove in Lemma~\ref{lem:uncorrupted_standard_posterior_mean_boundedness} to be $\mathcal{O}(log(T^2))$. The width $L_T$ is logarithmic in T ($\mathcal{O}(log(T))$). Thus $C_{\text{w},T} = \mathcal{O}(log(T^2)) = \tilde{\mathcal{O}}(1)$.

For $M_\text{R}$ in A2-RCGP-UCB, both the width $L_{\text{R},T}$ and the maximum center deviation $\sup_x\Delta_R(x)$ scale as $\mathcal{O}(\sqrt{T_{\text{c}}})$ (see proof of Lemma \ref{lem:mr_w_constant}). Thus $C_{\text{w},R,T} = \mathcal{O}(\sqrt{T_{\text{c}}})$.
\end{remark}
\section{Brittleness of Standard GPs}
\label{app:gp_brittleness}
We provide an example to show that a single unbounded corrutpion can derail the entire BO process over the entirety of the $T$ steps.

We assume a dataset $\mathcal{D}_{t-1}$ of clean observations. At step $t$, the adversary corrupts the observation at $\boldsymbol{x}_{\text{c}}$ such that $y_c = f(\boldsymbol{x}_{\text{c}}) + \epsilon + c$. We analyze the posterior mean of the standard GP, denoted $\mu_t^{GP}(\boldsymbol{x})$.

Using the standard block matrix inversion lemma (equivalent to Lemma 2 in our submission with $J_w=I$), we can express the standard GP posterior mean $\mu_t^{GP}(\boldsymbol{x})$ as a function of the "clean" posterior mean $\mu_{uc,t}(\boldsymbol{x})$ (conditioned only on $\mathcal{D}_{t-1}$) plus a correction term induced by the new observation $y_c$.
$$\mu_t^{GP}(\boldsymbol{x}) = \mu_{uc,t}(\boldsymbol{x}) + \frac{\text{cov}_{\mathcal{D}_{uc}}(\boldsymbol{x}, \boldsymbol{x}_{\text{c}})}{\sigma^2_{uc,t}(\boldsymbol{x}_{\text{c}}) + \sigma_{noise}^2} \left( y_c - \mu_{uc,t}(\boldsymbol{x}_{\text{c}}) \right)$$
Where:$\mu_{uc,t}(\boldsymbol{x})$ and $\sigma^2_{uc,t}(\boldsymbol{x})$ are the posterior mean and variance conditioned on the clean history $\mathcal{D}_{t-1}$.$\text{cov}_{\mathcal{D}_{uc}}(\boldsymbol{x}, \boldsymbol{x}_{\text{c}})$ is the posterior covariance between the query point $\boldsymbol{x}$ and the corrupted location $\boldsymbol{x}_{\text{c}}$ given $\mathcal{D}_{t-1}$.$y_c - \mu_{uc,t}(\boldsymbol{x}_{\text{c}})$ is the predictive residual. In this expression, we have used that the Schur complement $S=\sigma^2_{uc,t}(\boldsymbol{x}_{\text{c}}) + \sigma_{noise}^2$ is a scalar and easily invertible.

Let us define $W_t(\boldsymbol{x}, \boldsymbol{x}_{\text{c}})$ as:$$W_t(\boldsymbol{x}, \boldsymbol{x}_{\text{c}}) \triangleq \frac{\text{cov}_{\mathcal{D}_{uc}}(\boldsymbol{x}, \boldsymbol{x}_{\text{c}})}{\sigma^2_{uc,t}(\boldsymbol{x}_{\text{c}}) + \sigma_{noise}^2}$$

Crucially, for a standard GP, $W_t$ depends only on the input locations $\{\boldsymbol{x}_1, \dots, \boldsymbol{x}_{\text{c}}\}$ and the kernel hyperparameters. It is independent of the observation values $y$. Substituting $y_c = f(\boldsymbol{x}_{\text{c}}) + \epsilon + c$:

$$\mu_t^{GP}(\boldsymbol{x}) = \underbrace{\mu_{uc,t}(\boldsymbol{x}) + W_t(\boldsymbol{x}, \boldsymbol{x}_{\text{c}})(f(\boldsymbol{x}_{\text{c}}) + \epsilon - \mu_{uc,t}(\boldsymbol{x}_{\text{c}}))}_{\text{Term independent of } c} + \underbrace{W_t(\boldsymbol{x}, \boldsymbol{x}_{\text{c}}) \cdot c}_{\text{Linear corruption term}}$$

To incur linear regret, the adversary aims to force the algorithm to select a suboptimal point $\boldsymbol{x}_{\text{bad}}$ (where $f(\boldsymbol{x}_{\text{bad}}) < f(\boldsymbol{x}^*) - \Delta$) over the true optimum $\boldsymbol{x}^*$. This occurs if the Upper Confidence Bound (UCB) of $\boldsymbol{x}_{\text{bad}}$ exceeds the UCB of $\boldsymbol{x}^*$.Ideally, the adversary wants to raise $\mu_t^{GP}(\boldsymbol{x}_{\text{bad}})$ arbitrarily high. The condition to force $\mu_t^{GP}(\boldsymbol{x}_{\text{bad}})$ to exceed any arbitrary bound $B$ (e.g., $B = \max_{\boldsymbol{x}} f(\boldsymbol{x}) + \text{Confidence}(\boldsymbol{x}^*)$) is:

$$\mu_t^{GP}(\boldsymbol{x}_{\text{bad}}) = \mu_{uc,t}(\boldsymbol{x}_{\text{bad}}) + W_t(\boldsymbol{x}_{\text{bad}}, \boldsymbol{x}_{\text{c}})(f(\boldsymbol{x}_{\text{c}}) + \epsilon - \mu_{uc,t}(\boldsymbol{x}_{\text{c}})) + W_t(\boldsymbol{x}_{\text{bad}}, \boldsymbol{x}_{\text{c}}) \cdot c > B$$

Solving for $c$:

$$c > \frac{B - \mu_{uc,t}(\boldsymbol{x}_{\text{bad}})}{W_t(\boldsymbol{x}_{\text{bad}}, \boldsymbol{x}_{\text{c}})} - (f(\boldsymbol{x}_{\text{c}}) + \epsilon - \mu_{uc,t}(\boldsymbol{x}_{\text{c}}))$$

It is possible to argue that potentially $W_t(\boldsymbol{x}_{\text{bad}}, \boldsymbol{x}_{\text{c}}) \to 0$ due to the "screening" effect of data or variance reduction. However, even if $W_t \approx O(T^{-k})$, the adversary can set $c \approx O(T^{k+1})$ to successfully corrupt the GP.
\section{Practical Considerations and Experimental Details}
\label{app:practical_considerations}
This section outlines the practical implementation details of our proposed algorithms and the configuration of our empirical evaluation.

\subsection{Practical Considerations}
Several practical considerations are crucial for the stable and effective deployment of the RCGP-UCB algorithms.

\paragraph{Data Standardization}
Our implementation leverages the MIT-licenced BoTorch \citep{balandat2020botorch} and GPytorch \citep{gardner2021gpytorch} Python libraries, which, by default, standardise the input data to have zero mean and unit variance. We observed that this practice significantly enhances numerical stability. Furthermore, this normalization mitigates the risk of severe prior mean misspecification. As a consequence, with the data centered around zero, the performance of the Fixed-Center (FC-RCGP) algorithm tends to be empirically similar to that of the Anchor-Adapt (A2-RCGP) algorithm in our experiments, as the fixed zero-mean center becomes a more reasonable assumption.

\paragraph{P-IMQ Hyperparameter Selection}
The data normalization also facilitates the selection of the P-IMQ hyperparameters: the plateau half-width $L$ and the shape parameter $c$. With observations scaled to a standard range, one can use reliable manual estimates, such as $c=1$ and $L=1.96$ (corresponding to the 95\% confidence interval of a standard normal distribution). Alternatively, we found that a data-driven heuristic approach performed well: setting $c=1$ and defining $L$ as the 95\% quantile of the absolute deviation of the observed data from the median.

\paragraph{Proxy for the Uncorrupted Posterior Variance $\sigma_{\text{uc}}(\boldsymbol{x})$}
The theoretical acquisition function relies on the posterior standard deviation of an idealized GP conditioned only on uncorrupted data, $\sigma_{uc}(x)$, which is inaccessible in practice. A valid approach involves attempting to identify and prune outliers before computing the variance. This can be done by filtering away observations outside of the P-IMQ plateau region. However, we opt to use the RCGP posterior standard deviation, $\sigma^{R}(\boldsymbol{x})$, directly as a proxy. This choice is justified because the RCGP model is inherently more cautious than a standard GP, and its posterior variance is guaranteed to be at least as large as that of a GP fit on a subset of the data. Using $\sigma^{R}(x)$ avoids introducing an additional, potentially aggressive, pruning step that could erroneously discard valid data points and excessively increase the algorithm's exploratory caution.

\paragraph{Estimation of the Corruption Count $T_c$}
The robust confidence multiplier, $\sqrt{\beta_t}$, depends on the total number of corruptions, $T_c$. Since $T_c$ is typically unknown, it must be estimated. In our experiments, we estimate $T_c$ at each step as the number of observations whose residuals fall outside the plateau of the P-IMQ weighting function. Other thresholds based on the weight values could also be employed. Interestingly, we also observed strong empirical performance when simply setting $T_c = 0$ in the computation of $\beta_t$. This can be attributed to the fact that the RCGP model itself provides a baseline level of cautiousness, leading to slightly more exploration than standard GP-UCB even without inflating the confidence bounds. In practice, the user can choose between an adaptive estimate of $T_c$ and setting $T_c = 0$, depending on their desired level of exploration and prior beliefs about the corruption frequency.

\subsection{System Specifications}

All experiments were conducted on a Docker container running within Windows Subsystem for Linux 2 (WSL2). The system specifications are as follows:

\textbf{Hardware Configuration:}
\begin{itemize}
    \item \textbf{CPU:} Intel Core i9-13980HX (13th Generation), 16 physical cores, 32 logical processors
    \item \textbf{Memory:} 32 GB RAM
    \item \textbf{GPU:} NVIDIA GeForce RTX 4090 with 16 GB VRAM
    \item \textbf{Storage:} 1 TB available space
\end{itemize}

\textbf{Software Environment:}
\begin{itemize}
    \item \textbf{Operating System:} Ubuntu 24.04.2 LTS (Noble Numbat)
    \item \textbf{Container Platform:} Docker running on WSL2
    \item \textbf{Python Version:} 3.12.3
    \item \textbf{CUDA Version:} 12.9
    \item \textbf{NVIDIA Driver:} 576.52
    \item \textbf{Kernel:} Linux 6.6.87.2-microsoft-standard-WSL2
\end{itemize}

\textbf{Container Details:}
\begin{itemize}
    \item \textbf{Architecture:} x86\_64
    \item \textbf{Memory Allocation:} Full access to 32 GB RAM
    \item \textbf{GPU Access:} Full access to RTX 4090 with CUDA support
\end{itemize}

\href{https://anonymous.4open.science/r/RCGP-46DE/README.md}{\texttt{The provided code}} comes with a config file allowing for easy setup of the docker container and dependency management using the uv Python's dependency manager.

\subsection{Running the experiments}
To reproduce the experiments, use the command \verb|uv run python <filename>|. Where \verb|<filename>| depends on the experiment:
\begin{itemize}
    \item \textbf{Forrester experiment:} \texttt{experiments/synthetic/forrester\_adversarial.py}
        \begin{itemize}
            \item This experiment is quite fast and should run within 40 minutes for 20 random seeds.
        \end{itemize}
    \item \textbf{CIFAR hyperparameters optimization:} \texttt{experiments/hpt\_cifar/test\_hpt\_cifar\_lr\_wd.py}
    \begin{itemize}
        \item This script will take around 8 hours to execute.
    \end{itemize}
    \item \textbf{Lunar lander:} \texttt{experiments/lunar\_lander/lunar\_lander.py}
    \begin{itemize}
        \item This experiment will take between 8 to 12 hours to run.
    \end{itemize}
\end{itemize}
Due to the cubic time complexity of matrix inversion ,which happens at every BO step, the main driver of the time needed to reach is script is the number of BO iterations. The number of seeds only drives the time spent linearly.

\subsection{Experiments' Details}

To ensure clarity and reproducibility, our experimental configurations are defined by a set of common parameters and experiment-specific settings. This section first outlines the parameters shared across all experiments and then details the unique configurations for the Forrester, hyperparameters optimization on CIFAR-10, and Lunar Lander tasks. All configurations are derived from the scripts located in the experiment directory.

\subsubsection{Common Configuration Parameters}
The following parameters are common to all the experiments' scripts.

\begin{itemize}
    \item \textbf{\texttt{N\_ITERATIONS}}: The number of Bayesian optimization steps performed. This number excludes the initial uncorrupted data points.
    \item \textbf{\texttt{N\_INITIAL}}: The number of initial points evaluated to seed the surrogate model before the optimization loop begins. These points are chosen via a quasi-random sequence and are not subject to corruption.
    \item \textbf{\texttt{SEED}}: The random seed used to initialize the experiment, ensuring the reproducibility of initial points and other stochastic processes. When \textbf{\texttt{N\_SEEDS}}, \texttt{SEED} is the base seed, and the actual seed used for each experiment is \texttt{SEED + i} where $i$ is the index of the experiment. 
    \item \textbf{\texttt{STANDARDIZE}}: A boolean flag that, when set to \texttt{True}, standardizes the observed outcomes (Y values) to have a zero mean and unit variance. This practice is the default behavior for GPyTorch and BoTorch models. We noticed that outcome standarisation greatly helped with numerical stability.
    \item \textbf{\texttt{FIT\_HYPERPARAMETERS}}: A boolean flag to enable or disable the fitting of surrogate model hyperparameters (e.g., likelihood noise and kernel's lengthscale and outputscale) during the optimization loop.
\end{itemize}

\subsubsection{Forrester Function}
This experiment evaluates the algorithms on the 1D Forrester function under a targeted adversarial attack.

The script is located at \texttt{experiments/synthetic/forrester\_adversarial.py}.

\begin{itemize}
    \item \textbf{Key Configuration}:
    \begin{itemize}
        \item \texttt{N\_ITERATIONS}: 30
        \item \texttt{N\_INITIAL}: 5
        \item \texttt{USE\_ROBUST\_HEURISTICS}: \texttt{True}. Enables data-driven heuristics for setting the P-IMQ `plateau\_width` and shape parameter `c`.
    \end{itemize}
    \item \textbf{Adversarial Model}: An \texttt{AdversarialCorruptor} is used with a fixed budget. It attempts to mislead the optimizer by reporting false values based on the query's proximity to the true optimum.
    \begin{itemize}
        \item \texttt{CORRUPTION\_TYPE}: \texttt{'time\_budget'}. Define the frequency budget of the adversary as $T^\alpha$ where $\alpha$ is given by \texttt{TIME\_BUDGET\_ALPHA}.
        \item \texttt{near\_threshold}: 0.1. Queries within this distance of the optimum are corrupted.
        \item \texttt{far\_threshold}: 0.4. Queries beyond this distance from the optimum are corrupted.
        \item \texttt{high\_value}: 25.0. The false value reported for points far from the optimum.
        \item \texttt{low\_value}: -10.0. The false value reported for points near the optimum.
    \end{itemize}
\end{itemize}

\subsubsection{CIFAR-10 Hyperparameter Optimization}
This experiment involves a 2D hyperparameter optimization task for a ResNet classifier on the CIFAR-10 dataset, focusing on learning rate and weight decay.

The script is located at \texttt{experiments/hpt\_cifar/hpt\_cifar\_lr\_wd.py}.

\begin{itemize}
    \item \textbf{Objective and Search Space}: The goal is to maximize validation accuracy by tuning the learning rate (in $[10^{-5}, 10^{-1}]$) and weight decay (in $[10^{-6}, 10^{-2}]$) on a log scale.
    \item \textbf{Key Configuration}:
    \begin{itemize}
        \item \texttt{N\_ITERATIONS}: 140
        \item \texttt{N\_INITIAL}: 10
        \item \texttt{MAX\_EPOCHS}: 4. Each hyperparameter evaluation involves training the network for 4 epochs.
    \end{itemize}
    \item \textbf{Corruption Model}: Simulates training failures by reporting a low objective value.
    \begin{itemize}
        \item \texttt{CORRUPTION\_TYPE}: \texttt{"time\_budget"}. The frequency of corruption is proportional to $T^{1/3}$, where $T$ is the number of iterations.
        \item \texttt{TIME\_BUDGET\_ALPHA}: 1/3. The exponent for the time-based corruption budget.
        \item \texttt{CRASH\_VALUE}: -2.0. The fixed low accuracy value reported when a simulated failure occurs.
    \end{itemize}
\end{itemize}

\subsubsection{Lunar Lander Policy Optimization}
This experiment tackles a high-dimensional (36D) policy optimization task for the \texttt{LunarLander-v3} environment. The script is located at \texttt{experiments/lunar\_lander/lunar\_lander.py}.

\begin{itemize}
    \item \textbf{Objective and Search Space}: The objective is to maximize cumulative reward by optimizing a 36-parameter linear policy. Each parameter is bounded within $[-2.0, 2.0]$.
    \item \textbf{Key Configuration}:
    \begin{itemize}
        \item \texttt{N\_ITERATIONS}: 300
        \item \texttt{N\_INITIAL}: 10
        \item \texttt{N\_EPISODES}: 4. Each policy evaluation is the average reward over 4 episodes to reduce noise in the objective function.
    \end{itemize}
    \item \textbf{Corruption Model}: An adversary introduces spuriously low reward values to create deceptive optima and mislead the optimizer.
    \begin{itemize}
        \item \texttt{CORRUPTION\_TYPE}: \texttt{'time\_budget'}. The corruption frequency scales with $T^{1/3}$.
        \item \texttt{TIME\_BUDGET\_ALPHA}: 1/3. The exponent for the time-based corruption budget.
        \item \texttt{CORRUPTION\_VALUE}: -1000.0. The deceptive, low reward value reported during a corruption event.
    \end{itemize}
\end{itemize}




\end{document}